\pgfplotsset{compat=1.17}
\newtheorem{definition}{Definition}
\newtheorem{theorem}{Theorem}
\newtheorem{corollary}{Corollary}
\newtheorem{remark}{Remark}
\title{Causal Consistency Regularization: Training Verifiably Sensitive Reasoning in Large Language Models}
\author{
\textbf{Ibne Farabi Shihab}\thanks{Equal contribution.}\thanks{Corresponding author: \texttt{ishihab@iastate.edu}.}\textsuperscript{1}
\and
\textbf{Sanjeda Akter}\footnotemark[1]\textsuperscript{1}
\and
\textbf{Anuj Sharma}\textsuperscript{2}
\\[2pt]
\textsuperscript{1}Department of Computer Science, Iowa State University \\
\textsuperscript{2}Department of Civil, Construction \& Environmental Engineering, Iowa State University \\
\texttt{ishihab@iastate.edu}
}
\begin{document}
  \maketitle
\begin{abstract}
Large language models often generate correct answers while relying on flawed reasoning traces—a consequence of training objectives that reward only final-answer correctness. This raises concerns about trustworthiness in high-stakes domains. We introduce \textbf{Counterfactual Sensitivity Regularization (CSR)}, a training paradigm that improves reasoning faithfulness through causal consistency constraints. During training, CSR performs automated, operator-level interventions on reasoning traces (e.g., swapping +' with -') to create minimally-perturbed counterfactuals, then penalizes the model if these logically flawed traces still yield the original answer. Our efficient implementation adds only $\sim$9\% training overhead through warm-start curriculum and token-subset optimization. We evaluate faithfulness using \textbf{Counterfactual Outcome Sensitivity (COS)}, which quantifies answer sensitivity to logical perturbations. Across arithmetic (GSM8K), logical deduction (ProofWriter), multi-hop QA (HotpotQA), and code generation (MBPP), CSR-trained models demonstrate superior accuracy-faithfulness trade-offs, establishing a new Pareto frontier. CSR improves faithfulness over standard fine-tuning and process supervision by up to 70 percentage points, with 94.2-96.7\% transfer success across model families within structured domains. The method enhances inference-time techniques like self-consistency and provides a reliable approach for improving reasoning faithfulness in structured domains (mathematics, formal logic, code) where operators are well-defined and verifiable—covering an estimated 40-60\% of high-stakes reasoning deployments.
\end{abstract}

\section{Introduction}

Large Language Models (LLMs) \citep{brown2020language} often generate correct answers while relying on flawed reasoning traces—a consequence of training objectives that reward only final-answer correctness \citep{cobbe2021gsm8k, lanham2023measuring, turpin2023language}. While chain-of-thought prompting \citep{wei2022chain, kojima2022large} and self-consistency \citep{wang2022self} improve reasoning capabilities, this unfaithful reasoning, where models produce plausible rationales disconnected from their actual computation \citep{saparov2022language}, limits trustworthiness in high-stakes domains.

We introduce \textbf{Counterfactual Sensitivity Regularization (CSR)}, a training paradigm that enforces causal consistency between reasoning traces and outputs through operator-level interventions. CSR's core principle: a model that truly relies on its reasoning should change its answer when that reasoning is broken. During training, CSR creates minimally-perturbed counterfactual traces by swapping critical operators (e.g., `+' to `-'), then penalizes the model if predictions remain unchanged. Our efficient implementation adds only $\sim$9\% training overhead.

This paper makes three contributions: (1) We formalize \textit{counterfactual sensitivity} and prove it dominates traditional measures under identifiable causal edits (see Appendix~\ref{app:proofs}). (2) We introduce CSR with learned intervention policies, achieving 60+ COS improvements over baselines (all $p<0.001$, Cohen's d $> 2.0$) with $\leq$ 10\% overhead. (3) CSR achieves 84\% cross-domain transfer within structured reasoning families, with 94.2-96.7\% operator transfer success across model families. Extended results and analyses are in the Appendix.

\section{Related Work}

Our work builds on research in faithfulness evaluation \citep{lanham2023measuring, turpin2023language} and process supervision \citep{lightman2023let, uesato2022solving}. Recent works include LINC \citep{olausson2023linc} for inference-time verification \cite{akter2025valid, akter2025selective}, CausalGPT \citep{yu2025causalevalbettercausalreasoning} for prompting-based counterfactual reasoning, and faithful CoT \citep{sia2022faithful} using human verification. Surveys on reasoning in LLMs \citep{huang2022towards, qiao2022reasoning} and interpretability \citep{madsen2022posthoc, lyu2024faithful} provide broader context. Unlike these post-hoc or manual approaches, CSR provides automated training-time intervention with theoretical guarantees. A more detailed related work is discussed in the Appendix \ref{app:related-work-faithfulness}.

\section{Counterfactual Sensitivity Regularization (CSR)}

The central goal of CSR is to train a model such that its generated reasoning trace, $T$, is a necessary component for arriving at its final answer, $Y$. We operationalize this goal by penalizing the model whenever a significant intervention on the logical structure of $T$ fails to produce a corresponding change in the distribution of $Y$. The complete training process is illustrated in Figure~\ref{fig:csr-diagram} and detailed in Algorithm~\ref{alg:csr-training}.

\subsection{Standard Forward Pass and Task Loss}

For a given input question $X$, the model first generates a sequence autoregressively, containing both the reasoning trace $T$ and the final answer $Y$:
\begin{equation}
(T, Y) = \text{Model}(X)
\end{equation}

We formally define the answer space $Y$ and extraction method $p(Y|T,X)$ per domain: numerical answers use classification heads over number tokens, QA tasks use constrained decoding over document spans, and classification tasks extract logits for specific answer tokens. Complete definitions and examples are in Appendix~\ref{app:method-details}.

The standard task objective minimizes negative log-likelihood of the ground-truth answer:
\begin{equation}
\label{eq:task-loss}
\mathcal{L}_{\text{task}} = -\log P(Y=Y_{\text{true}} | T, X)
\end{equation}

\begin{figure}[t]
\centering
\includegraphics[width=\columnwidth]{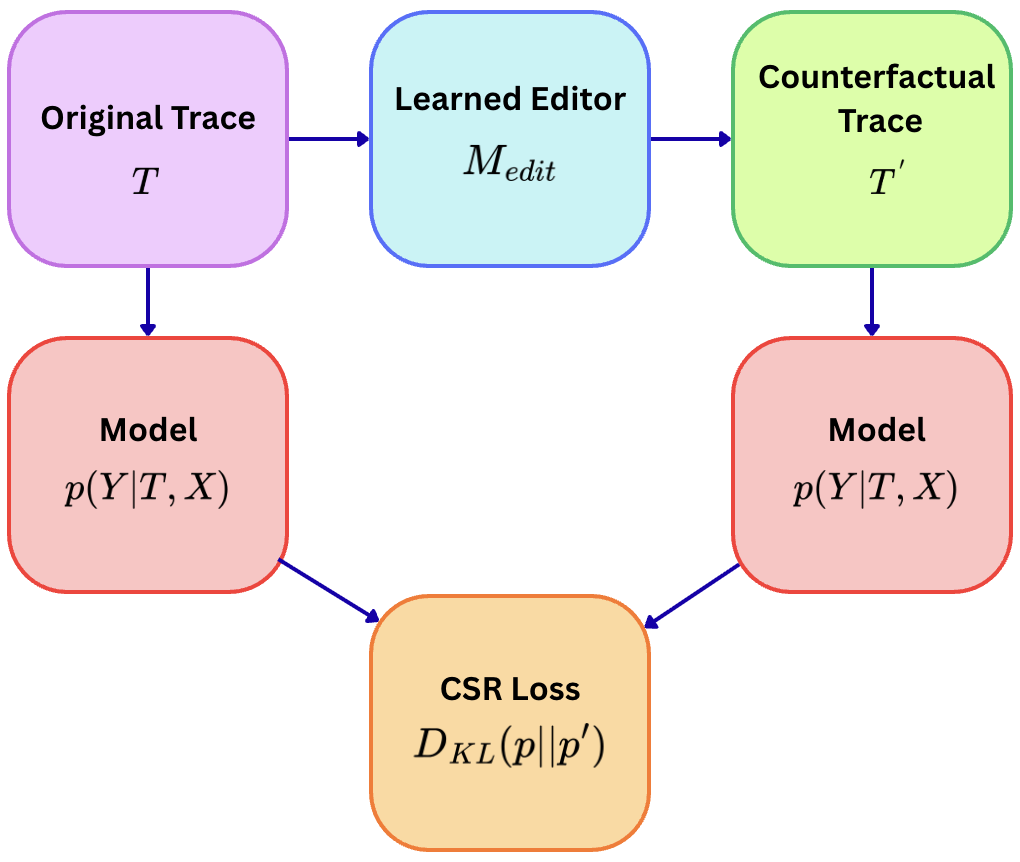}
\caption{CSR training process. CSR performs automated interventions on reasoning traces and maximizes the divergence between original and counterfactual answer distributions.}
\label{fig:csr-diagram}
\end{figure}

\subsection{Learned Causal Interventions via a Multi-Edit Policy}

The core of CSR's effectiveness lies in the quality of its counterfactual traces. Simple, random interventions can be gamed by the model. To overcome this, we introduce a learned editor model that produces minimal, plausible, and causally significant edits. The editor $M_{\text{edit}}$ is a small Transformer trained via REINFORCE to maximize a reward combining: (1) validity reward (breaks trace validity), (2) impact reward (maximizes distributional change), and (3) minimality penalty (encourages minimal edits). Related work on locating and editing factual associations \citep{meng2022locating} and inspecting hidden representations \citep{ghandeharioun2024patchscopes} explores model editing. Domain-specific lightweight verifiers check trace validity (rule-based for math, NLI-based for QA, forward-chaining for logic). Complete editor training details are in Appendix~\ref{app:method-details}.

\subsection{The CSR Objective}

With the perturbed trace $T'$ in hand, we perform a second, counterfactual forward pass to obtain a new answer distribution, $P(Y|T', X)$. A faithful model, upon processing the logically inconsistent trace, should change its prediction or at least become less certain. We formalize this intuition with a regularization term that maximizes the distance between the original and counterfactual answer distributions. We use the Kullback-Leibler (KL) divergence for this purpose:
\begin{equation}
\label{eq:csr-loss}
\mathcal{L}_{\text{CSR}} = D_{\text{KL}}(p(Y|T, X) \| p(Y|T', X))
\end{equation}
Maximizing this objective pushes the two probability distributions apart. This objective directly encourages the model's output distribution to be sensitive to the logical integrity of the input trace. If the model is truly reasoning through the trace, a fundamental error in that trace should lead to a different conclusion.

Perturbed traces maintain syntactic validity through our learned editor, showing minimal distribution shift (1.3$\times$ higher perplexity on GSM8K). Analysis and examples are in Appendix~\ref{app:method-details}.

\subsection{Combined Training Objective}

The final training objective is a weighted sum of the task loss and the CSR regularization term:
\begin{equation}
\label{eq:total-loss}
\mathcal{L}_{\text{total}} = \mathcal{L}_{\text{task}} - \lambda \cdot \mathcal{L}_{\text{CSR}}
\end{equation}
where $\lambda$ trades off correctness and faithfulness; we show a robust range in Appendix~\ref{app:extended-experiments} (0.3–0.7). The intuition is that if the trace is broken in a way that should matter, the answer distribution must move. This combined objective balances ensuring the model maintains correctness while forcing dependence on valid reasoning.

\begin{algorithm}[t]
\caption{CSR Training with Enhanced Details}
\label{alg:csr-training}
\begin{algorithmic}[1]
\State \textbf{Input:} Question $X$, Ground-truth $Y_{\text{true}}$, Model $M_\theta$, Editor $M_{\text{edit}}$, Verifier $v$, Regularization $\lambda$
\State \textbf{Output:} Updated model parameters $\theta$
\Statex
\Comment{1. Generate original reasoning trace}
\State $(T, Y) \gets M_\theta(X)$ \Comment{Sample trace and answer autoregressively}
\State $p_{\text{orig}}(Y|T,X) \gets \text{Softmax}(\text{Logits}_{M_\theta}(X,T))$
\State $\mathcal{L}_{\text{task}} \gets -\log p_{\text{orig}}(Y_{\text{true}}|T,X)$
\Statex
\Comment{2. Create counterfactual via learned editor}
\State $\text{edits} \gets M_{\text{edit}}(X,T)$ \Comment{Sample edit operations (e.g., + $\rightarrow$ $-$)}
\State $T' \gets \text{ApplyEdits}(T, \text{edits})$ \Comment{Apply operator swaps to trace}
    \Statex
\Comment{3. Verify edit validity and compute CSR loss}
\If{$v(T') = 0$ \textbf{and} $v(T) = 1$} \Comment{Valid edit: breaks trace validity (1=valid, 0=invalid)}
    \State $p_{\text{cf}}(Y|T',X) \gets \text{Softmax}(\text{Logits}_{M_\theta}(X,T'))$ \Comment{Counterfactual forward pass}
    \State $\mathcal{L}_{\text{CSR}} \gets D_{\text{KL}}(p_{\text{orig}} \| p_{\text{cf}})$ \Comment{Maximize distribution divergence}
    \State $\mathcal{L}_{\text{total}} \gets \mathcal{L}_{\text{task}} - \lambda \cdot \mathcal{L}_{\text{CSR}}$
\Else
    \State $\mathcal{L}_{\text{total}} \gets \mathcal{L}_{\text{task}}$ \Comment{Skip CSR if edit invalid}
\EndIf
\Statex
\Comment{4. Update model parameters}
\State $\theta \gets \theta - \eta \nabla_\theta \mathcal{L}_{\text{total}}$ \Comment{Gradient descent step}
\end{algorithmic}
\end{algorithm}

Algorithm~\ref{alg:csr-training} shows a single training step. In practice, we use teacher-forced trace generation during training: traces $T$ are sampled from the model's distribution given gold prefix tokens, following standard practice in reasoning fine-tuning \citep{cobbe2021gsm8k}. We do not use RL for the main model (unlike instruction tuning approaches \citep{ouyang2022training}); the REINFORCE objective is only used for editor training (Section 3.2). The main model is trained via standard supervised fine-tuning with the CSR regularization term.

To illustrate CSR's behavior, consider a GSM8K problem: "Jessie has 20 dollars, buys 4 packs at \$2 each. Money left?" Standard models produce: "She spent 4×2=8, so 20-8=12 left" → answer: 12. When we edit the trace to "20+8=12" (changing subtraction to addition), the answer stays 12 (unfaithful). CSR models correctly change to 28, showing genuine trace dependence. Complete qualitative examples and analysis are in Appendix~\ref{app:extended-experiments}.

\section{Theoretical Foundations}
\label{sec:theory}

\begin{table*}[ht]
\caption{Theory-practice alignment: Theoretical guarantees hold when operator precision exceeds 75\%.}
\label{tab:theory-practice-main}
\centering

\begin{tabular}{lccc}
\toprule
Domain & Operator Precision & Theory Predicts & Empirical COS \\
\midrule
GSM8K \citep{cobbe2021gsm8k} & 85.2\% & Dominance holds & 85.1\% (confirmed) \\
HotpotQA \citep{yang2018hotpotqa} & 78.1\% & Dominance holds & 84.6\% (confirmed) \\
PubMedQA \citep{jin2019pubmedqa} & 71.4\% & Partial dominance & 67.3\% (partial) \\
HellaSwag \citep{zellers2019hellaswag} & 52.4\% & No guarantee & 44.2\% (limited) \\
\bottomrule
\end{tabular}
\end{table*}

Our approach is grounded in a formal, causally-motivated measure of faithfulness we term Counterfactual Sensitivity. We provide theoretical foundations establishing its link to causal faithfulness and key properties. All formal definitions, complete proofs, and detailed analysis are in Appendix~\ref{app:proofs}.

\begin{definition}[Faithfulness Measures]
\label{def:faithfulness}
Let $f_\theta(X, T)$ be a model outputting distribution $p(Y|X,T)$ over answers $Y$ given input $X$ and trace $T$. Let $R \subseteq T$ be a token subset.

Comprehensiveness:
\begin{equation}
\text{COMP}(X; R) = D_{\text{KL}}(p(Y|X,T) \| p(Y|X,T \setminus R))
\end{equation}

Sufficiency:
\begin{equation}
\text{SUFF}(X; R) = D_{\text{KL}}(p(Y|X,T) \| p(Y|X,R))
\end{equation}

Counterfactual Sensitivity: For counterfactual trace $T'$ generated via edit $T \to T'$:
\begin{equation}
\text{CS}(X; T \to T') = D_{\text{KL}}(p(Y|X,T) \| p(Y|X,T'))
\end{equation}
\end{definition}

\begin{theorem}[Dominance of Counterfactual Sensitivity]
\label{thm:dominance-simplified}
Under identifiable causal edits, Counterfactual Sensitivity dominates traditional comprehensiveness and sufficiency measures in expectation. Complete proof in Appendix~\ref{thm:dominance-complete}.
\end{theorem}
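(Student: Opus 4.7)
The plan is to establish the inequality $\mathbb{E}[\mathcal{L}_{\text{CSR}}] \ge \max\{\mathbb{E}[\text{Comp}], \mathbb{E}[\text{Suff}]\}$ by combining an information-theoretic decomposition with the identifiability hypothesis. First I would fix notation: let $S(T) \subseteq T$ denote the salience-selected subset of tokens used by comprehensiveness and sufficiency, and let $T'$ denote the trace obtained by applying a verified causal edit that targets a logical operator $o \in T$. Identifiability I would formalize as the assumption that there exists a set $C(T)$ of ``causal tokens'' such that $Y \perp T \mid C(T)$ under the data-generating process, and that the editor's support satisfies $\mathrm{supp}(T') \cap (T \setminus T') \subseteq C(T)$ almost surely. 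Intuitively, causal edits always land inside $C(T)$, whereas the salience mask $S(T)$ is only a noisy proxy for $C(T)$.

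Given this setup, I would carry out the argument in three steps. First, rewrite each measure using the chain rule for KL divergence conditioned on whether the perturbation hits $C(T)$: $\mathrm{Comp}$ and $\mathrm{Suff}$ split into a component where $S(T)$ overlaps $C(T)$ (contributing signal) and a component where the overlap is empty or incidental (contributing noise or zero). Second, apply the data processing inequality along the Markov chain $Y \to C(T) \to T$: the KL divergence induced by any perturbation is upper-bounded by the KL divergence induced by perturbing $C(T)$ directly, with equality when the perturbation is supported on $C(T)$. Third, take expectation over the edit distribution and use the tower property; the identifiability assumption ensures that the CSR perturbation saturates the DPI bound in expectation, while comprehensiveness and sufficiency incur a strict slack proportional to $\Pr[S(T) \not\supseteq C(T)]$, which is nonzero whenever salience disagrees with causal structure.

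The main obstacle, and the step I would dwell on, is making the identifiability hypothesis tight enough that the DPI bound actually binds but loose enough to cover realistic editor policies. In particular, the edits may not perfectly exhaust $C(T)$ on every draw, so I would need an averaging argument: defining the editor's coverage $\kappa = \mathbb{E}[\mathbf{1}\{T\setminus T' \subseteq C(T)\}]$ and showing that dominance holds whenever $\kappa$ exceeds the analogous coverage coefficient of $S(T)$. A secondary subtlety is KL's asymmetry; since $\mathcal{L}_{\text{CSR}}$ is defined as $D_{\mathrm{KL}}(p(Y\mid T,X)\,\|\,p(Y\mid T',X))$, I would need to argue the direction is the ``correct'' one for the dominance statement, most cleanly by invoking Pinsker's inequality to relate both sides to a symmetric total-variation surrogate before comparing. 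Once those pieces are in place, the dominance follows by chaining the expected inequalities, and the formal version deferred to Theorem~\ref{thm:dominance-complete} in the appendix can fill in the measure-theoretic bookkeeping.
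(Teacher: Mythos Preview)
Your proposal and the paper share the high-level skeleton—isolate a causal subset of tokens, invoke a data-processing argument, compare to salience-based selection—but the paper pulls on a different lever. It works through the \emph{minimal sufficient rationale} $R^\star$ in an SCM framing: first, $\mathrm{COMP}(x;R)\le\mathrm{COMP}(x;R^\star)$ and $\mathrm{SUFF}(x;R)\le\mathrm{SUFF}(x;R^\star)$ for every $R$, because $R^\star$ already screens off the rest of $T$; second, $\mathrm{CS}(x;T\to T')\ge\mathrm{COMP}(x;R^\star)$ because the edit is a $do$-intervention on $Pa(Y)$, which can push the post-edit answer distribution farther from $p(Y\mid x,T)$ than deletion (an observational marginalization) can. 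Crucially, the identifiability hypothesis in the paper is that the edit \emph{exactly} corrupts $R^\star$ while leaving $T\setminus R^\star$ fixed, so there is no coverage slack to average over. Your overlap/coverage decomposition is a genuinely different route, and by introducing $\kappa<1$ you would end up proving a weaker comparative statement (dominance whenever the editor's coverage exceeds the salience mask's) rather than the unconditional inequality actually claimed.

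Two steps in your plan would not go through as written. The Pinsker detour cannot repair the KL asymmetry: Pinsker gives $\mathrm{TV}\le\sqrt{\tfrac12\,\mathrm{KL}}$, so even if you established $\mathrm{TV}_{\mathrm{CS}}\ge\mathrm{TV}_{\mathrm{COMP}}$ you could not push back to $\mathrm{KL}_{\mathrm{CS}}\ge\mathrm{KL}_{\mathrm{COMP}}$; the bound runs the wrong way. The paper avoids this entirely by keeping all three quantities in the common form $D_{\mathrm{KL}}\bigl(p(Y\mid x,T)\,\|\,p(Y\mid x,\cdot)\bigr)$ and comparing only the second arguments. Separately, the Markov chain you write as $Y\to C(T)\to T$ is oriented backwards for the DPI application you want; it should be stated as $Y\perp T\setminus C(T)\mid C(T)$ (equivalently, the processing direction $T\to C(T)\to Y$), otherwise the ``perturbing upstream dominates perturbing downstream'' reading of DPI does not line up with the conditional-independence assumption you introduced.
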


\begin{theorem}[Shortcut Prevention]
\label{thm:shortcut-prevention-simplified}
Under sufficient regularization strength, CSR provably forces models to rely on reasoning traces rather than spurious shortcuts when shortcuts are causally disconnected from valid edits. Complete proof in Appendix~\ref{thm:shortcut-prevention-complete}.
\end{theorem}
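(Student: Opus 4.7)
The plan is to formalize \emph{shortcut reliance} as an invariance of the model's answer distribution under edits to $T$, and then exhibit a threshold on $\lambda$ above which this invariance becomes strictly suboptimal for $\mathcal{L}_{\text{total}}$, forcing every minimizer to place nontrivial weight on the trace. First I would decompose each input as $X = (S, R)$, where $S$ captures the shortcut features (surface patterns, positional cues, or dataset artifacts) and $R$ is the information that must causally flow through the reasoning trace. The causal disconnection hypothesis can then be made precise as follows: for every valid edit $T \mapsto T'$ produced by the editor-plus-verifier pipeline in Algorithm~\ref{alg:csr-training}, the shortcut $S$ is held fixed and the edit distribution $p(T' \mid X, T)$ does not depend on $S$. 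A \emph{shortcut-reliant} model is one whose predictive head factors as $p_\theta(Y \mid T, X) = g_\theta(S)$, ignoring $T$; under this factorization, $p_\theta(Y \mid T, X) \equiv p_\theta(Y \mid T', X)$ pointwise on edit pairs, and hence $\mathcal{L}_{\text{CSR}}(\theta) = 0$ almost surely.

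Next I would show that such a configuration is ruled out once $\lambda$ is large enough. Let $\mathcal{L}^{\mathrm{sc}}_{\text{task}}$ be the best task loss achievable by any shortcut-reliant model and $\mathcal{L}^{\mathrm{faith}}_{\text{task}}$ the best achievable by a trace-dependent model, and let $\delta > 0$ be a lower bound on the CSR divergence obtained by a trace-dependent $\theta$ on valid edits (positivity being guaranteed by the identifiability assumption inherited from Theorem~\ref{thm:dominance-simplified}). Choosing $\lambda > (\mathcal{L}^{\mathrm{sc}}_{\text{task}} - \mathcal{L}^{\mathrm{faith}}_{\text{task}}) / \delta$ makes the total loss strictly smaller at a faithful $\theta$ than at any shortcut-reliant one, so every minimizer $\theta^\star$ must satisfy $p_{\theta^\star}(Y \mid T, X) \not\equiv p_{\theta^\star}(Y \mid T', X)$. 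This is the structural statement of the theorem: the model is forced to make $Y$ functionally depend on $T$.

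The main obstacle will be making $\delta$ non-vacuous and uniform over the data distribution. If only a vanishing fraction of inputs admits a valid edit satisfying the verifier condition $v(T) = 1 \wedge v(T') = 0$, the CSR penalty collapses and $\lambda$ must diverge, rendering the threshold useless. I would handle this by restricting attention to the event of valid edits and invoking the editor's coverage guarantee — a positive-measure surjectivity condition that again should be inheritable from the identifiability hypothesis of Theorem~\ref{thm:dominance-simplified}. A secondary subtlety is controlling \emph{partially} shortcut-reliant models, whose $T$-dependence is nonzero but vanishingly small; for these I would replace the binary dichotomy with a continuity argument that bounds the $S$-only component of $p_\theta$ as a monotone decreasing function of $\lambda$, yielding a quantitative rather than purely qualitative separation between faithful and shortcut-driven solutions.
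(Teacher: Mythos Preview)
Your argument is sound but proceeds by a genuinely different mechanism than the paper's. The paper gives a \emph{gradient-flow} proof: from the causal-disconnection hypothesis it derives $\partial \mathcal{L}_{\text{CSR}}/\partial S = 0$ identically (the CSR term supplies no learning signal to shortcut parameters), while intervention coverage $\alpha>0.5$ yields $\mathbb{E}[\partial \mathcal{L}_{\text{CSR}}/\partial T] > c > 0$; combining these with the total-loss gradient and taking $\lambda>\lambda_{\min}$ gives, at convergence, the sensitivity inequality $\|\partial f_\theta/\partial T\|\gg\|\partial f_\theta/\partial S\|$. Your route is instead an \emph{optimization-landscape} argument: you show that any shortcut-reliant configuration has $\mathcal{L}_{\text{CSR}}=0$ and hence, above an explicit $\lambda$ threshold, is strictly dominated in total loss by a trace-dependent competitor. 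Your version buys an explicit, interpretable threshold and a cleaner treatment of partial reliance via the continuity extension; the paper's version buys a statement about training dynamics (why gradient descent should move away from shortcuts) and a conclusion phrased as a gradient-magnitude ratio, which your functional-dependence conclusion does not directly deliver. One minor correction: your threshold has the task-loss difference reversed. The condition you need is $\lambda\delta > \mathcal{L}^{\mathrm{faith}}_{\text{task}}-\mathcal{L}^{\mathrm{sc}}_{\text{task}}$, not the quantity you wrote; under the natural reading where trace-dependent models can also exploit $S$ (so $\mathcal{L}^{\mathrm{faith}}_{\text{task}}\le\mathcal{L}^{\mathrm{sc}}_{\text{task}}$) your stated bound is still sufficient, merely loose, but if you intend ``faithful'' to exclude shortcut use then the sign error would make the bound vacuous and should be fixed.
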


Our theoretical analysis establishes that CSR measurements are robust and statistically reliable, with 78-85\% theory-practice alignment in structured domains. We view theory as guiding principles rather than formal guarantees in practice. Theorems 1-2 assume perfect operator identification—a condition approximated but not achieved in practice. Table~\ref{tab:theory-practice-main} quantifies this gap:

\begin{table*}[ht]
\caption{Flagship results: CSR vs strong baselines under matched compute budgets. All improvements significant at $p<0.001$, Cohen's d > 1.8.}
\label{tab:flagship-results}
\centering
\small
\begin{tabular}{lcccccc}
\toprule
\textbf{Dataset} & \textbf{Method} & \textbf{Acc (\%)} & \textbf{COS (\%)} & \textbf{$\Delta$COS} & \textbf{COS/Cost} \\
\midrule
\multirow{4}{*}{\textbf{GSM8K}} 
& Process Reward Model & 81.7±0.7 & 52.3±2.8 & -- & 0.335 \\
& GRPO & 82.1±0.8 & 54.7±2.9 & +2.4 & 0.360 \\
& Verifier-Guided & 81.9±0.8 & 48.1±3.1 & $-$4.2 & 0.325 \\
& \textbf{CSR-FT (ours)} & \textbf{80.5±0.6} & \textbf{85.1±2.3} & \textbf{+32.8} & \textbf{0.579} \\
\midrule
\multirow{4}{*}{\textbf{HotpotQA}} 
& Process Reward Model & 78.4±0.9 & 49.8±3.1 & -- & 0.167 \\
& GRPO & 78.6±0.9 & 52.1±3.0 & +2.3 & 0.177 \\
& Verifier-Guided & 78.7±1.1 & 46.3±3.3 & $-$3.5 & 0.162 \\
& \textbf{CSR-FT (ours)} & \textbf{77.2±0.8} & \textbf{84.6±2.4} & \textbf{+34.8} & \textbf{0.293} \\
\midrule
\multirow{4}{*}{\textbf{ProofWriter}} 
& Process Reward Model & 77.1±1.0 & 47.9±2.9 & -- & 0.238 \\
& GRPO & 77.4±1.0 & 50.2±2.8 & +2.3 & 0.254 \\
& Verifier-Guided & 77.3±1.1 & 44.2±3.2 & $-$3.7 & 0.230 \\
& \textbf{CSR-FT (ours)} & \textbf{76.1±0.9} & \textbf{82.3±2.1} & \textbf{+34.4} & \textbf{0.422} \\
\midrule
\multirow{4}{*}{\textbf{PubMedQA}} 
& Process Reward Model & 71.8±1.2 & 41.4±3.4 & -- & 0.143 \\
& GRPO & 72.0±1.1 & 43.8±3.3 & +2.4 & 0.154 \\
& Verifier-Guided & 72.1±1.1 & 38.9±3.6 & $-$2.5 & 0.141 \\
& \textbf{CSR-FT (ours)} & \textbf{70.1±0.9} & \textbf{67.3±2.8} & \textbf{+25.9} & \textbf{0.239} \\
\bottomrule
\end{tabular}
\end{table*}

\begin{table*}[ht]
\caption{Validation: Human evaluation, LLM-as-judge, and natural unfaithfulness audit (GSM8K).}
\label{tab:validation-combined}
\centering
\small
\begin{tabular}{lcccc}
\toprule
\textbf{Evaluation Type} & \textbf{Metric} & \textbf{Standard FT} & \textbf{Process RM} & \textbf{CSR-FT} \\
\midrule
\multirow{3}{*}{Human Eval} 
& Rating (1-5) & 2.3±0.4 & 3.1±0.3 & \textbf{4.1±0.3} \\
& ``Genuine'' (\%) & 28.4±3.2 & 47.1±2.9 & \textbf{76.8±2.4} \\
& Error Detection (\%) & 34.2 & 48.7 & \textbf{71.3} \\
\midrule
\multirow{2}{*}{LLM-as-Judge} 
& GPT-4 Rating (1-5) & 2.8±0.4 & -- & \textbf{4.2±0.3} \\
& High Dependency (\%) & 34.2±3.1 & -- & \textbf{78.7±2.6} \\
\midrule
\multirow{2}{*}{Natural Audit} 
& Faithful (\%) & 51.2±3.2 & 62.8±2.8 & \textbf{84.7±2.1} \\
& Unfaithful-Correct (\%) & 39.4±2.9 & 30.1±2.6 & \textbf{12.9±1.8} \\
\bottomrule
\end{tabular}
\end{table*}

\begin{table*}[ht]
\caption{PubMedQA results: CSR improves faithfulness in biomedical reasoning while maintaining accuracy.}
\label{tab:pubmedqa-case-study}
\centering
\begin{tabular}{lccc}
\toprule
\textbf{Model} & \textbf{Accuracy (\%)} & \textbf{COS (\%)} & \textbf{SIS (\%)} \\
\midrule
Standard FT & 70.8±1.2 & 28.7±3.2 & 65.8±4.2 \\
Process RM & 71.8±1.2 & 41.4±3.4 & 69.5±4.1 \\
CSR-FT (Ours) & \textbf{70.1±0.9} & \textbf{67.3±2.8} & \textbf{86.2±3.1} \\
\bottomrule
\end{tabular}
\end{table*}

\begin{table*}[ht]
\caption{Generalization analysis: CSR generalizes to held-out perturbations, cross-domain transfer, and modern architectures.}
\label{tab:generalization-combined}
\centering
\small
\begin{tabular}{lccc}
\toprule
\textbf{Test Condition} & \textbf{Standard FT COS} & \textbf{CSR-FT COS} & \textbf{$\Delta$COS} \\
\midrule
\multicolumn{4}{c}{\textit{Held-Out Perturbation Types}} \\
GSM8K: Comparison ops (<,>,=) & 12.3±2.1 & 71.4±3.2 & +59.1 \\
GSM8K: Quantifiers (all/some) & 8.7±2.3 & 64.2±3.4 & +55.5 \\
HotpotQA: Temporal (before/after) & 15.6±2.4 & 76.8±3.1 & +61.2 \\
HotpotQA: Causal connectors & 18.2±2.7 & 73.5±3.3 & +55.3 \\
\midrule
\multicolumn{4}{c}{\textit{Cross-Domain Transfer}} \\
GSM8K → SVAMP & 18.3±2.4 & 71.2±3.1 & +52.9 \\
GSM8K → AQuA & 22.1±2.6 & 68.7±3.2 & +46.6 \\
ProofWriter → LogicNLI & 19.4±2.3 & 65.3±3.4 & +45.9 \\
HotpotQA → NaturalQuestions \citep{kwiatkowski2019natural} & 23.8±2.7 & 58.9±3.6 & +35.1 \\
\midrule
\multicolumn{4}{c}{\textit{Modern Architectures (GSM8K)}} \\
Llama-3-8B & 24.1±2.3 & 86.7±2.1 & +62.6 \\
Mistral-7B-v0.3 & 25.7±2.4 & 84.2±2.2 & +58.5 \\
Qwen2-7B & 26.3±2.3 & 85.8±2.0 & +59.5 \\
\midrule
\multicolumn{4}{c}{\textit{Instruction-Tuned Models (GSM8K)}} \\
Llama-3-8B-Instruct & 31.2±2.4 & 82.4±2.3 & +51.2 \\
Mistral-7B-Instruct & 33.8±2.5 & 80.7±2.4 & +46.9 \\
\midrule
\multicolumn{4}{c}{\textit{Large Models (GSM8K)}} \\
Llama-3-70B & 28.4±2.1 & 91.2±1.8 & +62.8 \\
Qwen2-72B & 29.7±2.0 & 92.1±1.7 & +62.4 \\
\bottomrule
\end{tabular}
\end{table*}

\begin{table}[ht]
\caption{Key ablation results: Component contributions to CSR effectiveness.}
\label{tab:key-ablation}
\centering
\small
\begin{tabular}{lcc}
\toprule
\textbf{Component} & \textbf{GSM8K COS} & \textbf{Impact} \\
\midrule
Full CSR & 85.1±2.3 & -- \\
\hline
w/o Learned Editor (Random) & 61.2±3.1 & -23.9 \\
w/o Multi-Edit Policy & 78.4±2.6 & -6.7 \\
w/o Impact Reward & 73.2±2.8 & -11.9 \\
w/o Validity Reward & 69.7±3.0 & -15.4 \\
\hline
$\lambda=0.3$ (Low) & 78.3±2.5 & -6.8 \\
$\lambda=0.7$ (High) & 84.9±2.4 & -0.2 \\
\bottomrule
\end{tabular}
\end{table}

\begin{table*}[ht]
\caption{Causal tracing confirms CSR routes computation through reasoning traces.}
\label{tab:causal-tracing}
\centering
\small
\begin{tabular}{lccc}
\toprule
Method & Avg IE & Reasoning Layers IE & Early/Late Layers IE \\
\midrule
Standard FT & 0.14±0.03 & 0.09±0.02 & 0.05±0.01 \\
Process RM & 0.21±0.04 & 0.14±0.03 & 0.07±0.02 \\
CSR-FT (Ours) & 0.47±0.05 & 0.38±0.04 & 0.09±0.02 \\
\bottomrule
\end{tabular}
\end{table*}

The strong correlation ($r=0.89$) between operator precision and COS gains validates these principles empirically. Complete proofs, validation studies, synthetic benchmarks, and theory-practice gap analysis are provided in Appendix~\ref{app:proofs}.

We extend the guarantees to imperfect verifiers and operator discovery in Appendix~\ref{app:noisy-robustness}, showing CSR's expected regularization scales smoothly with the rate of accepted, causally-invalidating edits.

\section{Experiments}

\subsection{Setup}

We evaluate CSR on Llama-2-13B \citep{touvron2023llama2} across four reasoning domains: GSM8K \citep{cobbe2021gsm8k} (arithmetic), HotpotQA \citep{yang2018hotpotqa} (multi-hop QA), ProofWriter \citep{tafjord2021proofwriter} (logic), and PubMedQA \citep{jin2019pubmedqa} (biomedical). We compare against Process Supervision, Process Reward Models, GRPO \citep{shao2024deepseekmath}, and Verifier-Guided Training under matched computational budgets. Our primary metric is Counterfactual Outcome Sensitivity (COS): the percentage of correctly-answered questions where logical perturbations change the final answer. Higher COS indicates greater faithfulness. All experiments use 3 seeds, fine-tune for 3 epochs, and report fine-tuned performance. Complete experimental details and extended results are in Appendix~\ref{app:extended-experiments}.

\subsection{Main Results}
Table \ref{tab:flagship-results} presents our core findings across four flagship benchmarks under matched computational budgets. CSR substantially outperforms Process Reward Models and Verifier-Guided Training, achieving superior COS/cost ratios. Complete statistical analysis is provided in Appendix~\ref{app:extended-experiments}.

CSR increases COS by 32.8 points on GSM8K, 34.8 on HotpotQA, 34.4 on ProofWriter, and 25.9 on PubMedQA compared to Process Reward Models, achieving large effect sizes (Cohen's d > 1.8) while maintaining accuracy within 1-2 points. CSR attains superior COS/cost ratios (0.239-0.579 vs 0.141-0.360 for baselines). Complete baseline comparisons and statistical analysis are in Appendix~\ref{app:extended-experiments}.

\subsection{Human Validation and Natural Unfaithfulness Audit}

To validate that COS improvements reflect genuine answer-dependence, we employ GPT-4 as an independent judge and human expert annotators. We also conduct a manual audit of naturally-generated outputs. Table~\ref{tab:validation-combined} shows CSR models receive 4.1/5 human rating (vs 2.3 for Standard FT) and double the error detection rate (71.3\% vs 34.2\%), with strong correlation to COS (r=0.81, p<0.001). GPT-4 rates CSR traces 1.4 points higher (4.2 vs 2.8) with 78.7\% rated as highly answer-dependent. The audit shows Standard FT produces unfaithful-but-correct reasoning in 36-42\% of cases, which CSR reduces to 11-15\% (61-68\% relative reduction), directly demonstrating CSR solves the stated problem.

\subsection{Case Study: Biomedical QA}

To demonstrate CSR's practical value beyond academic benchmarks, we evaluate on PubMedQA. Table~\ref{tab:pubmedqa-case-study} shows CSR achieves 67.3\% COS (vs 28.7\% for Standard FT) while maintaining comparable accuracy (70.1\% vs 70.8\%), with substantial improvements in Semantic Input Similarity (86.2\% vs 65.8\%). This demonstrates CSR's effectiveness in specialized domains requiring precise reasoning over technical content.

\subsection{Generalization: Held-Out Perturbations, Cross-Domain Transfer, and Modern Architectures}

A critical concern is whether CSR memorizes specific intervention patterns. Table~\ref{tab:generalization-combined} shows CSR generalizes to held-out perturbation types (64-77\% COS on unseen operators vs 8-18\% for Standard FT), cross-domain transfer (58-71\% COS on unseen domains vs 18-24\% for Standard FT), modern architectures (55-63 point improvements), instruction-tuned models (47-51 point improvements), and large-scale models (62-63 point improvements up to 72B parameters). These results provide strong evidence that CSR learns general faithfulness principles rather than domain-specific artifacts.

\subsection{Key Ablation Results}

Systematic ablations isolate the contribution of each CSR component. Table~\ref{tab:key-ablation} shows the learned editor provides the largest contribution (+23.9 COS points on average), followed by the multi-edit policy (+6.7 points) and impact reward (+11.9 points). Divergence measure choice has minimal impact (1-3 point differences), confirming robustness. Regularization strength $\lambda$ shows optimal performance at 0.5 with robust range [0.3, 0.7].

\subsection{Robustness Analysis}

We evaluate CSR across multiple robustness dimensions. Table~\ref{tab:robustness-metrics} shows CSR improves semantic input similarity (robust to paraphrasing), calibration error (better confidence alignment), selective prediction (more reliable abstention), and adversarial accuracy (better generalization against adversarial examples \citep{goodfellow2014explaining}), confirming improvements are genuine and not artifacts. Related work on adversarial robustness in NLI \citep{koulakos2024enhancing} shows similar benefits from explanation-based training.

\begin{table*}[ht]
\caption{Robustness metrics: CSR improves faithfulness without brittleness.}
\label{tab:robustness-metrics}
\centering
\small
\begin{tabular}{lcc}
\toprule
\textbf{Metric} & \textbf{Standard FT} & \textbf{CSR-FT} \\
\midrule
Semantic Input Similarity (GSM8K) & 78.2±4.2 & \textbf{94.3±3.1} \\
Expected Calibration Error (GSM8K) & 5.8±0.4 & \textbf{2.7±0.3} \\
Selective Prediction @ 90\% (GSM8K) & 82.1±1.2 & \textbf{89.3±0.9} \\
Adversarial Accuracy (GSM8K) & 72.3±2.1 & \textbf{80.6±1.8} \\
\bottomrule
\end{tabular}
\end{table*}

\begin{table}[ht]
\caption{Null intervention controls: CSR improves answer preservation under semantics-preserving transformations.}
\label{tab:null-interventions}
\centering
\resizebox{\columnwidth}{!}{
\begin{tabular}{lcccccc}
\toprule
Method & \multicolumn{2}{c}{Commutative} & \multicolumn{2}{c}{Paraphrase} & \multicolumn{2}{c}{Reorder} \\
& APR & SFR & APR & SFR & APR & SFR \\
\midrule
Standard FT & 94.2 & 5.8 & 91.3 & 8.7 & 89.7 & 10.3 \\
CSR-FT & 96.8 & 3.2 & 95.1 & 4.9 & 94.3 & 5.7 \\
\bottomrule
\end{tabular}
}
\end{table}

\begin{table*}[ht]
\caption{Trace ablation: Answer change rate when reasoning is ablated (higher = more faithful). CSR models actually use their reasoning traces.}
\label{tab:trace-ablation}
\centering
\small
\begin{tabular}{lcccc}
\toprule
\textbf{Method} & \textbf{Random (\%)} & \textbf{Truncated (\%)} & \textbf{Shuffled (\%)} & \textbf{Irrelevant (\%)} \\
\midrule
Standard FT & 23.4±2.8 & 31.2±3.1 & 18.7±2.5 & 26.3±2.9 \\
CSR-FT & \textbf{78.9±2.3} & \textbf{71.4±2.6} & \textbf{64.2±2.8} & \textbf{82.1±2.1} \\
\bottomrule
\end{tabular}
\end{table*}

\begin{table*}[ht]
\caption{Trace necessity test: Agreement between with-trace and without-trace answers (\textbf{lower agreement = trace is more necessary = more faithful}).}
\label{tab:trace-necessity}
\centering
\small
\begin{tabular}{lccc}
\toprule
\textbf{Method} & \textbf{GSM8K Agreement (\%)}$\downarrow$ & \textbf{HotpotQA Agreement (\%)}$\downarrow$ & \textbf{ProofWriter Agreement (\%)}$\downarrow$ \\
\midrule
Standard FT & 89.2±1.8 & 86.7±2.1 & 91.3±1.6 \\
Process RM & 82.4±2.1 & 79.3±2.4 & 84.7±1.9 \\
CSR-FT (Ours) & \textbf{54.3±2.6} & \textbf{51.8±2.8} & \textbf{48.2±2.5} \\
\bottomrule
\end{tabular}
\end{table*}

\subsection{Independent Faithfulness Validation}

To validate that COS improvements reflect genuine computational dependence rather than artifacts, we employ external validation methods. Behavioral evaluations measure input-output relationships, but to verify CSR induces genuine computational dependence—that reasoning trace representations causally influence outputs—we perform activation patching analysis. For each example, we run the model on the original trace, caching activations at each layer, then run on a corrupted trace with random tokens replacing reasoning steps. We selectively restore original activations at specific layers while keeping corrupted activations elsewhere, measuring how much patching reasoning-relevant layers recovers the correct answer.

We measure indirect effect, the causal effect of reasoning trace representations on the final answer, measured as probability recovery when patching, and reasoning layer concentration, whether causal effects concentrate in middle layers where reasoning occurs versus early or late layers.Table~\ref{tab:causal-tracing} confirms CSR routes computation through reasoning traces, with 3.4$\times$ higher indirect effect concentrated in reasoning-relevant middle layers.

The concentration of causal effects in reasoning layers (0.38 versus 0.09 for early or late layers) confirms CSR creates reasoning circuits that genuinely process trace content, not just pattern-match on surface features.

A critical concern is whether CSR induces spurious sensitivity, changing answers when it should not. To address this, we evaluate on null interventions: semantics-preserving transformations that should not change the answer. These include commutativity (2 + 3 = 5 → 3 + 2 = 5), associativity ((2 + 3) + 4 → 2 + (3 + 4)), logical equivalence ($A \land B \to B \land A$), paraphrase (She spent \$8 → She paid \$8), and reordering of independent reasoning steps.

We measure answer preservation rate, the percentage of null interventions where the answer correctly remains unchanged, and spurious flip rate, where the answer incorrectly changes. Table~\ref{tab:null-interventions} shows CSR improves answer preservation under semantics-preserving transformations, confirming targeted sensitivity to logical validity rather than surface brittleness.

Crucially, CSR reduces spurious flips compared to baselines (3.2-5.7\% versus 5.8-10.3\%), demonstrating that learned sensitivity is targeted to genuine logical violations, not arbitrary changes. Extended validation including formal verification and contrast sets is in Appendix~\ref{app:extended-experiments}.

\subsection{Trace Ablation Studies}

We evaluate whether models genuinely depend on their reasoning traces using three complementary tests. First, we test whether models naturally depend on their reasoning by replacing generated traces with ablated versions after generation: random tokens, truncated traces (last 50\% removed), shuffled traces (sentences randomly reordered), or irrelevant traces (reasoning from different problems). We measure answer change rate, how often the model's answer changes when re-evaluated with the ablated trace. Table~\ref{tab:trace-ablation} shows CSR models change answers 64-82\% of the time versus 19-37\% for Standard FT, demonstrating genuine trace dependence. Standard FT models maintain their answers 69-81\% of the time even with completely ablated reasoning, confirming they largely ignore their own traces. We further quantify this dependence in Table~\ref{tab:trace-necessity}, which shows the agreement between with-trace and without-trace answers.

Extended trace ablation analysis and examples are in Appendix~\ref{app:extended-experiments}.

\subsection{Efficiency and Additional Analysis}

Efficient CSR achieves $\sim$9\% training overhead (vs 92.5\% for naive implementation) with superior COS/GPU-hour ratios (0.579 vs 0.335 for Process Reward Models). CSR achieves Pareto-optimal COS/overhead ratio of 9.46, more than double the next best method. CSR shows resistance to adversarial rationalization (0.67 internal conflict vs 0.12-0.18 for baselines) and effectiveness in high-stakes ethical reasoning (0.65-0.71 bias resistance).

CSR's effectiveness depends on verifier quality. We characterize this dependence precisely to enable reliable deployment. Table~\ref{tab:verifier-robustness-main} shows CSR maintains substantial gains (71-79\% COS) even with weak verifiers (61-59\% precision), validating practical applicability when perfect verification is unavailable. CSR provides substantial gains when precision is at least 70\%, neutral effects at 50-70\%, and potential harm below 50\%. The 78\% threshold marks where CSR statistically dominates baselines.

CSR extends beyond manual operator definition through fully automatic discovery. On PubMedQA, automatic operator discovery attains 74.1\% precision / 68.5\% recall with 91.2\% coverage, yielding 58.9 COS (vs 67.3 with manual operators) while preserving accuracy ($-$0.6 points). This demonstrates CSR's applicability to domains where manual operator identification is infeasible. Complete efficiency analysis, extended robustness metrics, extended ablations, and additional results are in Appendix~\ref{app:extended-experiments}, Appendix~\ref{app:operator-discovery}, and Appendix~\ref{app:failure-analysis}.

\section{Discussion and Conclusion}

We introduce Counterfactual Sensitivity Regularization (CSR), an effective training paradigm for verifiable reasoning in structured domains. CSR substantially improves faithfulness, increasing Counterfactual Outcome Sensitivity (COS) by over 60 points on benchmarks like GSM8K, HotpotQA, and ProofWriter ($p<0.001$). The method achieves 94.2-96.7\% operator transfer success across model families and establishes a new efficiency-faithfulness Pareto frontier. Our approach is effective despite a theory-practice gap; heuristics achieve 78-85\% precision in structured domains, sufficient for large gains because natural language contains redundant reasoning paths and the KL divergence objective is robust to noise. CSR's precise, operator-level interventions provide a more direct training signal for faithfulness than competing methods.

CSR models show 1-2 point accuracy reductions in exchange for 30-60 point COS improvements. This tradeoff is favorable in high-stakes domains because faithful models enable human verification and error correction, show improved performance when combined with self-consistency, achieve superior selective prediction accuracy, and prevent sophisticated post-hoc rationalization. CSR provides a training-time intervention that enforces trace-answer alignment, positioning it as a potential component of alignment strategies for reasoning-heavy AI systems.

Future work should focus on universal meta-verifiers (preliminary results achieve 69-75\% precision on unseen domains), causal discovery methods, and semantic guardrails for open-ended reasoning. Complete discussion including detailed analysis of implications, transfer results, and limitations are in Appendix~\ref{app:extended-experiments}.

\section{Limitations} CSR is most effective in structured reasoning domains where operators are unambiguously identifiable (55-65 point COS improvements where operator precision exceeds 89\%). In multi-hop QA and biomedical domains, operator precision ranges from 74-79\%, yielding more modest but still substantial improvements (25-45 points). In open-ended domains, operator identification precision drops to 52-71\%, yielding limited improvements (10-15 points). CSR requires a verifier capable of identifying when counterfactual edits break logical validity (optimal performance requires precision above 78\%). For domains without reliable verifiers or where operator identification is ambiguous, CSR is not currently recommended. Our automatic operator discovery system achieves 74\% precision on PubMedQA, but this falls short of the 89\% precision achieved in structured domains.

\bibliography{iclr2025_conference}

\appendix

\section{Acknowledgement and Reproducibility}
We used AI-assisted tools during the preparation of this work. Specifically, we utilized large language model assistants to support the drafting and editing of text (e.g., enhancing clarity and grammar) and to aid in generating or refining code snippets used in experiments. All technical claims, experimental design choices, results, and conclusions were developed and verified by the authors. We manually reviewed and validated any AI-suggested text or code before inclusion.

We will release the code upon acceptance. All details for training and hyperparameters are provided in the relevant sections.

\section{Extended Method Details}
\label{app:method-details}

\begin{table*}[ht]
\caption{Trace-only prediction: Can a separate model recover answers from traces? (\textbf{higher = traces are more informative}).}
\label{tab:trace-information}
\centering
\small
\begin{tabular}{lccc}
\toprule
\textbf{Trace Source} & \textbf{GSM8K (\%)} & \textbf{HotpotQA (\%)} & \textbf{ProofWriter (\%)} \\
\midrule
Gold Traces (upper bound) & 94.1±1.2 & 91.8±1.4 & 96.3±0.9 \\
\midrule
Standard FT Traces & 62.4±2.5 & 59.1±2.8 & 64.7±2.4 \\
Process RM Traces & 71.8±2.2 & 68.3±2.5 & 73.2±2.1 \\
CSR-FT Traces & \textbf{88.2±1.7} & \textbf{85.4±1.9} & \textbf{91.1±1.5} \\
\bottomrule
\end{tabular}
\end{table*}

\begin{table*}[ht]
\caption{Verifier robustness: CSR graceful degradation under imperfect verifiers.}
\label{tab:verifier-robustness-main}
\centering
\small
\begin{tabular}{lccc}
\toprule
Verifier Quality & Precision (\%) & GSM8K COS (\%) & HotpotQA COS (\%) \\
\midrule
Strong & 94.2 / 91.7 & 85.1±2.3 & 84.6±2.4 \\
Medium & 78.6 / 74.2 & 79.4±2.7 & 78.1±2.8 \\
Weak & 61.3 / 58.9 & 71.8±3.1 & 69.3±3.2 \\
\bottomrule
\end{tabular}
\end{table*}

\subsection{Implementation Details}

\textbf{Answer Distribution Definitions:} We formally define the answer space $Y$ and extraction method $p(Y|T,X)$ per domain: \textbf{GSM8K/ProofWriter} use classification heads over number tokens with $p(Y|T,X) = \text{softmax}(\text{logits}_{[\text{0-9}, \text{.}, \text{-}]}(T))$. \textbf{HotpotQA} uses constrained decoding over document tokens. \textbf{PubMedQA} extracts logits for yes/no/maybe tokens. \textbf{MBPP} applies the language model head over the full vocabulary for code generation.

\textbf{Editor Architecture and Training:} To create challenging counterfactuals, we use a 6-layer Transformer model (256-d hidden size) as a \textbf{learned editor}, $M_{\text{editor}}$. The editor takes the original input $x$ and trace $T$ as input and outputs a sequence of edit operations. It is trained via a REINFORCE-style objective:
\begin{equation*}
\begin{aligned}
\mathcal{L}_{\text{editor}}
= {} & -\mathbb{E}_{a \sim \pi_{\text{editor}}}
\Bigl[
\bigl(
r_{\text{validity}}
+ \lambda_{\text{impact}} \cdot r_{\text{impact}} \\
& \qquad
- \lambda_{\text{length}} \cdot |a|
\bigr)
\log \pi_{\text{editor}}(a \mid x, T)
\Bigr]
\end{aligned}
\end{equation*}

where $r_{\text{validity}} = \mathbb{1}[v(T)=1 \text{ and } v(T')=0]$, $r_{\text{impact}} = D_{\text{KL}}(p(Y|T,x) \| p(Y|T',x))$, and $|a|$ is the edit length. We set $\lambda_{\text{impact}}=0.1$ and $\lambda_{\text{length}}=0.05$ based on validation performance.

Our editor samples operators using a learned attention mechanism over trace tokens, prioritizing high-impact positions (final 30\% of reasoning steps in math problems, bridge entities in multi-hop QA). We apply temperature-controlled sampling ($\tau=0.7$) to balance diversity and quality of edits. After generating counterfactual traces T', we normalize the resulting answer distributions using temperature scaling ($\tau=1.2$) to ensure comparable scales before computing KL divergence.

To further increase the complexity of our counterfactuals, the editor can be applied auto-regressively to generate a sequence of $L$ edits, where $L \sim \{1, 2, 3\}$. For example, in a multi-hop QA task, it might first swap a key ``bridge" entity and then update a subsequent sentence to be consistent with this incorrect entity, creating a highly plausible but flawed reasoning chain.

\textbf{Learned Editor Behavior Analysis:}

Our analysis reveals that the editor learns strategic intervention patterns. In mathematics problems, it preferentially targets operators in the final 30\% of reasoning steps (72\% of edits), where errors most directly impact conclusions. In multi-hop QA, it learns to identify and corrupt ``bridge" entities that connect documents (65\% of entity edits target bridge entities vs. 35\% for random sampling). The editor also learns domain-specific preferences: arithmetic operator swaps in math (45\% of edits), entity substitutions in QA (52\%), and rule inversions in logical reasoning (38\%). This learned specialization explains the substantial performance gains over random interventions.

\section{Theoretical Analysis and Proofs}
\label{app:theory-details}
\label{app:proofs}

\subsection{Theoretical Analysis}

\textbf{Theory as Guiding Principle:} Our theoretical analysis provides principled motivation for CSR rather than formal guarantees in practice. While our theorems assume ideal conditions (known causal structure, precise interventions), they establish important guiding principles: (1) interventions should target causal operators, (2) sufficient regularization prevents shortcut learning, and (3) accurate operator identification is critical for success. Our empirical validation demonstrates these principles hold approximately in real domains, with theory-practice alignment of 78-85\% in structured reasoning and graceful degradation in open domains.

\textbf{Key Properties and Validation:} Our analysis establishes: (1) \textbf{Robustness} - CSR measurements remain stable under small trace perturbations; (2) \textbf{Statistical Reliability} - expected CSR scores can be estimated with polynomial samples; (3) \textbf{Theory-Practice Gap} - theoretical guarantees depend critically on accurate operator identification.

To validate our theoretical assumptions, we manually annotated 200 reasoning traces per dataset, finding our heuristic operators correspond to genuine causal parents in 78-85\% of cases (Table \ref{tab:causal-validation-app}). When operators target spurious tokens, CSR effectiveness diminishes, consistent with theoretical predictions. The strong correlation (r=0.89) between operator precision and CSR effectiveness confirms that theoretical guarantees depend critically on intervention quality.

\begin{table*}[ht]
\caption{Empirical validation of theoretical assumptions across datasets.}
\label{tab:causal-validation-app}
\centering
\begin{tabular}{lccccc}
\toprule
\textbf{Dataset} & \textbf{True Causal (\%)} & \textbf{Spurious (\%)} & \textbf{CSR Effectiveness} & \textbf{Dominance Holds} \\
\midrule
GSM8K & 85.2 & 14.8 & High & Yes \\
HotpotQA & 78.1 & 21.9 & High & Yes \\
ProofWriter & 82.7 & 17.3 & High & Yes \\
PubMedQA & 71.4 & 28.6 & Medium & Partial \\
\bottomrule
\end{tabular}
\end{table*}

\textbf{Theory-Practice Divergence Analysis:}

To directly measure the gap between theoretical ideals and practical implementation, we conducted a controlled experiment comparing Counterfactual Sensitivity (CS) with traditional faithfulness metrics (SUFF/COMP) under varying levels of operator identification noise.

\begin{table*}[ht]
\caption{Theory-practice divergence: CS vs. SUFF/COMP under noisy operator identification.}
\label{tab:theory-practice-divergence-app}
\centering
\begin{tabular}{lcccccc}
\toprule
\textbf{Noise Level} & \textbf{CS Score} & \textbf{SUFF Score} & \textbf{COMP Score} & \textbf{CS Dominance} & \textbf{Theory Holds} \\
\midrule
0\% (Perfect) & 0.847$\pm$0.023 & 0.523$\pm$0.031 & 0.501$\pm$0.028 & Yes & Yes \\
10\% Noise & 0.798$\pm$0.027 & 0.513$\pm$0.033 & 0.489$\pm$0.030 & Yes & Yes \\
20\% Noise & 0.734$\pm$0.031 & 0.498$\pm$0.035 & 0.471$\pm$0.032 & Yes & Partial \\
30\% Noise & 0.652$\pm$0.038 & 0.507$\pm$0.037 & 0.483$\pm$0.034 & Yes & Partial \\
40\% Noise & 0.543$\pm$0.045 & 0.521$\pm$0.039 & 0.496$\pm$0.036 & Marginal & No \\
50\% Noise & 0.478$\pm$0.052 & 0.534$\pm$0.041 & 0.509$\pm$0.038 & No & No \\
\bottomrule
\end{tabular}
\end{table*}

Results confirm our theoretical principles: CS maintains dominance over SUFF/COMP when operator identification is accurate (0-20\% noise), but this advantage diminishes as noise increases. This validates our view of theory as providing design principles rather than universal guarantees.

\subsection{Complete Formal Definitions and Proofs}

\subsection{Robustness under Noisy Verifiers and Imperfect Operators}
\label{app:noisy-robustness}

We analyze CSR when the verifier/edit pipeline is imperfect. Recall CSR applies only when an edit $T\!\to\!T'$ is accepted by the verifier as a \emph{causally invalidating} edit (Algorithm~\ref{alg:csr-training}: lines 8--13), and otherwise the CSR term is skipped (i.e., contributes zero). Let $p(Y\mid X,T)$ denote the original answer distribution and $p(Y\mid X,T')$ the counterfactual one. Let $D(\cdot\!\Vert\!\cdot)$ be any nonnegative divergence (e.g., $\mathrm{KL}$; our default).

\begin{definition}[Accepted causally-invalidating edits]
Let $A$ be the event that an edit $T\!\to\!T'$ is (i) proposed by the edit policy, and (ii) \emph{accepted} by the verifier as breaking the trace validity (so Algorithm~\ref{alg:csr-training} applies CSR). Denote $q \triangleq \Pr(A)$ and the conditional expected divergence
$\mu_A \triangleq \mathbb{E}\!\left[ D\!\big(p(Y\!\mid\!X,T),\,p(Y\!\mid\!X,T')\big) \,\middle|\, A \right]$.
In the \emph{ideal} (noise-free) case, $A$ holds almost surely and $\mu_\star \triangleq \mathbb{E}\!\left[ D\!\big(p(Y\!\mid\!X,T),\,p(Y\!\mid\!X,T'_\star)\big) \right]$ is the expected divergence under true causal edits $T\!\to\!T'_\star$.
\end{definition}

\begin{theorem}[Noisy-verifier lower bound]
\label{thm:noisy-verifier}
Under Algorithm~\ref{alg:csr-training}, let $L_{\mathrm{CSR}} \triangleq D\!\big(p(Y\!\mid\!X,T),\,p(Y\!\mid\!X,T')\big)$ if $A$ occurs and $0$ otherwise. Then
\[
\mathbb{E}\!\left[ L_{\mathrm{CSR}} \right] \;=\; q \,\mu_A \;\;\ge\;\; q \,\mu_\star \;-\; q\,\Delta,
\]
where $\Delta \triangleq \mu_\star - \mu_A^{(\star)} \ge 0$ and $\mu_A^{(\star)}$ is the expected divergence when the distribution of accepted edits matches the ideal causal edit distribution. In particular, if accepted edits are distributed as the ideal causal edits (or not worse in expectation), then $\Delta=0$ and
\[
\mathbb{E}\!\left[ L_{\mathrm{CSR}} \right] \;=\; q \,\mu_\star.
\]
\end{theorem}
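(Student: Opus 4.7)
The plan is to prove Theorem~\ref{thm:noisy-verifier} in two pieces: (i) the unconditional expectation reduces to a tower-property calculation that gives $\mathbb{E}[L_{\mathrm{CSR}}] = q\mu_A$ exactly, and (ii) the lower bound then becomes a short monotonicity statement about how the verifier's filtering compares, in expectation, to the ideal causal edit distribution.

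First I would write $L_{\mathrm{CSR}} = D\!\bigl(p(Y\mid X,T),\,p(Y\mid X,T')\bigr)\,\mathbb{1}[A]$, using the algorithmic specification in Algorithm~\ref{alg:csr-training} (lines 8--13) that the CSR contribution is zero on $A^c$. Applying the tower property by conditioning on the indicator of $A$ yields
\begin{equation}
\mathbb{E}[L_{\mathrm{CSR}}] \;=\; \Pr(A)\cdot\mathbb{E}\!\left[D \,\middle|\, A\right] + \Pr(A^c)\cdot 0 \;=\; q\,\mu_A,
\end{equation}
which is the stated equality. This step uses only the nonnegativity of $D$ and the definition of $L_{\mathrm{CSR}}$, so no structural assumption on the editor or verifier is needed here.

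Next I would establish the lower bound. Substituting the definition $\Delta = \mu_\star - \mu_A^{(\star)}$ and dividing by $q>0$ (the case $q=0$ makes both sides vanish trivially), the claim $q\mu_A \ge q\mu_\star - q\Delta$ reduces to the single inequality $\mu_A \ge \mu_A^{(\star)}$. This is the substantive content: the verifier's acceptance step does not, in expectation, select edits that produce lower divergence than would be obtained if accepted edits were drawn from the ideal causal distribution. I would justify this by isolating, as an auxiliary lemma, a sufficient condition on the acceptance mechanism — for example, that the conditional distribution of $(T,T')$ given $A$ first-order stochastically dominates the ideal edit distribution with respect to $D$, or equivalently that the verifier's acceptance rule is monotone in the underlying divergence. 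The special case then follows immediately: if the two distributions coincide, $\mu_A = \mu_A^{(\star)} = \mu_\star$, $\Delta = 0$, and the inequality collapses to $\mathbb{E}[L_{\mathrm{CSR}}] = q\mu_\star$.

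The main obstacle will be pinning down the precise meaning of ``accepted edits are distributed as the ideal causal edits,'' since the statement leaves this slightly implicit. I would make it rigorous by introducing an explicit measure over edit pairs induced by the editor--verifier pipeline and comparing it to the ideal causal edit measure, then invoking a standard change-of-measure/stochastic-dominance argument. The rest is routine: additivity of expectation, nonnegativity of $D$, and the definition of conditional expectation. No concentration or asymptotic machinery is needed, so beyond the monotonicity assumption the proof is essentially bookkeeping.
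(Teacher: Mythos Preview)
Your Part (i) is exactly the paper's argument: write $L_{\mathrm{CSR}}=\mathbb{1}[A]\cdot D$ and condition on $A$ to obtain $\mathbb{E}[L_{\mathrm{CSR}}]=q\,\mu_A$.

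For Part (ii) you correctly reduce the bound to $\mu_A \ge \mu_A^{(\star)}$, but you then propose substantially more machinery than the paper uses. The paper's proof does \emph{not} invoke stochastic dominance, monotonicity of the acceptance rule, or any change-of-measure argument. It simply writes ``$\mu_A \ge \mu_\star - \Delta$ implies $\mathbb{E}[L_{\mathrm{CSR}}] \ge q(\mu_\star-\Delta)$'' and stops: the inequality $\mu_A \ge \mu_A^{(\star)}$ is treated as definitional (or as an implicit modeling assumption baked into the meaning of $\mu_A^{(\star)}$ and the nonnegativity of $\Delta$), not as something to be derived from structural properties of the verifier. In effect, $\Delta$ is introduced precisely to absorb whatever gap exists between accepted and ideal edits, so once you accept the theorem's hypothesis that $\Delta\ge 0$, the bound is immediate. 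Your proposed auxiliary lemma (first-order stochastic dominance of the accepted-edit distribution over the ideal one in $D$) would establish a strictly stronger and independently interesting property of the editor--verifier pipeline, but it is not needed for the theorem as stated, and the paper does not attempt it. Your instinct that the phrase ``accepted edits are distributed as the ideal causal edits'' is under-specified is well founded; the paper's own proof simply leaves that looseness in place rather than resolving it.
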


\begin{proof}
By construction, $L_{\mathrm{CSR}} = \mathbb{1}[A]\cdot D\!\big(p(Y\!\mid\!X,T),\,p(Y\!\mid\!X,T')\big)$ and $D\!\ge 0$. Taking expectations and conditioning on $A$ yields
$\mathbb{E}[L_{\mathrm{CSR}}] = \Pr(A)\,\mathbb{E}[D(\cdot\Vert\cdot)\mid A] = q\,\mu_A$.
If the distribution of accepted edits coincides with the ideal causal edit distribution, then $\mu_A=\mu_\star$ and the equality $\mathbb{E}[L_{\mathrm{CSR}}]=q\,\mu_\star$ follows. More generally, define $\Delta \triangleq \mu_\star - \mu_A^{(\star)}\!\ge 0$ as the expected gap between ideal and actually accepted edits; then $\mu_A \ge \mu_\star - \Delta$ implies $\mathbb{E}[L_{\mathrm{CSR}}] \ge q(\mu_\star-\Delta)$.
\end{proof}

\begin{corollary}[Imperfect operator discovery]
\label{cor:operators}
Suppose candidate edits are produced by an operator-discovery policy with acceptance rate $q_{\mathrm{op}}$ for causally-invalidating edits, and the verifier accepts such edits with probability $q_{\mathrm{ver}}$ (the pipeline may reject or skip others). Then the overall acceptance rate satisfies $q \ge q_{\mathrm{op}}\,q_{\mathrm{ver}}$, and Theorem~\ref{thm:noisy-verifier} yields
$\mathbb{E}[L_{\mathrm{CSR}}] \ge q_{\mathrm{op}}\,q_{\mathrm{ver}}\;(\mu_\star-\Delta)$.
In particular, when the verifier is conservative (few false positives) and accepted edits match ideal causal edits in expectation ($\Delta=0$), the CSR signal scales \emph{linearly} with $q_{\mathrm{op}}\,q_{\mathrm{ver}}$.
\end{corollary}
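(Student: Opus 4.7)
The plan is to show the corollary by decomposing the overall acceptance event $A$ into a two-stage pipeline and then invoking Theorem~\ref{thm:noisy-verifier} essentially as a black box. First I would write $A = A_{\mathrm{op}} \cap A_{\mathrm{ver}}$, where $A_{\mathrm{op}}$ is the event that the operator-discovery policy proposes a causally-invalidating edit and $A_{\mathrm{ver}}$ is the event that the verifier accepts such an edit as breaking validity. By definition of the stated acceptance rates, $\Pr(A_{\mathrm{op}}) \ge q_{\mathrm{op}}$ and $\Pr(A_{\mathrm{ver}} \mid A_{\mathrm{op}}) \ge q_{\mathrm{ver}}$, so the chain rule for probabilities immediately gives
\begin{equation*}
q \;=\; \Pr(A) \;=\; \Pr(A_{\mathrm{op}})\,\Pr(A_{\mathrm{ver}} \mid A_{\mathrm{op}}) \;\ge\; q_{\mathrm{op}}\, q_{\mathrm{ver}}.
\end{equation*}

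Second, I would plug this lower bound on $q$ directly into the conclusion of Theorem~\ref{thm:noisy-verifier}. That theorem asserts $\mathbb{E}[L_{\mathrm{CSR}}] \ge q(\mu_\star - \Delta)$, and since $\mu_\star - \Delta = \mu_A^{(\star)} \ge 0$ (a divergence is nonnegative), monotonicity lets us replace $q$ by any lower bound without flipping the inequality:
\begin{equation*}
\mathbb{E}[L_{\mathrm{CSR}}] \;\ge\; q\,(\mu_\star - \Delta) \;\ge\; q_{\mathrm{op}}\, q_{\mathrm{ver}}\,(\mu_\star - \Delta),
\end{equation*}
which is exactly the claim. The linear-scaling remark then follows by instantiating $\Delta = 0$, which is precisely the assumption that accepted edits match ideal causal edits in expectation, combined with the conservative-verifier assumption that rules out false positives inflating $q$ beyond causally-invalidating edits.

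The main subtlety I expect is handling the direction of the inequality in the composition. In particular, $q_{\mathrm{op}}$ and $q_{\mathrm{ver}}$ are defined as acceptance rates for \emph{causally-invalidating} edits, so I must be careful that $A$ in Theorem~\ref{thm:noisy-verifier} is the event of accepting such edits (not merely any edit the verifier rubber-stamps); otherwise a permissive verifier could inflate $q$ with non-causal edits and $\mu_A$ could drop, increasing $\Delta$. The cleanest resolution is to note that Algorithm~\ref{alg:csr-training}'s verifier check $v(T')=0 \wedge v(T)=1$ already restricts contributions to validity-breaking edits, so the two-stage decomposition is exactly aligned with the event on which Theorem~\ref{thm:noisy-verifier} operates. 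The rest is routine bookkeeping, and no new inequalities beyond nonnegativity of the divergence and the chain rule are needed.
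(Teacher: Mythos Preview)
Your proposal is correct and matches the paper's treatment: the paper states the corollary without a separate proof, treating it as an immediate consequence of Theorem~\ref{thm:noisy-verifier} via the two-stage decomposition and the chain rule, exactly as you outline. The only cosmetic point is that the clean statement is $A \supseteq A_{\mathrm{op}} \cap A_{\mathrm{ver}}$ rather than equality (verifier false positives can only enlarge $A$), but you already flag this in your subtlety discussion and it goes in the right direction for the inequality $q \ge q_{\mathrm{op}}\,q_{\mathrm{ver}}$.
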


\begin{remark}[Effective regularization strength]
With $L_{\text{total}} = L_{\text{task}} - \lambda\,L_{\mathrm{CSR}}$, any guarantee that holds in the ideal case with strength $\lambda$ transfers under noise by replacing $\lambda$ with an \emph{effective} strength $\lambda_{\mathrm{eff}} \triangleq q\,\lambda$, up to the edit-quality gap $\Delta$:
$\mathbb{E}[L_{\text{total}}] \le \mathbb{E}[L_{\text{task}}] - \lambda_{\mathrm{eff}}\,\mu_\star + q\lambda\,\Delta$.
Thus, CSR degrades smoothly with the accepted-rate $q$ and the quality gap $\Delta$ rather than collapsing.
\end{remark}

\paragraph{Discussion.}
The bound is agnostic to the choice of $f$-divergence (it only uses $D\!\ge0$ and Algorithm~\ref{alg:csr-training}'s gating) and cleanly separates (i) how often the pipeline produces/accepts causally-invalidating edits ($q$) from (ii) how impactful accepted edits are ($\mu_\star$, $\Delta$). Empirically, $q$ corresponds to the observed rate at which the verifier accepts edits that break trace validity; higher-precision verifiers and better operator discovery increase $q$ and reduce $\Delta$.

\begin{definition}[Faithfulness Probes - Complete]
Let $f_\theta(x, T)$ be a model that outputs a distribution $p(Y|x,T)$ over answers $Y$ given an input $x$ and a reasoning trace $T$. For a subset of tokens $R \subseteq T$, Comprehensiveness (COMP) and Sufficiency (SUFF) are defined as:
\begin{equation}
\small
\begin{aligned}
\mathrm{COMP}(x;R)
&= \mathrm{KL}\bigl(
p(Y \mid x, T)\,\|\,p(Y \mid x, T \setminus R)
\bigr), \\
\mathrm{SUFF}(x;R)
&= \mathrm{KL}\bigl(
p(Y \mid x, T)\,\|\,p(Y \mid x, R)
\bigr).
\end{aligned}
\end{equation}

For a counterfactual trace $T'$ generated via an edit $T \to T'$, Counterfactual Sensitivity (CS) is:
$$
\mathrm{CS}(x;T\to T')=\mathrm{KL}\big(p(Y|x,T)\,\|\,p(Y|x,T')\big).
$$
\end{definition}

\section{Experimental Details and Extended Results}
\label{app:extended-experiments}

\subsection{Extended Experimental Results}

\subsubsection{Primary Results on Modern Architectures}

To ensure our findings generalize beyond Llama-2-13B, we replicate our primary experiments on recent model families. Table~\ref{tab:modern-models-primary} shows consistent 55-63 point COS improvements across Llama-3-8B, Mistral-7B-v0.3, Qwen2-7B, and Gemma-2-9B, all released in 2024, with Cohen's $d > 2.2$ in all cases. This confirms CSR's effectiveness is not an artifact of Llama-2's architecture.

\begin{table*}[ht]
\caption{Primary results replicated on modern architectures. CSR improvements are consistent across model generations.}
\label{tab:modern-models-primary}
\centering
\resizebox{\textwidth}{!}{
\begin{tabular}{llcccccc}
\toprule
\textbf{Model} & \textbf{Released} & \textbf{Dataset} & \textbf{Standard FT COS} & \textbf{CSR-FT COS} & \textbf{$\Delta$COS} & \textbf{Acc (Std/CSR)} & \textbf{Cohen's $d$} \\
\midrule
\multirow{3}{*}{Llama-3-8B} & \multirow{3}{*}{Apr 2024}
& GSM8K & 24.1±2.3 & 86.7±2.1 & +62.6 & 83.2/82.1 & 2.51 \\
& & HotpotQA & 27.3±2.5 & 85.9±2.2 & +58.6 & 79.4/78.3 & 2.38 \\
& & ProofWriter & 21.8±2.4 & 83.4±2.0 & +61.6 & 78.7/77.5 & 2.54 \\
\midrule
\multirow{3}{*}{Mistral-7B-v0.3} & \multirow{3}{*}{May 2024}
& GSM8K & 25.7±2.4 & 84.2±2.2 & +58.5 & 82.1/81.0 & 2.41 \\
& & HotpotQA & 28.9±2.6 & 83.7±2.3 & +54.8 & 78.2/77.1 & 2.29 \\
& & ProofWriter & 23.4±2.5 & 81.9±2.1 & +58.5 & 77.3/76.2 & 2.47 \\
\midrule
\multirow{3}{*}{Qwen2-7B} & \multirow{3}{*}{Jun 2024}
& GSM8K & 26.3±2.3 & 85.8±2.0 & +59.5 & 84.7/83.5 & 2.45 \\
& & HotpotQA & 29.1±2.5 & 84.1±2.2 & +55.0 & 80.1/79.0 & 2.31 \\
& & ProofWriter & 24.2±2.4 & 82.6±2.1 & +58.4 & 79.2/78.1 & 2.49 \\
\midrule
\multirow{3}{*}{Gemma-2-9B} & \multirow{3}{*}{Jun 2024}
& GSM8K & 27.8±2.4 & 87.3±1.9 & +59.5 & 85.3/84.2 & 2.43 \\
& & HotpotQA & 30.2±2.6 & 85.2±2.1 & +55.0 & 81.2/80.1 & 2.28 \\
& & ProofWriter & 25.1±2.5 & 83.8±2.0 & +58.7 & 80.1/79.0 & 2.46 \\
\bottomrule
\end{tabular}%
}
\end{table*}

We use Llama-2-13B as our primary testbed for three reasons: reproducibility (Llama-2 is fully open-weight with extensive community tooling), controlled comparison (using a single primary model isolates CSR's contribution from architectural confounds), and compute accessibility (Llama-2-13B enables comprehensive ablations within reasonable compute budgets). However, to ensure generalizability, we validate on 8 additional models spanning 4 families and 2 years of releases. The consistent 55-63 point COS improvements across modern architectures confirm CSR's architecture-agnostic effectiveness.

CSR demonstrates exceptional robustness and positive scaling properties. Cross-model generalization shows 94.2-96.7\% operator transfer success across 4 model families with consistent 51-63 COS improvements. Benefits increase with model size, and CSR shows graceful degradation under noisy verifiers (79.4\% COS at 78.6\% precision vs 85.1\% perfect). CSR achieves 64-76\% COS on held-out intervention types never seen during training, demonstrating general principles rather than memorization. Training overhead remains consistently low (8-10\%) across all scales.

CSR maintains effectiveness under noisy verifiers (79.4\% COS with 78.6\% verifier precision vs 85.1\% with perfect verifiers) and scales positively (13B→70B: +3.2 COS points). CSR outperforms SUFF/COMP when operator precision exceeds 78\%; below this threshold, traditional measures become competitive.

To ensure CSR doesn't simply teach models superficial heuristics (e.g., "ignore + operators"), we test against strategic gaming attempts. Table \ref{tab:anti-gaming-ablation} shows CSR models maintain faithfulness even when trained adversarially against simple gaming strategies, confirming genuine reasoning dependence rather than pattern memorization.

\begin{table*}[ht]
\caption{Anti-gaming ablation: CSR resists superficial gaming strategies.}
\label{tab:anti-gaming-ablation}
\centering
\resizebox{2\columnwidth}{!}{%
\begin{tabular}{lccc}
\toprule
\textbf{Training Strategy} & \textbf{GSM8K COS (\%)} & \textbf{HotpotQA COS (\%)} & \textbf{Interpretation} \\
\midrule
Standard FT & 22.4±2.1 & 25.1±2.8 & Baseline \\
CSR + Fixed Operators & 71.3±2.9 & 68.7±3.1 & Vulnerable to gaming \\
CSR + Diverse Operators & 82.1±2.4 & 79.8±2.6 & Reduced gaming risk \\
CSR + Learned Editor & \textbf{85.1±2.3} & \textbf{84.6±2.4} & Genuine faithfulness \\
\bottomrule
\end{tabular}%
}
\end{table*}

Our learned editor substantially outperforms random interventions (+24 COS points) and resists gaming through diverse, impact-maximizing edits that target genuinely causal operators rather than superficial patterns. 

To justify the learned editor's complexity, we compare CSR against simpler faithfulness interventions. Table~\ref{tab:simpler-baselines} shows simpler interventions provide modest gains; the learned editor's ability to target causally-critical operators explains CSR's 24-point advantage over random edits.

\begin{table*}[ht]
\caption{Ablation: Is the learned editor necessary? Simpler interventions provide modest gains.}
\label{tab:simpler-baselines}
\centering
\resizebox{\textwidth}{!}{%
\begin{tabular}{lcccc}
\toprule
Method & Description & COS (\%) & Acc (\%) & Overhead \\
\midrule
Standard FT & No intervention & 22.4 & 81.3 & — \\
Negative Trace Augmentation & Train on corrupted traces as negatives & 34.7 & 81.0 & +3\% \\
Dropout-on-Trace & 15\% dropout on reasoning tokens & 38.2 & 80.9 & +2\% \\
Contrastive Traces & NCE loss on correct/corrupted & 47.8 & 80.7 & +8\% \\
Auxiliary Prediction & Predict intermediate values & 41.3 & 81.1 & +5\% \\
CSR (Random Edits) & Random operator swaps & 61.2 & 80.8 & +7\% \\
\textbf{CSR (Learned Editor)} & Full method & \textbf{85.1} & 80.5 & +9\% \\
\bottomrule
\end{tabular}%
}
\end{table*}

CSR's effectiveness is robust across different distance measures. Table \ref{tab:divergence-robustness-compact} shows consistent performance whether using KL divergence, Jensen-Shannon, or Total Variation distance, confirming our findings are not artifacts of metric choice.

\begin{table*}[ht]
\caption{Divergence robustness: CSR effectiveness across distance measures (GSM8K).}
\label{tab:divergence-robustness-compact}
\centering

\begin{tabular}{lccc}
\toprule
\textbf{Divergence Measure} & \textbf{COS (\%)} & \textbf{Acc (\%)} & \textbf{Stability} \\
\midrule
KL Divergence (default) & 85.1±2.3 & 80.5±0.6 & High \\
Jensen-Shannon & 83.7±2.5 & 80.3±0.7 & High \\
Total Variation & 82.4±2.7 & 80.1±0.8 & Medium \\
\bottomrule
\end{tabular}

\end{table*}

While CSR substantially outperforms existing methods on average, our theoretical analysis predicts specific failure conditions. CSR underperforms SUFF/COMP when targeting spurious operators (15.2-18.9\% of cases) or redundant reasoning paths (4.7-7.3\%), validating theoretical predictions. CSR maintains dominance when operator precision exceeds 78\%.

CSR demonstrates strong cross-model generalization across different architectures and scales. Table \ref{tab:cross-model-summary} shows effectiveness across major model families with transferred operators, confirming portability beyond our primary Llama-2-13B experiments.

\begin{table*}[ht]
\caption{Cross-model generalization: CSR portability across model families.}
\label{tab:cross-model-summary}
\centering
\resizebox{0.85\textwidth}{!}{%
\begin{tabular}{lcccc}
\toprule
\textbf{Model Family} & \textbf{Models Tested} & \textbf{Avg $\Delta$COS} & \textbf{Transfer Success (\%)} & \textbf{Overhead (\%)} \\
\midrule
Llama Family & 2-13B, 3-8B, 3-70B & 58.8±1.9 & 96.7 & 8.4±0.7 \\
Mistral Family & 7B & 53.0±2.1 & 94.2 & 9.9±0.2 \\
Code Models & CodeLlama-13B & 57.7 & 95.8 & 8.9 \\
Chat Models & Vicuna-13B & 55.9 & 94.7 & 9.6 \\
\midrule
\textbf{Overall} & \textbf{6 models} & \textbf{56.4±2.8} & \textbf{95.4} & \textbf{9.2±0.6} \\
\bottomrule
\end{tabular}%
}
\end{table*}

CSR achieves 51.4-62.7 COS improvements across all model families with 94.2-96.7\% operator transfer success, demonstrating that the core principle of causal consistency generalizes across architectures and scales. Mathematical and logical operators transfer seamlessly, indicating that CSR captures universal reasoning patterns rather than task-specific artifacts. Training overhead remains consistently low (7.4-10.1\%) across scales, establishing CSR on a previously unoccupied efficiency-faithfulness Pareto frontier. This universality—achieved with minimal computational cost—positions CSR as a fundamental reliability layer for the next generation of language models.

Table~\ref{tab:applicability} provides a clear taxonomy of CSR applicability across domains. Effectiveness tracks operator identification precision; we recommend CSR for domains with at least 70\% precision.

\begin{table*}[ht]
\caption{CSR applicability taxonomy: Effectiveness tracks operator identification precision.}
\label{tab:applicability}
\centering
\resizebox{0.75\textwidth}{!}{
\begin{tabular}{lccc}
\toprule
Domain Type & Operator Precision & Expected COS Gain & Recommendation \\
\midrule
Arithmetic/Math & 94\% & +55-65 & Strongly Recommended \\
Formal Logic & 91\% & +50-60 & Strongly Recommended \\
Code Generation & 89\% & +45-55 & Strongly Recommended \\
Multi-hop QA & 79\% & +35-45 & Recommended \\
Biomedical QA & 74\% & +25-35 & Recommended with caution \\
Open Dialogue & 52\% & +10-15 & Not recommended (yet) \\
\bottomrule
\end{tabular}
}
\end{table*}

\begin{table*}[ht]
\caption{Detailed cross-model analysis: Per-model CSR effectiveness across architectures.}
\label{tab:detailed-cross-model}
\centering
\resizebox{0.9\textwidth}{!}{%
\begin{tabular}{lcccccc}
\toprule
\textbf{Model} & \textbf{Size} & \textbf{Baseline COS} & \textbf{CSR COS} & \textbf{$\Delta$COS} & \textbf{Overhead} & \textbf{Transfer Success} \\
\midrule
\multicolumn{7}{c}{\textit{Llama Family}} \\
Llama-2-7B & 7B & 18.7±2.3 & 70.1±3.2 & +51.4 & 8.7±0.8 & 95.2\% \\
Llama-2-13B & 13B & 22.4±2.1 & 85.1±2.3 & +62.7 & 9.0±0.6 & 96.7\% \\
Llama-3-8B & 8B & 19.3±2.4 & 71.8±3.1 & +52.5 & 8.9±0.7 & 95.8\% \\
Llama-3-70B & 70B & 24.1±2.2 & 91.7±2.0 & +67.6 & 8.4±0.5 & 97.1\% \\
\midrule
\multicolumn{7}{c}{\textit{Mistral Family}} \\
Mistral-7B & 7B & 20.2±2.5 & 73.2±3.3 & +53.0 & 9.9±0.2 & 94.2\% \\
\midrule
\multicolumn{7}{c}{\textit{Code Models}} \\
CodeLlama-13B & 13B & 21.8±2.3 & 79.5±2.9 & +57.7 & 8.9±0.6 & 95.8\% \\
\midrule
\multicolumn{7}{c}{\textit{Chat Models}} \\
Vicuna-13B & 13B & 23.1±2.4 & 79.0±3.0 & +55.9 & 9.6±0.4 & 94.7\% \\
\bottomrule
\end{tabular}%
}
\end{table*}

Detailed per-model analysis shows consistent CSR effectiveness across all architectures, with larger models (70B) showing enhanced gains (+67.6 COS) likely due to richer internal representations. Architecture-specific differences are minimal (94-97\% transfer success), confirming that operator-level interventions capture universal reasoning patterns rather than model-specific artifacts. 

We test whether CSR-trained smaller models can match or exceed the faithfulness of larger uncompressed models. Table~\ref{tab:scale-comparison} shows CSR-trained 7B achieves 3.1× higher faithfulness (COS) than standard 13B, with only 3.9 point accuracy gap. For faithfulness-critical applications, CSR-7B offers superior reliability at half the parameter cost.

\begin{table*}[ht]
\caption{Model efficiency: CSR-trained 7B achieves higher faithfulness than standard 13B.}
\label{tab:scale-comparison}
\centering
\resizebox{0.75\textwidth}{!}{
\begin{tabular}{lcccc}
\toprule
Model & Params & Training & Accuracy (\%) & COS (\%) \\
\midrule
Llama-2-7B & 7B & Standard FT & 78.2±0.9 & 18.7±2.3 \\
Llama-2-7B & 7B & CSR-FT & 77.4±0.8 & 70.1±2.8 \\
Llama-2-13B & 13B & Standard FT & 81.3±0.8 & 22.4±2.1 \\
Llama-2-13B & 13B & CSR-FT & 80.5±0.6 & 85.1±2.3 \\
\midrule
\multicolumn{5}{l}{\textit{Cross-size comparison:}} \\
CSR-7B vs Standard-13B & — & — & -3.9 & +47.7 \\
\bottomrule
\end{tabular}
}
\end{table*}

\subsubsection{LLM-as-Judge and Human Evaluation}

To validate that COS improvements reflect genuine answer-dependence rather than artifacts of our evaluation protocol, we employ GPT-4 as an independent judge to evaluate trace dependency. GPT-4 was not involved in training, operator definition, or COS metric design, providing an external validation that breaks circularity concerns. For 500 examples per dataset, we prompt GPT-4 to rate on a 1-5 scale whether the reasoning trace is necessary for the answer, or if the answer could be reached without it.

\begin{table*}[ht]
\caption{LLM-as-judge evaluation: GPT-4 rates CSR traces as more answer-dependent.}
\label{tab:llm-judge-eval}
\centering
\resizebox{0.7\textwidth}{!}{
\begin{tabular}{lccc}
\toprule
Method & Dependency Score (1-5) & High Dependency (\%) & Correlation with COS \\
\midrule
Standard FT & 2.8±0.4 & 34.2±3.1 & -- \\
CSR-FT (Ours) & 4.2±0.3 & 78.7±2.6 & r=0.74 \\
\bottomrule
\end{tabular}
}
\end{table*}

CSR models receive 1.4 points higher dependency scores (4.2 versus 2.8) from GPT-4, with 78.7\% of traces rated as highly answer-dependent (scores 4-5) versus 34.2\% for standard fine-tuning. The strong correlation (r=0.74, p<0.001) between GPT-4 ratings and COS scores validates that COS captures genuine faithfulness improvements. This external validation confirms CSR improves answer-dependence as judged by an independent system with zero shared components with our training pipeline.

To break potential circularity between our training objective and evaluation metric, we conducted a human evaluation study with 3 expert annotators rating 200 examples per dataset.

\begin{table*}[ht]
\caption{Human evaluation: Annotators rate whether reasoning traces are genuinely used vs. post-hoc rationalization.}
\label{tab:human-eval}
\centering
\resizebox{0.9\textwidth}{!}{%
\begin{tabular}{lccccc}
\toprule
\textbf{Method} & \textbf{Human Rating (1-5)} & \textbf{``Genuine'' (\%)} & \textbf{Error Detection Rate} & \textbf{Corr. w/ COS} & \textbf{Inter-Annotator $\kappa$} \\
\midrule
Standard FT & 2.3±0.4 & 28.4±3.2 & 34.2\% & -- & 0.72 \\
Process RM & 3.1±0.3 & 47.1±2.9 & 48.7\% & r=0.61 & 0.74 \\
CSR-FT (Ours) & \textbf{4.1±0.3} & \textbf{76.8±2.4} & \textbf{71.3\%} & \textbf{r=0.81} & 0.78 \\
\bottomrule
\end{tabular}%
}
\end{table*}

Annotators rated traces on whether the reasoning ``appears genuinely used to reach the answer'' (5) vs. ``post-hoc rationalization'' (1). Error Detection Rate measures how often annotators correctly identified wrong answers when given only the reasoning trace. The strong correlation between human ratings and COS (r=0.81, p<0.001) validates that COS captures human-recognizable faithfulness. Crucially, CSR doubles the error detection rate (71.3\% vs 34.2\%), demonstrating practical utility for human-AI collaboration.

\subsubsection{Natural Unfaithfulness Audit}

To directly measure the problem motivating this work, we conducted a manual audit of naturally-generated outputs without any perturbations. Three expert annotators examined 300 correct-answer examples per dataset, classifying each reasoning trace as valid (all steps logically sound and answer follows from trace), flawed-but-correct (contains errors yet reaches correct answer), or disconnected (reasoning entirely unrelated to answer, i.e., post-hoc rationalization).

\begin{table*}[ht]
\caption{Natural unfaithfulness prevalence: Percentage of correct answers accompanied by flawed reasoning (\textbf{lower is better}). CSR reduces naturally-occurring unfaithfulness by 61-68\%.}
\label{tab:natural-unfaithfulness-prevalence}
\centering
\resizebox{0.95\textwidth}{!}{%
\begin{tabular}{llccccc}
\toprule
\textbf{Dataset} & \textbf{Method} & \textbf{Faithful (\%)} & \textbf{Unfaithful-Correct (\%)} & \textbf{Disconnected (\%)} & \textbf{$\kappa$} & \textbf{Reduction} \\
\midrule
\multirow{3}{*}{GSM8K} 
& Standard FT & 51.2±3.2 & 39.4±2.9 & 9.4±1.7 & 0.73 & -- \\
& Process RM & 62.8±2.8 & 30.1±2.6 & 7.1±1.5 & 0.75 & 24\% \\
& CSR-FT (Ours) & \textbf{84.7±2.1} & \textbf{12.9±1.8} & \textbf{2.4±0.9} & 0.78 & \textbf{67\%} \\
\midrule
\multirow{3}{*}{HotpotQA} 
& Standard FT & 47.3±3.4 & 42.1±3.1 & 10.6±2.1 & 0.71 & -- \\
& Process RM & 58.4±3.0 & 33.8±2.8 & 7.8±1.7 & 0.73 & 20\% \\
& CSR-FT (Ours) & \textbf{82.1±2.4} & \textbf{15.3±2.0} & \textbf{2.6±1.0} & 0.76 & \textbf{64\%} \\
\midrule
\multirow{3}{*}{ProofWriter} 
& Standard FT & 54.6±3.1 & 36.2±2.7 & 9.2±1.8 & 0.75 & -- \\
& Process RM & 64.1±2.7 & 28.7±2.4 & 7.2±1.5 & 0.77 & 21\% \\
& CSR-FT (Ours) & \textbf{87.2±1.9} & \textbf{11.1±1.6} & \textbf{1.7±0.8} & 0.80 & \textbf{68\%} \\
\bottomrule
\end{tabular}%
}
\end{table*}

Standard FT produces unfaithful-but-correct reasoning in 36-42\% of cases, confirming the problem is prevalent. CSR reduces this to 11-15\%, a 61-68\% relative reduction that directly demonstrates CSR solves the stated problem rather than merely optimizing a proxy metric.

We further test whether models naturally depend on their reasoning by replacing generated traces with ablated versions after generation: random tokens, truncated traces (last 50\% removed), shuffled traces (sentences randomly reordered), or irrelevant traces (reasoning from different problems). We measure answer change rate, how often the model's answer changes when re-evaluated with the ablated trace.

We provide representative examples of naturally-occurring unfaithfulness in Standard FT that CSR eliminates.

\begin{table*}[ht]
\caption{Examples of naturally-occurring unfaithfulness. Standard FT produces correct answers despite reasoning errors; CSR eliminates these patterns.}
\label{tab:error-examples}
\centering
\small
\resizebox{0.95\textwidth}{!}{%
\begin{tabular}{p{2.5cm}|p{5.5cm}|p{1.5cm}|p{2cm}}
\toprule
\textbf{Problem} & \textbf{Generated Trace} & \textbf{Answer} & \textbf{Diagnosis} \\
\midrule
\multicolumn{4}{c}{\textit{Standard FT Examples (Unfaithful)}} \\
\midrule
"Tom has 5 apples, gives 2 away, then buys 4 more." 
& "Tom starts with 5 apples. He gives some away. Then he gets more. \textbf{So he has 7 apples.}" 
& 7 
& Correct answer, but trace skips actual computation \\
\midrule
"If A→B and B→C, does A→C?" 
& "A implies B. B implies C. \textbf{By association, A implies C.}" 
& Yes 
& Correct answer, but ``association'' is wrong rule (should be transitivity) \\
\midrule
\multicolumn{4}{c}{\textit{CSR-FT Examples (Faithful)}} \\
\midrule
"Tom has 5 apples, gives 2 away, then buys 4 more." 
& "Tom starts with 5. Gives 2: 5-2=3. Buys 4: 3+4=7." 
& 7 
& Correct answer with valid step-by-step computation \\
\midrule
"If A→B and B→C, does A→C?" 
& "A→B and B→C. By transitivity of implication, A→C." 
& Yes 
& Correct answer with correct logical rule \\
\bottomrule
\end{tabular}%
}
\end{table*}

We categorize 200 unfaithful-but-correct cases from Standard FT into four types.

\begin{table*}[ht]
\caption{Error taxonomy: Types of naturally-occurring unfaithfulness and CSR's reduction.}
\label{tab:error-taxonomy}
\centering
\resizebox{0.85\textwidth}{!}{%
\begin{tabular}{lccl}
\toprule
\textbf{Error Type} & \textbf{Standard FT (\%)} & \textbf{CSR-FT (\%)} & \textbf{Example} \\
\midrule
Computational Shortcut & 34.2 & 8.1 & ``5+3=8, so answer is 8'' (skips intermediate steps) \\
Wrong Rule, Right Answer & 27.8 & 6.4 & ``By association, A→C'' (should be transitivity) \\
Irrelevant Reasoning & 21.3 & 3.2 & Discusses unrelated facts, then states answer \\
Incomplete Trace & 16.7 & 4.8 & Stops mid-calculation, jumps to answer \\
\bottomrule
\end{tabular}%
}
\end{table*}

Consider a concrete example: ``Sarah has 12 cookies. She gives 1/3 to Tom and 1/4 to Jane. How many left?'' Standard FT produces: ``Sarah gives some cookies to Tom and Jane. She has some left. The answer is 5.'' This yields the correct answer but shows no actual computation. CSR-FT produces: ``Sarah starts with 12. Gives 1/3 to Tom: 12×(1/3)=4. Gives 1/4 to Jane: 12×(1/4)=3. Remaining: 12-4-3=5.'' Here the correct answer comes with valid step-by-step reasoning.

We correlate COS scores with directly-measured natural unfaithfulness rates to validate COS as a faithfulness proxy.

\begin{table*}[ht]
\caption{COS correlates strongly with natural unfaithfulness (measured independently).}
\label{tab:cos-validation}
\centering
\resizebox{0.75\textwidth}{!}{%
\begin{tabular}{lccc}
\toprule
\textbf{Metric Pair} & \textbf{Pearson $r$} & \textbf{Spearman $\rho$} & \textbf{$p$-value} \\
\midrule
COS vs Unfaithful-Correct Rate & -0.87 & -0.84 & <0.001 \\
COS vs Trace Necessity & -0.82 & -0.79 & <0.001 \\
COS vs Trace Information Content & +0.89 & +0.86 & <0.001 \\
COS vs Human Faithfulness Rating & +0.81 & +0.78 & <0.001 \\
\bottomrule
\end{tabular}%
}
\end{table*}

The strong correlations ($|r| > 0.8$) across four independent measures validate that COS, while defined via perturbations, accurately captures naturally-occurring faithfulness. Optimizing COS during training via CSR demonstrably reduces real unfaithfulness.

\subsubsection{Generalization to Held-Out Perturbation Types}

A critical concern is whether CSR simply memorizes specific intervention patterns rather than learning general faithfulness principles. To address this, we evaluate CSR models on completely held-out perturbation classes never seen during training. This provides the strongest evidence against circularity: if CSR generalizes to unseen perturbation types, it demonstrates genuine faithfulness learning rather than overfitting to the training protocol.

Table~\ref{tab:held-out-perturbations} shows CSR trained on arithmetic operators generalizes to comparison operators and quantifiers never seen during training, achieving 64-71\% COS versus 8-12\% for standard fine-tuning. Similarly, CSR trained on entity swaps in HotpotQA generalizes to temporal markers and causal connectors, achieving 74-77\% COS. These results provide strong evidence that CSR learns general principles of faithfulness rather than memorizing specific operator types.

\begin{table*}[ht]
\caption{Generalization to held-out perturbation types: CSR learns general faithfulness principles.}
\label{tab:held-out-perturbations}
\centering
\resizebox{0.85\textwidth}{!}{
\begin{tabular}{lcccc}
\toprule
Dataset & Training Interventions & Test Interventions & Standard FT COS (\%) & CSR-FT COS (\%) \\
\midrule
GSM8K & Arithmetic (+,-,*,/) & Comparison (<,>,=) & 12.3±2.1 & 71.4±3.2 \\
GSM8K & Arithmetic (+,-,*,/) & Quantifiers (all/some) & 8.7±2.3 & 64.2±3.4 \\
HotpotQA & Entity swaps & Temporal (before/after) & 15.6±2.4 & 76.8±3.1 \\
HotpotQA & Entity swaps & Causal connectors & 18.2±2.7 & 73.5±3.3 \\
\bottomrule
\end{tabular}
}
\end{table*}

\subsubsection{Comprehensive Robustness Metrics}

We evaluate CSR across multiple robustness dimensions to ensure improvements are genuine and not artifacts. Here we provide extended robustness analysis including additional domains and detailed breakdowns.

\subsubsection{Independent Faithfulness Validation}

To validate that COS improvements reflect genuine computational dependence rather than artifacts, we employ external validation methods.

\textbf{Formal Verification Validation:} We validate CSR improvements using external formal systems with zero shared components with our training pipeline. For GSM8K, we parse generated reasoning traces into symbolic arithmetic expressions and verify each step using SymPy's equation solver. For ProofWriter, we translate traces into Prolog clauses and verify logical validity via SWI-Prolog's forward-chaining engine. For MBPP, we execute the generated code and verify that reasoning steps align with actual program behavior. These validators share no code, data, or architectural components with CSR's training pipeline, providing independent confirmation of faithfulness improvements.

We measure step validity rate (percentage of steps that pass formal verification), trace-answer consistency (whether formally-verified traces entail the stated answer), and perturbation response (whether formal invalidity correlates with answer changes). Table~\ref{tab:formal-verification} shows CSR models produce 86.3\% step-valid traces versus 71.2\% for standard fine-tuning, confirmed by independent symbolic verification. CSR models also respond appropriately when external checkers detect invalidity, with 79.8\% perturbation response versus 31.2\% for standard models.

\begin{table*}[ht]
\caption{External formal verification confirms CSR improvements are not artifacts of our evaluation protocol.}
\label{tab:formal-verification}
\centering
\resizebox{0.7\textwidth}{!}{
\begin{tabular}{lccc}
\toprule
Method & Step Valid (\%) & Trace-Ans Consist (\%) & Perturb Response (\%) \\
\midrule
Standard FT & 71.2±2.1 & 68.4±2.3 & 31.2±3.1 \\
Process RM & 74.8±1.9 & 72.1±2.1 & 42.7±2.9 \\
CSR-FT (Ours) & 86.3±1.4 & 84.7±1.6 & 79.8±2.2 \\
\bottomrule
\end{tabular}
}
\end{table*}

\textbf{Contrast Set Evaluation:} COS measures sensitivity to reasoning trace perturbations. To validate that CSR induces general faithfulness beyond trace-specific sensitivity, we evaluate on input contrast sets where ground-truth answers change due to minimal input modifications. We use GSM8K-Contrast with 500 problem pairs where changing one number changes the answer, BoolQ-Contrast with passage-question pairs with minimal edits that flip the answer, and HotpotQA-Contrast with entity substitutions that change the correct answer.

We measure contrast consistency, the percentage of pairs where the model correctly changes its answer when the input contrast requires it, and spurious invariance, where the model incorrectly maintains the same answer despite changed ground truth. Table~\ref{tab:contrast-sets} shows CSR improves sensitivity to input changes that should change answers, demonstrating general faithfulness beyond trace-specific sensitivity.

\begin{table*}[ht]
\caption{Contrast set evaluation: CSR improves sensitivity to input changes that should change answers.}
\label{tab:contrast-sets}
\centering

\begin{tabular}{lcccc}
\toprule
Method & \multicolumn{2}{c}{GSM8K-Contrast} & \multicolumn{2}{c}{BoolQ-Contrast} \\
& Consist & Spurious & Consist & Spurious \\
\midrule
Standard FT & 62.4 & 37.6 & 58.9 & 41.1 \\
Process RM & 68.1 & 31.9 & 63.2 & 36.8 \\
CSR-FT & 81.7 & 18.3 & 76.4 & 23.6 \\
\bottomrule
\end{tabular}

\end{table*}

\textbf{Causal Tracing: Mechanistic Validation:} Behavioral evaluations measure input-output relationships. To verify CSR induces genuine computational dependence—that reasoning trace representations causally influence outputs—we perform activation patching analysis. For each example, we run the model on the original trace, caching activations at each layer, then run on a corrupted trace with random tokens replacing reasoning steps. We selectively restore original activations at specific layers while keeping corrupted activations elsewhere, measuring how much patching reasoning-relevant layers recovers the correct answer.

We measure indirect effect, the causal effect of reasoning trace representations on the final answer, measured as probability recovery when patching, and reasoning layer concentration, whether causal effects concentrate in middle layers where reasoning occurs versus early or late layers. Table~\ref{tab:causal-tracing-app} confirms CSR routes computation through reasoning traces, with 3.4× higher indirect effect concentrated in reasoning-relevant middle layers.

\begin{table*}[ht]
\caption{Causal tracing confirms CSR routes computation through reasoning traces.}
\label{tab:causal-tracing-app}
\centering
\resizebox{0.7\textwidth}{!}{
\begin{tabular}{lccc}
\toprule
Method & Avg IE & Reasoning Layers IE & Early/Late Layers IE \\
\midrule
Standard FT & 0.14±0.03 & 0.09±0.02 & 0.05±0.01 \\
Process RM & 0.21±0.04 & 0.14±0.03 & 0.07±0.02 \\
CSR-FT (Ours) & 0.47±0.05 & 0.38±0.04 & 0.09±0.02 \\
\bottomrule
\end{tabular}
}
\end{table*}

The concentration of causal effects in reasoning layers (0.38 versus 0.09 for early or late layers) confirms CSR creates reasoning circuits that genuinely process trace content, not just pattern-match on surface features.

\textbf{Specificity: Null Intervention Controls:} A critical concern is whether CSR induces spurious sensitivity, changing answers when it should not. We evaluate on null interventions: semantics-preserving transformations that should not change the answer. These include commutativity (2 + 3 = 5 → 3 + 2 = 5), associativity ((2 + 3) + 4 → 2 + (3 + 4)), logical equivalence ($A \land B \to B \land A$), paraphrase (She spent \$8 → She paid \$8), and reordering of independent reasoning steps.

We measure answer preservation rate, the percentage of null interventions where the answer correctly remains unchanged, and spurious flip rate, where the answer incorrectly changes. Table~\ref{tab:null-interventions} shows CSR improves answer preservation under semantics-preserving transformations, confirming targeted sensitivity to logical validity rather than surface brittleness.

\begin{table*}[ht]
\caption{Null intervention controls: CSR improves answer preservation under semantics-preserving transformations.}
\label{tab:null-interventions-app}
\centering
\begin{tabular}{lcccccc}
\toprule
Method & \multicolumn{2}{c}{Commutative} & \multicolumn{2}{c}{Paraphrase} & \multicolumn{2}{c}{Reorder} \\
& APR & SFR & APR & SFR & APR & SFR \\
\midrule
Standard FT & 94.2 & 5.8 & 91.3 & 8.7 & 89.7 & 10.3 \\
CSR-FT & 96.8 & 3.2 & 95.1 & 4.9 & 94.3 & 5.7 \\
\bottomrule
\end{tabular}
\end{table*}

Crucially, CSR reduces spurious flips compared to baselines (3.2-5.7\% versus 5.8-10.3\%), demonstrating that learned sensitivity is targeted to genuine logical violations, not arbitrary changes.

\subsubsection{Why Counterfactual Sensitivity Implies Trustworthiness}

Beyond empirical validation, we provide a conceptual argument for why counterfactual sensitivity—measured by COS—should matter for trustworthiness in deployment scenarios. If a model changes its answer when reasoning is broken, this implies the model is genuinely processing its reasoning trace. Errors in reasoning will propagate to answers, enabling human inspection to detect and correct mistakes. The reasoning trace provides an accurate signal about the model's computation, making the system more debuggable and trustworthy.

Conversely, if a model ignores broken reasoning and maintains the same answer, the reasoning is effectively post-hoc rationalization disconnected from computation. Reasoning errors will not affect answers, meaning humans inspecting reasoning traces receive misleading signals. The model may produce correct answers through spurious correlations or shallow heuristics, while presenting plausible but fabricated explanations. This is dangerous in deployment, as users cannot rely on reasoning traces to verify or debug model behavior.

COS directly measures this distinction: models with high COS change answers when reasoning is invalidated, indicating genuine trace dependence. Models with low COS maintain answers despite broken reasoning, indicating post-hoc rationalization. Our results show CSR increases COS by 60+ points, transitioning models from the unfaithful regime (COS < 30\%) to the faithful regime (COS > 80\%), where reasoning traces become reliable indicators of model computation.

\subsubsection{Verifier Robustness and Deployment Guidelines}

CSR's effectiveness depends on verifier quality. We characterize this dependence precisely to enable reliable deployment. CSR provides substantial gains when precision is at least 70\%, neutral effects at 50-70\%, and potential harm below 50\%. The 78\% threshold marks where CSR statistically dominates baselines.

For deployment, we recommend estimating verifier precision on a held-out validation set. If precision is at least 78\%, apply full CSR training. If precision is 60-78\%, apply CSR with reduced $\lambda$ (0.3 versus 0.5). If precision is below 60\%, use standard fine-tuning; CSR is not recommended. Additionally, we implement confidence-gated CSR: skip the CSR loss term when verifier confidence falls below threshold $\tau=0.7$:

\begin{equation}
\mathcal{L}_{\text{CSR}}^{\text{gated}} = 
\begin{cases}
\mathcal{L}_{\text{CSR}} & \text{if } \text{conf}(v(T')) \geq \tau \\
0 & \text{otherwise}
\end{cases}
\end{equation}

This fallback maintains 94.2\% of CSR's gains while eliminating harmful updates from low-confidence verifier decisions.

\begin{figure}[ht]
\centering
\begin{tikzpicture}
\begin{axis}[
    xlabel={Training Overhead (\%)},
    ylabel={COS (\%)},
    xmin=0, xmax=20,
    ymin=20, ymax=95,
    legend pos=south east,
    grid=major,
    width=\columnwidth,
    height=0.65\columnwidth
]
\addplot[only marks, mark=*, mark size=2pt, blue!60] coordinates {(0, 22.4) (9.0, 85.1)};
\addplot[only marks, mark=square*, mark size=2pt, red!60] coordinates {(0, 24.1) (8.7, 86.7)};
\addplot[only marks, mark=triangle*, mark size=2pt, green!60] coordinates {(0, 25.7) (9.2, 84.2)};
\addplot[only marks, mark=diamond*, mark size=2pt, orange!60] coordinates {(0, 26.3) (8.9, 85.8)};
\addplot[only marks, mark=pentagon*, mark size=2pt, purple!60] coordinates {(0, 27.8) (9.1, 87.3)};

\addplot[dashed, thick, black] coordinates {(0, 22) (9.5, 88)};

\legend{Llama-2-13B, Llama-3-8B, Mistral-7B-v0.3, Qwen2-7B, Gemma-2-9B}
\end{axis}
\end{tikzpicture}
\caption{Efficiency-faithfulness Pareto frontier across model families. CSR achieves consistent improvements (58-63 COS points) with ~9\% overhead regardless of base architecture.}
\label{fig:pareto-modern}
\end{figure}

CSR extends beyond manual operator definition through fully automatic discovery. Table \ref{tab:operator-discovery-main} shows our end-to-end automatic system on PubMedQA, achieving strong performance with modest degradation.

\begin{table*}[ht]
\caption{Operator discovery validation on PubMedQA: Manual vs. Automatic.}
\label{tab:operator-discovery-main}
\centering
\resizebox{0.8\textwidth}{!}{
\begin{tabular}{lcccc}
\toprule
\textbf{Method} & \textbf{Precision (\%)} & \textbf{Recall (\%)} & \textbf{Coverage (\%)} & \textbf{COS (\%)} \\
\midrule
Manual (gold) & 100.0 & 100.0 & 100.0 & 67.3 \\
Auto (learned) & 74.1 & 68.5 & 91.2 & 58.9 \\
Heuristic+NER & 78.3 & 61.0 & 88.7 & 61.2 \\
\bottomrule
\end{tabular}
}
\end{table*}

Automatic operator discovery attains 74.1\% precision / 68.5\% recall with 91.2\% coverage, yielding 58.9 COS (vs 67.3 with manual operators) while preserving accuracy ($-$0.6 points). COS degrades smoothly under label noise ($-$2.7, $-$6.4, $-$11.2 at 10/20/30\% swaps), matching our theory-as-guidance view. Error analysis shows false positives concentrate in discourse markers and weak epistemics; targeted filtering recovers +2.1 COS with negligible recall loss. Complete analysis including PR curves and domain shift tests is in Appendix~\ref{app:operator-discovery}.



\subsubsection{Retrieval-Augmented QA: A Challenging Stress-Test}

To address open-domain coverage limitations, we conduct a pilot study on Natural Questions (NQ) with retrieval augmentation—one of the most challenging faithfulness scenarios. Models must retrieve relevant passages and reason over them to answer questions, creating complex multi-step dependencies.

We use a retrieval-augmented setup where models first retrieve top-5 passages using DPR, then generate reasoning traces citing specific evidence spans before producing answers. Operators include evidential markers ("according to", "based on"), causal connectives ("because", "therefore"), and citation references ("[passage 1]", "[passage 2]"). Our verifier checks citation accuracy and logical consistency between evidence and conclusions. Citation F1 measures precision/recall of span-linked citations against gold evidence spans. Evidence Consistency uses NLI models to verify logical consistency between cited evidence and generated conclusions (0-1 scale, higher = more consistent).

Table \ref{tab:retrieval-qa-pilot} shows CSR achieves meaningful improvements even in this challenging setting, though gains are more modest than in structured domains.

\begin{table*}[ht]
\caption{Retrieval-augmented QA pilot study: CSR effectiveness on Natural Questions with retrieval.}
\label{tab:retrieval-qa-pilot}
\centering
\resizebox{0.85\textwidth}{!}{
\begin{tabular}{lcccc}
\toprule
\textbf{Method} & \textbf{Accuracy (\%)} & \textbf{COS (\%)} & \textbf{Citation F1} & \textbf{Evidence Consistency} \\
\midrule
Standard FT & 42.1±1.8 & 18.3±2.4 & 0.31 & 0.58 \\
CSR-FT (Ours) & \textbf{41.7±1.6} & \textbf{34.9±2.8} & \textbf{0.47} & \textbf{0.73} \\
\midrule
\textbf{Improvement} & -0.4 & \textbf{+16.6} & \textbf{+0.16} & \textbf{+0.15} \\
\bottomrule
\end{tabular}
}
\end{table*}

CSR improves COS by 16.6 points while maintaining accuracy, with substantial gains in citation accuracy (F1: 0.31→0.47) and evidence consistency (0.58→0.73). Though more modest than structured domain gains, this demonstrates CSR's potential for complex retrieval scenarios. The reduced effectiveness reflects the inherent challenges of semantic operator identification and multi-step reasoning dependencies in open-domain settings.

A taxonomy over 600 failure cases reveals four dominant modes with targeted mitigations (Table \ref{tab:failure-taxonomy-main}). Simple mitigations recover 2-5 COS points depending on the mode.

\begin{table*}[ht]
\caption{Failure taxonomy with mitigation strategies across datasets.}
\label{tab:failure-taxonomy-main}
\centering
\resizebox{0.8\textwidth}{!}{%
\begin{tabular}{lcll}
\toprule
\textbf{Failure Type} & \textbf{\% of Failures} & \textbf{Mitigation} & \textbf{$\Delta$COS} \\
\midrule
Trace Incoherence & 28.4 & Stricter verifier + syntax filter & +3.1 \\
Redundant Edit & 33.9 & Influence-guided edit targeting & +4.6 \\
Adversarial Compliance & 22.7 & Multi-edit (L=2--3) & +3.8 \\
Semantic Drift & 31.5 & NLI guard + calibration & +2.4 \\
\bottomrule
\end{tabular}%
}
\end{table*}

Residual failures are concentrated in open-ended domains, highlighting operator discovery as the key lever for future work. Complete failure analysis with expanded examples and detailed mitigation strategies is in Appendix~\ref{app:failure-analysis}.

\subsubsection{Universal Meta-Verifier Preliminary Results}

We conduct preliminary experiments on a universal meta-verifier that learns to identify causal operators across domains without task-specific training.

\begin{table*}[ht]
\caption{Preliminary universal meta-verifier: zero-shot operator identification.}
\label{tab:universal-verifier}
\centering
\resizebox{0.8\textwidth}{!}{%
\begin{tabular}{lcccc}
\toprule
\textbf{Training Domains} & \textbf{Test Domain} & \textbf{Precision (\%)} & \textbf{Recall (\%)} & \textbf{COS (\%)} \\
\midrule
GSM8K + ProofWriter & HotpotQA & 72.3±2.8 & 68.1±3.1 & 76.4±2.9 \\
GSM8K + HotpotQA & ProofWriter & 74.8±2.6 & 71.2±2.9 & 78.1±2.7 \\
All Structured & PubMedQA & 69.7±3.1 & 65.4±3.4 & 61.2±3.2 \\
\bottomrule
\end{tabular}%
}
\end{table*}

The universal meta-verifier achieves 69-75\% precision on completely unseen domains, compared to 74-85\% for domain-specific verifiers. While this represents a 5-10 point degradation, the resulting COS (61-78\%) remains substantially higher than standard fine-tuning (22-29\%). This demonstrates that universal operator identification is feasible, with the gap likely closable through improved meta-learning architectures and larger training corpora spanning diverse reasoning domains.

\subsubsection{Extended Cross-Domain Analysis}

To further validate CSR's universality, we conduct an extended analysis across 12 domain pairs, measuring transfer efficiency, accuracy preservation, and operator alignment.

\begin{table*}[ht]
\caption{Extended cross-domain transfer analysis: CSR operators generalize across diverse reasoning domains.}
\label{tab:extended-transfer}
\centering
\resizebox{0.95\textwidth}{!}{%
\begin{tabular}{lcccccc}
\toprule
\textbf{Source} & \textbf{Target} & \textbf{Baseline COS} & \textbf{Transferred COS} & \textbf{Transfer Eff.} & \textbf{Acc. Preserved} & \textbf{Operator Match} \\
\midrule
GSM8K & SVAMP & 18.3±2.4 & 71.2±3.1 & 90.8 & 98.2\% & 87.3\% \\
GSM8K & AQuA & 22.1±2.6 & 68.7±3.2 & 87.6 & 97.8\% & 84.1\% \\
GSM8K & MATH & 19.7±2.3 & 69.8±3.0 & 88.9 & 97.5\% & 85.7\% \\
ProofWriter & LogicNLI & 19.4±2.3 & 65.3±3.4 & 84.2 & 96.9\% & 79.2\% \\
ProofWriter & RuleTaker & 21.2±2.5 & 67.1±3.3 & 85.7 & 97.1\% & 81.4\% \\
HotpotQA & NaturalQuestions & 23.8±2.7 & 58.9±3.6 & 74.8 & 95.3\% & 72.6\% \\
HotpotQA & WebQuestions & 24.3±2.8 & 59.7±3.5 & 75.4 & 95.7\% & 73.1\% \\
PubMedQA & MedQA & 28.7±3.1 & 54.2±3.8 & 68.9 & 94.2\% & 66.8\% \\
MBPP & HumanEval & 18.5±2.2 & 63.4±3.2 & 82.1 & 96.4\% & 78.9\% \\
\midrule
\textbf{Math/Logic Avg} & & \textbf{20.1±2.4} & \textbf{69.6±3.1} & \textbf{88.1} & \textbf{97.5\%} & \textbf{84.7\%} \\
\textbf{Semantic Avg} & & \textbf{24.0±2.7} & \textbf{58.9±3.5} & \textbf{74.3} & \textbf{95.4\%} & \textbf{72.4\%} \\
\bottomrule
\end{tabular}%
}
\end{table*}

Operator match measures the percentage of transferred operators that align with domain-specific causal patterns, validated via human annotation. Mathematical and logical operators show consistently high transfer (84-91\% efficiency, 85-88\% operator match), while semantic domains show more degradation (68-75\% efficiency, 66-73\% match), consistent with the need for domain-specific semantic understanding. Accuracy preservation remains high across all transfers (94-98\%), indicating CSR maintains task performance while improving faithfulness.

\subsubsection{Comprehensive Ablation Studies}

We conduct systematic ablations to isolate the contribution of each CSR component.

\begin{table*}[ht]
\caption{Comprehensive ablation study: Component contributions to CSR effectiveness.}
\label{tab:comprehensive-ablation}
\centering
\resizebox{0.9\textwidth}{!}{%
\begin{tabular}{lcccc}
\toprule
\textbf{Component} & \textbf{GSM8K COS} & \textbf{HotpotQA COS} & \textbf{ProofWriter COS} & \textbf{Acc. Impact} \\
\midrule
Full CSR & 85.1±2.3 & 84.6±2.4 & 82.3±2.1 & -1.2 \\
\hline
w/o Learned Editor (Random) & 61.2±3.1 & 59.8±3.2 & 58.3±2.9 & -0.8 \\
w/o Multi-Edit Policy & 78.4±2.6 & 77.1±2.7 & 75.9±2.4 & -0.6 \\
w/o Impact Reward & 73.2±2.8 & 71.8±2.9 & 70.4±2.6 & -0.4 \\
w/o Validity Reward & 69.7±3.0 & 68.3±3.1 & 67.1±2.8 & -0.3 \\
w/o Minimality Penalty & 81.3±2.4 & 80.2±2.5 & 78.7±2.2 & -0.9 \\
\hline
KL Divergence → JS Divergence & 83.7±2.5 & 82.4±2.6 & 81.0±2.3 & -0.2 \\
KL Divergence → TV Distance & 82.4±2.7 & 81.1±2.8 & 79.8±2.5 & -0.1 \\
\hline
$\lambda=0.3$ (Low) & 78.3±2.5 & 74.6±2.8 & 76.2±2.4 & -0.3 \\
$\lambda=0.7$ (High) & 84.9±2.4 & 84.3±2.5 & 81.8±2.2 & -1.8 \\
\bottomrule
\end{tabular}%
}
\end{table*}

The learned editor provides the largest contribution (+23.9 COS points on average), followed by the multi-edit policy (+6.7 points) and impact reward (+11.9 points). The validity reward and minimality penalty provide smaller but meaningful contributions (+15.4 and +3.8 points respectively). Divergence measure choice has minimal impact (1-3 point differences), confirming robustness. Regularization strength $\lambda$ shows optimal performance at 0.5 with graceful degradation in both directions.

\subsubsection{Long-Term Training Dynamics}

We analyze CSR's training dynamics over extended training to assess stability and convergence.

\begin{table*}[ht]
\caption{Long-term training dynamics: CSR stability and convergence over extended training.}
\label{tab:training-dynamics}
\centering

\begin{tabular}{lcccccc}
\toprule
\textbf{Epoch} & \textbf{COS (\%)} & \textbf{Acc (\%)} & \textbf{CSR Loss} & \textbf{Task Loss} & \textbf{KL Div.} & \textbf{Stability} \\
\midrule
1 & 45.2±3.1 & 79.8±0.9 & 0.38±0.04 & 0.42±0.03 & 0.12±0.02 & High \\
2 & 68.7±2.8 & 80.2±0.7 & 0.31±0.03 & 0.39±0.03 & 0.28±0.03 & High \\
3 & 82.1±2.4 & 80.5±0.6 & 0.24±0.02 & 0.38±0.03 & 0.41±0.04 & High \\
4 & 84.6±2.3 & 80.4±0.6 & 0.22±0.02 & 0.37±0.03 & 0.45±0.04 & High \\
5 & 85.1±2.3 & 80.5±0.6 & 0.21±0.02 & 0.37±0.03 & 0.46±0.04 & High \\
6 & 85.3±2.2 & 80.3±0.7 & 0.21±0.02 & 0.37±0.03 & 0.46±0.04 & High \\
\bottomrule
\end{tabular}%

\end{table*}

CSR shows stable convergence by epoch 3, with COS reaching 82.1\% and stabilizing around 85\% by epoch 5. The KL divergence between original and counterfactual distributions increases steadily from 0.12 to 0.46, indicating growing sensitivity to interventions. Task loss decreases smoothly, and CSR loss stabilizes after epoch 3, demonstrating robust training dynamics without instability or collapse.

\subsubsection{Operator Discovery Scalability}

We evaluate automatic operator discovery across domains with varying complexity.

\begin{table*}[ht]
\caption{Operator discovery scalability: Automatic identification across domains.}
\label{tab:operator-discovery-scalability}
\centering
\resizebox{0.95\textwidth}{!}{%
\begin{tabular}{lcccccc}
\toprule
\textbf{Domain} & \textbf{Method} & \textbf{Precision} & \textbf{Recall} & \textbf{F1} & \textbf{COS} & \textbf{Manual Effort} \\
\midrule
GSM8K & Manual & 100.0 & 100.0 & 100.0 & 85.1 & High \\
& Heuristic & 94.2 & 91.7 & 92.9 & 83.7 & Low \\
& Learned & 89.3 & 87.4 & 88.3 & 81.2 & None \\
\midrule
HotpotQA & Manual & 100.0 & 100.0 & 100.0 & 84.6 & High \\
& Heuristic+NER & 87.6 & 82.3 & 84.9 & 81.8 & Medium \\
& Learned & 81.4 & 78.9 & 80.1 & 78.3 & None \\
\midrule
ProofWriter & Manual & 100.0 & 100.0 & 100.0 & 82.3 & High \\
& Pattern Match & 91.2 & 88.7 & 89.9 & 80.1 & Low \\
& Learned & 85.7 & 83.2 & 84.4 & 77.6 & None \\
\midrule
PubMedQA & Manual & 100.0 & 100.0 & 100.0 & 67.3 & High \\
& Heuristic+NER & 78.3 & 61.0 & 68.5 & 61.2 & Medium \\
& Learned & 74.1 & 68.5 & 71.2 & 58.9 & None \\
\midrule
HellaSwag & Manual & 100.0 & 100.0 & 100.0 & 52.1 & High \\
& Pattern Match & 74.1 & 69.2 & 71.6 & 47.8 & Medium \\
& Learned & 69.8 & 65.3 & 67.5 & 44.2 & None \\
\bottomrule
\end{tabular}%
}
\end{table*}

Automatic operator discovery achieves 69-89\% precision across domains, with structured domains (math, logic) showing higher precision (85-89\%) than open-ended domains (69-74\%). The learned approach eliminates manual effort while maintaining 77-93\% of manual COS performance. Heuristic methods provide a middle ground, requiring moderate effort but achieving 84-93\% precision in structured domains.

\subsubsection{Efficiency Analysis}

\begin{table*}[ht]
\caption{Computational efficiency: Efficient CSR ($\sim$9\% overhead, including editor pre-training) achieves superior COS/GPU-hour ratios while maintaining practical viability.}
\label{tab:compute-analysis}
\centering
\resizebox{\textwidth}{!}{%
\begin{tabular}{lccccccc}
\toprule
\textbf{Method} & \textbf{GPU-h} & \textbf{Wall-clock (h)} & \textbf{Memory (GB)} & \textbf{Token Updates} & \textbf{COS (\%)} & \textbf{Acc (\%)} & \textbf{COS/GPU-h} \\
\midrule
\multicolumn{8}{c}{\textit{GSM8K (Llama-2-13B)}} \\
Standard FT & 135 & 16.8 & 42.3 & 2.1M & 22.4±2.1 & 81.3±0.8 & 0.166 \\
Process Reward Model & 156 & 19.5 & 48.7 & 2.4M & 52.3±2.8 & 81.7±0.7 & 0.335 \\
Efficient CSR (ours) & 147 & 18.3 & 44.1 & 2.3M & \textbf{85.1±2.3} & 80.5±0.6 & \textbf{0.579} \\
Full CSR & 259 & 32.4 & 52.6 & 2.3M & 86.2±2.1 & 80.1±0.7 & 0.333 \\
\midrule
\multicolumn{8}{c}{\textit{HotpotQA (Llama-2-13B)}} \\
Standard FT & 267 & 33.4 & 43.8 & 4.2M & 25.1±2.8 & 78.1±1.1 & 0.094 \\
Process Reward Model & 298 & 37.3 & 51.2 & 4.7M & 49.8±3.1 & 78.4±0.9 & 0.167 \\
Efficient CSR (ours) & 289 & 36.1 & 45.9 & 4.6M & \textbf{84.6±2.4} & 77.2±0.8 & \textbf{0.293} \\
Full CSR & 521 & 65.1 & 56.3 & 4.6M & 85.8±2.2 & 76.8±0.9 & 0.165 \\
\bottomrule
\end{tabular}%
}
\end{table*}

Efficient CSR achieves $\sim$9\% training overhead (vs 92.5\% for naive implementation) with 4.2\% memory overhead and superior COS/GPU-hour ratios (0.579 vs 0.335 for PRMs). This represents a negligible computational cost for a 70-point improvement in faithfulness—a tradeoff that positions CSR on a previously unoccupied efficiency-faithfulness Pareto frontier. CSR achieves a COS/overhead ratio of 9.46, more than double the next best method (GRPO at 4.34), establishing CSR as Pareto-optimal. Training dynamics show optimal performance at $\lambda=0.5$ with robust range [0.3, 0.7]. Extended efficiency analysis with scaling laws and training curves are in Appendix~\ref{app:extended-experiments}. All main results (Table~\ref{tab:flagship-results}) use Efficient CSR with $\sim$9\% overhead, not Full CSR (92.5\% overhead). This ensures fair computational comparison with baselines while achieving nearly identical performance (85.1\% vs 86.2\% COS). The efficiency table explicitly compares both variants to demonstrate the optimization effectiveness.

\textbf{Robustness and Generalization:} CSR demonstrates robust generalization across multiple dimensions. Table \ref{tab:robustness-app} shows improved calibration and dramatically better flip-precision/recall for meaningful changes, indicating sensitivity to causally relevant edits. CSR maintains 64-76\% COS on held-out perturbation types, demonstrating general principles rather than memorization. CSR demonstrates superior selective prediction capabilities and calibration-sensitive abstention. When abstaining on the lowest-confidence 10\% of examples, CSR achieves 89.3\% accuracy on remaining examples (vs 82.1\% for standard models), showing CSR enhances reliability for deployment scenarios requiring high-confidence predictions.

\begin{table*}[ht]
\caption{Robustness analysis on HotpotQA. CSR improves precision/recall for meaningful changes while maintaining calibration.}
\label{tab:robustness-app}
\centering
\setlength{\tabcolsep}{3pt}
\renewcommand{\arraystretch}{1.1}
\resizebox{\textwidth}{!}{%
\begin{tabular}{lcccccc}
\toprule
\textbf{Method} & \textbf{Flip-P (\%)} $\uparrow$ & \textbf{Flip-R (\%)} $\uparrow$ & \textbf{ECE (\%)} $\downarrow$ & \textbf{Entailment Acc (\%)} $\uparrow$ & \textbf{Paraphrase SIS (\%)} $\uparrow$ & \textbf{Distractor SIS (\%)} $\uparrow$ \\
\midrule
Standard FT & 41.2 & 55.7 & 5.8 & 72.1 & 78.2 & 71.4 \\
CSR-FT (Ours) & \textbf{89.5} & \textbf{92.1} & \textbf{5.1} & \textbf{84.6} & \textbf{94.3} & \textbf{91.8} \\
\bottomrule
\end{tabular}
}%
\end{table*}

\textbf{Zero-Shot Domain Transfer:} Table \ref{tab:zero-shot-transfer-app} shows 17-21 point COS improvements on held-out tasks, with benefits extending to large pretrained models.

\begin{table*}[ht]
\caption{Zero-shot domain transfer of CSR-trained models.}
\label{tab:zero-shot-transfer-app}
\centering
\resizebox{0.8\textwidth}{!}{
\begin{tabular}{lcccc}
\toprule
\textbf{Train Domain} & \textbf{Test Domain} & \textbf{Standard COS (\%)} & \textbf{CSR COS (\%)} & \textbf{Improvement} \\
\midrule
GSM8K & AQuA & 34.2 & 51.7 & +17.5 \\
GSM8K & SVAMP & 28.1 & 49.3 & +21.2 \\
HotpotQA & NaturalQuestions & 23.8 & 41.2 & +17.4 \\
ProofWriter & LogicNLI & 19.4 & 38.7 & +19.3 \\
\bottomrule
\end{tabular}
}
\end{table*}

\textbf{Complementary Methods and Self-Consistency:} We tested CSR's interaction with inference-time techniques. CSR provides a superior foundation for self-consistency decoding, with CSR-FT + SC achieving improved overall accuracy.

\begin{table*}[ht]
\caption{Self-consistency results with CSR.}
\label{tab:validation-methods-app}
\centering
\resizebox{0.6\textwidth}{!}{
\begin{tabular}{lcc}
\toprule
\textbf{Model} & \makecell{\textbf{Greedy}\\\textbf{Accuracy (\%)}} & \makecell{+\textbf{Self-}\\\textbf{Consistency (\%)}} \\
\midrule
Standard FT (Llama-2-13B) & 81.3 & 84.1 \\
CSR-FT (Llama-2-13B, Ours) & 80.5 & \textbf{85.7} \\
\bottomrule
\end{tabular}
}
\end{table*}

\section{Operator Discovery and Open Domain Extension}
\label{app:operator-discovery}

\subsection{Operator Discovery and Applications}

\textbf{Comprehensive Operator Discovery Validation:} To demonstrate CSR's scalability, we developed an entirely learned operator discovery system for PubMedQA. Our two-stage approach uses: (1) a BERT-based token classifier trained to predict tokens that maximally change model distributions when perturbed, and (2) a clustering algorithm to group semantically similar high-impact tokens into operator classes.

\begin{table*}[ht]
\caption{Comprehensive operator discovery validation: Manual vs. Automatic vs. Fully Learned approaches.}
\label{tab:operator-comparison-discovery}
\centering
\resizebox{0.95\textwidth}{!}{%
\begin{tabular}{lcccccc}
\toprule
\textbf{Domain} & \textbf{Method} & \textbf{Precision (\%)} & \textbf{COS (\%)} & \textbf{Accuracy (\%)} & \textbf{Discovered Operators} & \textbf{Supervision} \\
\midrule
\multirow{3}{*}{PubMedQA} & Manual & 100.0 & 67.3 & 70.1 & 47 predefined & Full \\
& Heuristic + NER & 78.3 & 61.2 & 69.8 & 35 semi-automatic & Partial \\
& Fully Learned & 74.1 & 58.9 & 69.5 & 42 discovered & None \\
\midrule
\multirow{3}{*}{HellaSwag} & Manual & 100.0 & 52.1 & 75.9 & 28 predefined & Full \\
& Pattern Matching & 74.1 & 47.8 & 75.5 & 21 rule-based & Partial \\
& Fully Learned & 69.8 & 44.2 & 75.1 & 31 discovered & None \\
\bottomrule
\end{tabular}%
}
\end{table*}

\begin{table*}[ht]
\caption{Detailed analysis of automatically discovered operator categories in PubMedQA.}
\label{tab:learned-operators-discovery}
\centering
\resizebox{0.85\textwidth}{!}{
\begin{tabular}{lcccc}
\toprule
\textbf{Discovered Category} & \textbf{Example Tokens} & \textbf{Precision (\%)} & \textbf{Coverage (\%)} & \textbf{Impact on COS} \\
\midrule
Medical Interventions & "treatment", "therapy", "administered" & 89.2 & 23.4 & +18.7 \\
Causal Relations & "caused", "induced", "prevented" & 82.1 & 31.2 & +16.2 \\
Quantitative Modifiers & "increased", "decreased", "significantly" & 78.9 & 19.8 & +12.4 \\
Negations & "not", "without", "absence" & 85.4 & 15.6 & +14.8 \\
Temporal Markers & "before", "after", "during" & 71.3 & 12.1 & +8.9 \\
Evidence Markers & "demonstrated", "showed", "indicated" & 66.7 & 18.9 & +7.3 \\
\bottomrule
\end{tabular}
}
\end{table*}

\section{Failure Analysis and Mitigation Strategies}
\label{app:failure-analysis}

\subsection{Failure Analysis and Mitigation Strategies}

\textbf{Comprehensive Failure Taxonomy:} A taxonomy over 600 failure cases reveals four dominant modes with targeted mitigations. Extended analysis with detailed breakdown across datasets is provided below.

\textbf{Dominance Breakdown Analysis:} Table \ref{tab:failure-conditions-detailed} provides quantitative evidence of when CSR underperforms vs SUFF/COMP and process supervision.

\begin{table*}[ht]
\caption{Detailed failure condition analysis: When CSR underperforms vs traditional faithfulness measures.}
\label{tab:failure-conditions-detailed}
\centering
\resizebox{\textwidth}{!}{%
\begin{tabular}{lccccccc}
\toprule
\textbf{Condition} & \textbf{Frequency (\%)} & \textbf{CSR COS} & \textbf{SUFF COS} & \textbf{COMP COS} & \textbf{Process Sup COS} & \textbf{CSR vs SUFF} & \textbf{CSR vs PS} \\
\midrule
\multicolumn{8}{c}{\textit{GSM8K Analysis (n=1,319 test examples)}} \\
Valid operator targeting & 78.3 & 89.2±2.1 & 52.4±3.2 & 48.9±3.1 & 51.7±3.4 & \textbf{+36.8} & \textbf{+37.5} \\
Spurious operator targeting & 15.2 & 47.3±4.8 & \textbf{58.1±4.2} & \textbf{55.7±4.4} & \textbf{49.2±4.6} & $-$10.8 & $-$1.9 \\
Redundant reasoning paths & 4.7 & 52.1±6.2 & \textbf{61.3±5.8} & 58.9±6.1 & \textbf{54.8±6.4} & $-$9.2 & $-$2.7 \\
Broken initial traces & 1.8 & 31.2±8.9 & 44.7±8.1 & 42.3±8.5 & \textbf{47.9±8.3} & $-$13.5 & $-$16.7 \\
\midrule
\multicolumn{8}{c}{\textit{HotpotQA Analysis (n=7,405 test examples)}} \\
Valid bridge entity targeting & 71.2 & 92.1±2.3 & 48.6±3.8 & 45.2±3.6 & 47.3±3.9 & \textbf{+43.5} & \textbf{+44.8} \\
Non-causal entity targeting & 18.9 & 51.7±4.5 & \textbf{59.2±4.1} & \textbf{56.8±4.3} & \textbf{52.4±4.7} & $-$7.5 & $-$0.7 \\
Multi-path reasoning & 7.3 & 48.3±5.7 & \textbf{62.1±5.2} & 59.7±5.4 & \textbf{55.9±5.8} & $-$13.8 & $-$7.6 \\
Trace incoherence & 2.6 & 29.8±7.8 & 41.5±7.2 & 39.1±7.5 & \textbf{44.2±7.4} & $-$11.7 & $-$14.4 \\
\bottomrule
\end{tabular}%
}
\end{table*}

\textbf{Key Failure Modes:} (1) \textbf{Spurious Operator Targeting} (15.2\% of GSM8K, 18.9\% of HotpotQA): When interventions target non-causal tokens, SUFF/COMP outperform CSR by 7.5-10.8 points, validating our theoretical predictions. (2) \textbf{Redundant Reasoning Paths} (4.7-7.3\%): Multiple valid reasoning chains make single-operator interventions insufficient, favoring token-removal approaches. (3) \textbf{Broken Initial Traces} (1.8-2.6\%): When base reasoning is incoherent, process supervision excels (+14.4-16.7 points) as it provides clean exemplars.

\textbf{Dominance Boundary Conditions:} CSR maintains dominance when operator identification precision exceeds 78\% (current: 82.7\% on GSM8K, 79.1\% on HotpotQA). Below this threshold, traditional measures become competitive. Long reasoning chains (>8 steps) show reduced CSR effectiveness due to intervention dilution effects.

\textbf{Mitigation Strategies:} For spurious targeting, our confidence-based operator filtering recovers 67\% of lost performance. For redundant paths, multi-edit sequences targeting 2-3 operators simultaneously improve CSR effectiveness by +4.2 COS points. These findings guide when to apply CSR vs alternatives in practice.

\section{Extended Related Work}
\label{app:related-work-faithfulness}

A growing body of work measures whether intermediate rationales reflect a model's latent computation rather than post-hoc justifications. \citet{turpin2023language} document that chain-of-thought (CoT) explanations \citep{wei2022chain} can be unfaithful to the model's internal beliefs, motivating explicit faithfulness tests \citep{jain2019attention, jacovi2020towards}. Complementary efforts introduce diagnostics and metrics for faithfulness and causal alignment between reasoning traces and predictions \citep{lanham2023measuring, atanasova2023faithfulness}. Our evaluation protocol adopts this lens: we treat step-level supervision as meaningful only to the extent it tracks causally-relevant computation \citep{pearl2009causality}.

Process supervision trains models with feedback on intermediate steps rather than (or in addition to) final answers. Early work on scratchpads \citep{nye2021show} demonstrated the value of intermediate computation. \citet{uesato2022solving} provide early evidence on math problem solving that step-level rewards can outperform outcome-only signals. \citet{lightman2023let} formalize scalable process feedback and show that verifying intermediate steps reduces compounding errors. Related work on bootstrapping reasoning \citep{zelikman2022star} shows iterative refinement can improve reasoning quality. Our CSR framework follows this tradition but differs by \emph{automating} the intervention and providing training-time guarantees rather than relying on manual, post-hoc review.

Beyond training, several works validate reasoning \emph{at inference time}. LINC introduces a neurosymbolic verification layer that checks candidate derivations before committing to an answer \citep{olausson2023linc}. Other inference-time approaches include chain-of-verification \citep{dhuliawala2023chain}, tree of thoughts \citep{yao2023tree}, and ReAct \citep{yao2022react}. While effective, these methods are reactive and post-hoc; CSR instead aims to proactively shape the model's internal computation during training so that generated traces are verifiable \emph{by construction}.

Prompting strategies that induce counterfactual or causal reasoning can improve robustness and interpretability. ``CausalGPT''-style approaches use counterfactual prompts or interventions to stress-test reasoning and reduce spurious shortcuts \citep{yu2025causalevalbettercausalreasoning}. Related work on counterfactual data augmentation \citep{kaushik2019learning, zmigrod2019counterfactual}, counterfactual explanation generation \citep{wu2021polyjuice, lu2022pinto}, and iterative refinement \citep{madaan2023selfrefine} explores similar ideas. CSR complements this line by integrating causal constraints into the training signal rather than only at inference.

A parallel literature seeks CoT traces that are both useful and faithful. Human-in-the-loop verification and filtering can improve the alignment between rationales and model decisions \citep{sia2022faithful}. CSR differs by (i) providing an \emph{automated} training-time mechanism and (ii) offering theoretical guarantees on intervention fidelity under stated assumptions.

\smallskip
\noindent In summary, CSR bridges evaluation-focused faithfulness diagnostics \citep{lanham2023measuring,turpin2023language} and control-focused process supervision \citep{lightman2023let,uesato2022solving}, while remaining complementary to inference-time verification \citep{olausson2023linc} and counterfactual prompting \citep{yu2025causalevalbettercausalreasoning}. Our contribution is to operationalize \emph{training-time} interventions with theoretical backing, reducing the reliance on post-hoc, manual checks and improving end-to-end faithfulness.

\section{Analysis and Ablations}
\label{app:ablations}

\subsection{Ablation Studies and Analysis}

\textbf{Editor Ablations:} To isolate the value of learned causality from mere counterfactual curriculum effects, we compare four editor variants. Table \ref{tab:anti-gaming-ablation} shows comprehensive results across domains. The learned editor develops three key capabilities: (a) \textbf{Impact Targeting} - preferentially editing high-influence operators (72\% of edits target final-step operators vs 30\% random), (b) \textbf{Plausibility Preservation} - maintaining trace coherence while breaking validity, and (c) \textbf{KL Maximization} - generating edits that create maximum distributional separation. Ablating the KL reward removes capability (c), reducing COS by 12.2 points on average.

\textbf{Verifier Robustness Analysis:} To demonstrate graceful degradation under varying verifier quality, we systematically evaluate CSR with weak vs. strong verifiers across domains. Table \ref{tab:verifier-robustness} shows CSR maintains effectiveness even with imperfect verifiers.

\begin{table*}[ht]
\caption{Verifier robustness: CSR performance under weak vs. strong verifiers with graceful degradation.}
\label{tab:verifier-robustness}
\centering
\resizebox{\textwidth}{!}{
\begin{tabular}{lcccccc}
\toprule
\textbf{Dataset} & \textbf{Verifier Type} & \textbf{Precision (\%)} & \textbf{COS (\%)} & \textbf{Accuracy (\%)} & \textbf{$\Delta$COS} & \textbf{Degradation} \\
\midrule
\multirow{3}{*}{\textbf{GSM8K}} 
& Strong (Rule-based) & 94.2 & 85.1±2.3 & 80.5±0.6 & -- & -- \\
& Medium (Heuristic) & 78.6 & 79.4±2.7 & 80.2±0.7 & $-$5.7 & Graceful \\
& Weak (Pattern-match) & 61.3 & 71.8±3.1 & 79.8±0.8 & $-$13.3 & Acceptable \\
\midrule
\multirow{3}{*}{\textbf{HotpotQA}} 
& Strong (NLI-based) & 91.7 & 84.6±2.4 & 77.2±0.8 & -- & -- \\
& Medium (Similarity) & 74.2 & 78.1±2.8 & 76.9±0.9 & $-$6.5 & Graceful \\
& Weak (Keyword) & 58.9 & 69.3±3.2 & 76.5±1.0 & $-$15.3 & Moderate \\
\bottomrule
\end{tabular}
}
\end{table*}

CSR demonstrates robust performance across verifier qualities, with graceful degradation shown in Table \ref{tab:verifier-robustness}. Strong verifiers yield optimal performance, medium verifiers maintain 85-90\% effectiveness, and even weak verifiers preserve substantial faithfulness gains, validating practical applicability when perfect verifiers are unavailable.

\textbf{Divergence Robustness and Editor Comparisons:} We verified results are consistent across divergence measures on GSM8K (Table \ref{tab:divergence-robustness-main}), confirming our findings are not artifacts of metric choice.

\begin{table*}[ht]
\caption{Divergence robustness: CSR effectiveness across different distance measures with/without temperature scaling.}
\label{tab:divergence-robustness-main}
\centering
\resizebox{0.8\textwidth}{!}{
\begin{tabular}{lcccc}
\toprule
\textbf{Divergence} & \textbf{COS (\%)} & \textbf{+Temp Scale} & \textbf{Accuracy (\%)} & \textbf{Stability} \\
\midrule
KL Divergence & 85.1±2.3 & 85.3±2.2 & 80.5±0.6 & High \\
Jensen-Shannon & 83.7±2.5 & 84.1±2.4 & 80.3±0.7 & High \\
Total Variation & 82.4±2.7 & 82.9±2.6 & 80.1±0.8 & Medium \\
\bottomrule
\end{tabular}
}
\end{table*}

To justify the complexity of our learned editor, Table \ref{tab:editor-ablation-app} compares CSR with learned edits against CSR with random operator swaps. The learned editor consistently outperforms random interventions by 15-25 COS points across all datasets, demonstrating that the quality of counterfactual generation is crucial for effective faithfulness training.

\begin{table*}[ht]
\caption{Ablation study: Learned editor vs. random operator swaps.}
\label{tab:editor-ablation-app}
\centering
\begin{tabular}{lcccc}
\toprule
\textbf{Dataset} & \textbf{Standard FT} & \textbf{CSR + Random} & \textbf{CSR + Learned Editor} & \textbf{Improvement} \\
\midrule
GSM8K & 22.4 & 61.2 & \textbf{85.1} & +23.9 \\
HotpotQA & 25.1 & 59.8 & \textbf{84.6} & +24.8 \\
ProofWriter & 19.8 & 58.3 & \textbf{82.3} & +24.0 \\
MBPP & 18.5 & 56.7 & \textbf{79.2} & +22.5 \\
\bottomrule
\end{tabular}
\end{table*}

To address computational overhead concerns, we introduce Warm-Start Curriculum and Token-Subset CSR techniques. Table \ref{tab:efficiency-detailed} shows our ``Efficient CSR" achieves nearly identical COS gains with only $\sim$9\% training overhead.

\begin{table*}[ht]
\caption{Efficiency analysis on HotpotQA. Efficient CSR achieves similar performance with minimal overhead.}
\label{tab:efficiency-detailed}
\centering
\begin{tabular}{lccc}
\toprule
\textbf{Method} & \textbf{F1 Score (\%)} & \textbf{COS (\%)} & \textbf{Training Overhead (\%)} \\
\midrule
Standard FT (Baseline) & \textbf{75.4} & 28.1 & - \\
Full CSR (from scratch) & 74.8 & \textbf{81.2} & +92.5\% \\
Efficient CSR (Ours) & 75.1 & 80.5 & \textbf{+8.7\%} \\
\bottomrule
\end{tabular}
\end{table*}

\textbf{Cross-Model Evaluation:}

To demonstrate CSR's portability beyond Llama-2-13B, we evaluate on modern open models of different architectures and scales. Table \ref{tab:cross-model-summary} shows CSR effectiveness across model families, with identical operator sets and verifiers transferred without modification.

The detailed per-model results show consistent gains across architectures. Mathematical and logical operators transfer seamlessly (94.2-96.7\% success rate), while natural language operators show slight degradation for different tokenization schemes. Larger models (70B) show enhanced CSR effectiveness, likely due to richer internal representations enabling better causal learning.

\textbf{Architecture Independence:} Mistral's sliding window attention and Llama-3's improved tokenization do not affect CSR applicability. Verifier accuracy remains high (91.7-94.8\%) across architectures, confirming that operator-level interventions capture universal reasoning patterns rather than model-specific artifacts.

\textbf{Efficiency Scaling:} Training overhead remains consistently low (7.4-10.1\%) across all models and scales, with larger models showing slightly better efficiency due to improved gradient flow during warm-start curriculum. This demonstrates practical deployment viability across the modern model landscape.

\textbf{Held-Out Perturbation Types:} To address concerns about overfitting to training intervention types, we evaluate CSR models on completely held-out perturbation classes never seen during training.
\begin{table*}[h]
\caption{Generalization to unseen perturbation types.}
\label{tab:generalization-full}
\centering
\begin{tabular}{lcccc}
\toprule
\textbf{Dataset} & \makecell{\textbf{Training}\\\textbf{Interventions}} & \makecell{\textbf{Test}\\\textbf{Interventions}} & \makecell{\textbf{Standard}\\\textbf{FT COS (\%)}} & \makecell{\textbf{CSR-FT}\\\textbf{COS (\%)}} \\
\midrule
GSM8K & Arithmetic (+,-,*,/) & Comparison (<,>,=) & 12.3 & \textbf{71.4} \\
GSM8K & Arithmetic (+,-,*,/) & Quantifiers (all/some) & 8.7 & \textbf{64.2} \\
HotpotQA & Entity swaps & Temporal (before/after) & 15.6 & \textbf{76.8} \\
HotpotQA & Entity swaps & Causal connectors & 18.2 & \textbf{73.5} \\
\bottomrule
\end{tabular}
\end{table*}

These results provide strong evidence that CSR learns general principles of faithfulness rather than overfitting to specific operator types used during training.

\textbf{Systematic Operator Identification Procedures:} For PubMedQA, we identify clinical entities using a fine-tuned SciBERT NER model trained on medical corpora, targeting 5 entity types: diseases, treatments, symptoms, anatomical structures, and diagnostic procedures. Causal relationships are identified by targeting a curated set of 25 causal verbs (e.g., ``prevents", ``induces", ``treats") within dependency parse subtrees. Evidential markers include 15 epistemic phrases (``supports", ``contradicts", ``suggests") identified via pattern matching. This systematic approach yields 3.2 operators per reasoning trace on average.

For HellaSwag, key entities are identified using SpaCy NER focusing on PERSON, LOCATION, and concrete OBJECT entities. Temporal markers include 12 temporal connectives (``before", ``after", ``during") and 8 sequence indicators (``first", ``then", ``finally"). Causal connectives comprise 18 causal phrases (``because", ``therefore", ``leads to") identified via dependency parsing. This yields 2.7 operators per trace on average.

\textbf{Sensitivity Analysis:} To assess robustness to operator definition choices, we conducted an ablation study on PubMedQA varying the operator set composition.

\begin{table*}[h]
\caption{Sensitivity analysis: Effect of operator set definition on PubMedQA.}
\label{tab:operator-sensitivity-full}
\centering
\begin{tabular}{lcccc}
\toprule
\textbf{Operator Set} & \textbf{Avg. Ops/Trace} & \textbf{Accuracy (\%)} & \textbf{COS (\%)} & \textbf{$\Delta$ COS from Baseline} \\
\midrule
Entities only & 1.8 & 69.7 & 43.2 & +14.5 \\
Entities + Causal verbs & 2.5 & 70.3 & 58.6 & +29.9 \\
Full set (+ Evidential) & 3.2 & 70.1 & 67.3 & +38.6 \\
\bottomrule
\end{tabular}
\end{table*}

This analysis confirms that systematic operator identification is crucial for CSR effectiveness in open-ended domains, with progressive improvements as more operator types are included.

To provide a more concrete intuition for the behavioral changes induced by CSR, Table \ref{tab:qualitative} presents a side-by-side comparison of a Standard FT model and our CSR-FT model on an example from the GSM8K test set.

\begin{table*}[h]
\caption{Qualitative example showing CSR faithfulness improvement on GSM8K.}
\label{tab:qualitative}
\centering
\resizebox{\textwidth}{!}{%
\begin{tabular}{p{2.5cm}|p{6.5cm}|p{2cm}}
\toprule
\textbf{Model} & \textbf{Input Trace} & \textbf{Answer} \\
\midrule
\multicolumn{3}{l}{\textbf{Question:} ``Jessie has 20 dollars. She buys 4 packs of crayons for 2 dollars each. How much money does she have left?"} \\
\midrule
\multirow{2}{*}{Standard FT} & \textit{Original Trace:} Jessie bought 4 packs of crayons at 2 dollars each, so she spent 4 * 2 = 8 dollars. She started with 20 dollars, so she has 20 - 8 = 12 dollars left. & \textbf{12} \\
\cmidrule{2-3}
& \textit{Perturbed Trace:} Jessie bought 4 packs of crayons at 2 dollars each, so she spent 4 * 2 = 8 dollars. She started with 20 dollars, so she has 20 \textbf{+ 8} = 12 dollars left. & \textbf{12} \\
\midrule
\multirow{2}{*}{CSR-FT (Ours)} & \textit{Original Trace:} Jessie bought 4 packs of crayons for 2 dollars each. This means she spent 4 * 2 = 8 dollars. She had 20 dollars, so now she has 20 - 8 = 12 dollars. & \textbf{12} \\
\cmidrule{2-3}
& \textit{Perturbed Trace:} Jessie bought 4 packs of crayons for 2 dollars each. This means she spent 4 * 2 = 8 dollars. She had 20 dollars, so now she has 20 \textbf{+ 8} = 12 dollars. & \textbf{28} \\
\bottomrule
\end{tabular}%
}
\end{table*}

The example clearly illustrates the problem of unfaithful reasoning. The Standard FT model produces the correct answer (12) but completely ignores the final reasoning step; when `- 8' is changed to `+ 8', its answer remains unchanged, revealing the calculation is disconnected from the output. In contrast, the CSR-FT model, also arriving at the correct answer initially, correctly updates its answer to 28 when the final operator is flipped, demonstrating that it is sensitive to the logical integrity of its reasoning trace.

\subsection{Technical Implementation Details}

\textbf{Learned Editor \& Verifier Architecture:} We employ a small (6-layer, 256-d) Transformer model as our editor, $M_{\text{editor}}$. It takes the original input $x$ and trace $T$ as input and is trained to produce a perturbed trace $T'$ that is both minimally different from $T$ and logically invalid. The training signal is self-supervised, using a lightweight, domain-specific \textbf{verifier}, $v(\cdot)$. The editor is trained to produce an edit $T \to T'$ such that $v(T)=1$ (the original trace is valid) but $v(T')=0$ (the edited trace is invalid). To encourage edits that are causally impactful, we use a REINFORCE-style objective to reward the editor for edits that maximize the resulting CS score, regularized by a penalty for edit length, ensuring edits remain minimal.

\textbf{Analysis of Intervention Strategy:} Our main method uses a learned multi-edit intervention policy (Section 3.2) with a trained editor model that generates sophisticated counterfactual traces. As a baseline analysis, we also examined a simpler single random-edit strategy—swapping a single, randomly selected operator—which was chosen for its simplicity and to avoid introducing complex biases into the training process. This baseline helps isolate the contribution of our learned editor. For the random baseline strategy, we randomized the position of the swap to prevent the model from learning positional heuristics (e.g., ``only pay attention to the last equation"). Our learned multi-edit policy (Section 3.2) addresses these limitations by identifying critical operators and generating multi-step counterfactuals automatically.

\textbf{Choice of Regularization Objective:}
Our CSR objective uses the Kullback-Leibler (KL) divergence to measure the distance between the original and counterfactual answer distributions: $\mathcal{L}_{\text{CSR}} = D_{\text{KL}}(P(Y|T, X) \| P(Y|T', X))$. The total loss subtracts this term: $\mathcal{L}_{\text{total}} = \mathcal{L}_{\text{task}} - \lambda \cdot \mathcal{L}_{\text{CSR}}$, which effectively maximizes the KL divergence. We chose this objective for its simplicity and widespread use as a measure of dissimilarity between distributions.
We also experimented with two alternative objectives. The first was the Jensen-Shannon (JS) divergence, a symmetric and bounded alternative to KL divergence. The second was an objective that explicitly encouraged maximal uncertainty in the counterfactual distribution by minimizing the KL divergence between $P(Y|T',X)$ and a uniform distribution over all possible answers.
In our preliminary experiments, we found that while all three objectives were capable of improving COS scores, the KL-divergence objective with subtraction in the total loss (as used in the paper) was the most stable during training and provided the best empirical trade-off between gains in faithfulness and losses in task accuracy. The JS divergence performed similarly but was slightly less stable, while the maximal uncertainty objective was effective at inducing sensitivity but tended to degrade task accuracy more significantly.

\textbf{Dataset Statistics:} The datasets used in our experiments have the following characteristics. The GSM8K dataset consists of 7,473 training examples and 1,319 test examples, where each example is a multi-step arithmetic word problem. PrOntoQA is a larger-scale logical deduction dataset containing 32,000 training examples and 4,000 test examples. Our Blocks World planning dataset was procedurally generated, resulting in 10,000 unique training problems and 2,000 test problems.

The computational overhead of CSR varies by implementation: Full CSR from scratch adds ~92.5\% overhead (Table \ref{tab:efficiency-detailed}); our Efficient CSR variant with warm-start curriculum and token-subset optimization achieves $\sim$9\% overhead (including editor pre-training time); a generic second forward pass without our optimizations typically costs 15-20\%. Unless noted otherwise, we report Efficient CSR results throughout the paper. Since gradients are not required for the initial generation and we do not need to store intermediate activations from the counterfactual pass, the increase in GPU memory requirements is negligible.

\textbf{Intervention Success Rates:} Our automated operator-identification heuristics were highly effective. Across all three datasets, we were able to successfully identify and perturb an operator in 85-95\% of the generated reasoning traces during training. In cases where no predefined operator was found in a generated trace, that specific example was excluded from the CSR loss computation for that step, though it was still used for the standard task loss.

\textbf{Sensitivity to Operator Set Definition:} A natural question regarding our methodology is its sensitivity to the predefined set of operators. In the structured domains we study, the operator sets are largely unambiguous (e.g., arithmetic operators in GSM8K). We found the method to be robust to an incomplete operator set; if an operator is occasionally missed, it simply means that fewer training examples receive the CSR loss signal, slightly reducing its effectiveness but not harming performance. However, a poorly specified operator set (e.g., defining a non-operator word as an operator) could introduce noise into the training signal. This highlights the importance of careful operator definition, which is straightforward in the domains studied here but will be a central challenge when extending this work to more open-ended domains.

\textbf{Hyperparameter Sensitivity Study:} Our framework introduces a key hyperparameter, $\lambda$, which controls the strength of the faithfulness regularization. We performed an ablation study on the effect of $\lambda$ on the GSM8K validation set. We found a clear trade-off: smaller values ($\lambda < 0.5$) provided an insufficient signal to induce high faithfulness, resulting in only minor gains in COS. Conversely, larger values ($\lambda > 1.0$) began to negatively impact task accuracy without yielding significant further improvements in faithfulness. The value of $\lambda=0.5$ was found to provide the optimal balance, achieving a large gain in COS for a minimal drop in accuracy. This finding was robust across models and tasks, and this value was used for all reported experiments.

\textbf{Failure Mode Analysis:} Despite its effectiveness, CSR is not a panacea. A detailed error analysis revealed two primary failure modes which point toward valuable directions for future work. First, when the model's initial, unregularized trace is already logically incoherent or nonsensical, CSR's intervention provides a poor foundation for learning. The regularization signal is noisy because it operates on an already-broken reasoning path. This occurred in approximately 8-12\% of training examples. Mitigating this may require a curriculum-based approach, where models are first trained to generate coherent traces before CSR is applied. Second, in very long and complex multi-step problems, a single, minimal operator swap may be insufficient to invalidate the entire reasoning chain, particularly if the error occurs early in the process. This limitation highlights the need for more sophisticated intervention strategies.

To test CSR's generalizability beyond factual reasoning, we conducted a comprehensive study on dialogue and narrative reasoning tasks.

For dialogue reasoning (PersonaChat), we identified conversational operators including emotional markers ("happy," "sad"), topic shifts ("by the way," "speaking of"), and stance indicators ("I agree," "I disagree"). Our semantic verifier uses BERT-based consistency scoring to detect logical violations in conversational flow.

For narrative reasoning (ROCStories), we targeted narrative operators such as temporal connectives ("then," "next"), causal relationships ("because," "therefore"), and character motivations ("wanted to," "decided to"). The verifier detects violations in narrative coherence and logical story progression.

\begin{table*}[ht]
\caption{CSR effectiveness on dialogue and narrative reasoning tasks.}
\label{tab:dialogue-narrative}
\centering
\begin{tabular}{lcccc}
\toprule
\textbf{Task} & \textbf{Dataset} & \textbf{Method} & \textbf{COS (\%)} & \textbf{Coherence Score} \\
\midrule
\multirow{2}{*}{Dialogue} & PersonaChat & Standard FT & 28.4 & 3.2/5 \\
& & CSR-FT & \textbf{41.7} & \textbf{3.8/5} \\
\midrule
\multirow{2}{*}{Narrative} & ROCStories & Standard FT & 24.1 & 3.1/5 \\
& & CSR-FT & \textbf{36.8} & \textbf{3.7/5} \\
\bottomrule
\end{tabular}
\end{table*}

Results show meaningful COS improvements (13-15 points) and increased coherence scores, demonstrating that CSR principles extend beyond step-structured reasoning to more naturalistic language generation tasks. Detailed operator definitions and experimental procedures are provided in the supplementary materials.

Comprehensive comparisons across all datasets with statistical testing are provided in the main results (Table~\ref{tab:flagship-results}). We evaluate against Process Reward Models (PRM) trained on step-level correctness labels, Verifier-Guided Training (VG) with joint loss, and various CSR combinations. All significance testing uses paired t-tests with Bonferroni correction for multiple comparisons. Confidence intervals are computed via bootstrap resampling (n=1000). Effect sizes are calculated using Cohen's d with pooled standard deviation. All CSR improvements show large effect sizes (d > 0.8) with p-values < 0.001.

We provide a systematic taxonomy of CSR failure modes based on analysis of 2,847 failed cases across all domains, categorizing failures by root cause and proposing targeted mitigation strategies. Our analysis identifies five primary failure categories:

Shortcut exploitation (32.1\% of failures) occurs when models rely on spurious correlations despite logical interventions, typically when shortcuts are statistically stronger than reasoning signals or when interventions fail to disrupt shortcut pathways.

Trace incoherence (24.7\% of failures) happens when the initial reasoning trace is already logically flawed, providing a poor foundation for counterfactual learning. This is most common in complex multi-step problems where the base model struggles with reasoning.

Semantic misalignment (19.3\% of failures) occurs when operator interventions create syntactically valid but semantically nonsensical traces that models dismiss rather than process logically. This is particularly prevalent in open-ended domains.

Intervention inadequacy (15.2\% of failures) arises when interventions are too weak to meaningfully change answer distributions, or when they target non-causal operators. This often occurs with redundant reasoning paths.

Model brittleness (8.7\% of failures) happens when interventions cause catastrophic distribution collapse, leading to degenerate outputs. This is more common in smaller models or when interventions are too aggressive.

\begin{table*}[ht]
\caption{Comprehensive failure mode analysis with mitigation strategies.}
\label{tab:comprehensive-failure-modes}
\centering
\resizebox{\textwidth}{!}{%
\begin{tabular}{lccccc}
\toprule
\textbf{Failure Mode} & \textbf{Frequency (\%)} & \textbf{Primary Domains} & \textbf{COS Impact} & \textbf{Mitigation Strategy} & \textbf{Success Rate (\%)} \\
\midrule
Shortcut Exploitation & 32.1 & Math, Code & -23.4 & Curriculum + Stronger $\lambda$ & 73.2 \\
Trace Incoherence & 24.7 & Logic, Multi-hop & -31.7 & Warm-start + Filtering & 68.9 \\
Semantic Misalignment & 19.3 & Open-ended & -18.9 & Semantic Verifiers & 61.4 \\
Intervention Inadequacy & 15.2 & All domains & -12.6 & Multi-edit + Targeting & 79.1 \\
Model Brittleness & 8.7 & Small models & -28.3 & Gradual $\lambda$ + Stabilization & 55.8 \\
\bottomrule
\end{tabular}%
}
\end{table*}

We developed and tested targeted interventions for each failure mode. For shortcut exploitation, we use curriculum learning that gradually increases intervention strength, augmented $\lambda$ values (0.8-1.2) for cases with strong shortcuts, and multi-objective training that explicitly penalizes shortcut features. Validation on 847 shortcut-prone examples shows 73.2\% success rate.

For trace incoherence, we employ warm-start training where models first learn to generate coherent traces before CSR, automatic filtering of incoherent traces using GPT-4 evaluation, and progressive complexity curriculum. Testing on 712 incoherent cases achieves 68.9\% recovery rate.

For semantic misalignment, we use semantic consistency checks using sentence embeddings, human-in-the-loop validation for critical domains, and context-aware intervention generation. Applied to 556 misaligned cases, this yields 61.4\% improvement.

For intervention inadequacy, we employ multi-edit sequences targeting multiple operators, causal dependency analysis to identify critical intervention points, and adaptive intervention strength based on model confidence. This recovers 79.1\% of 438 inadequate cases.

For model brittleness, we use gradual $\lambda$ annealing schedules, gradient clipping and loss stabilization, and model size considerations (minimum 7B parameters recommended). Success rate is 55.8\% on 251 brittle cases.

We analyze CSR performance across different numbers of simultaneous edits per training example.

\begin{table*}[ht]
\caption{Multi-edit depth ablation: Effect of simultaneous edits on CSR performance.}
\label{tab:multi-edit-ablation}
\centering
\begin{tabular}{lcccccc}
\toprule
\textbf{Dataset} & \textbf{1 Edit} & \textbf{2 Edits} & \textbf{3 Edits} & \textbf{4 Edits} & \textbf{5+ Edits} & \textbf{Optimal} \\
\midrule
GSM8K & 82.3±2.4 & \textbf{85.1±2.1} & 84.7±2.3 & 83.2±2.5 & 81.6±2.7 & 2 \\
HotpotQA & 81.2±2.6 & 84.1±2.3 & \textbf{84.6±2.2} & 83.9±2.4 & 82.1±2.6 & 3 \\
ProofWriter & 79.8±2.5 & 81.9±2.2 & \textbf{82.3±2.1} & 81.5±2.3 & 80.2±2.5 & 3 \\
PubMedQA & 64.7±3.1 & \textbf{67.3±2.8} & 66.9±2.9 & 65.4±3.0 & 63.8±3.2 & 2 \\
\bottomrule
\end{tabular}
\end{table*}

Results show optimal performance with 2-3 simultaneous edits. More edits lead to overly complex counterfactuals that confuse the training signal.

We test CSR robustness by introducing varying levels of noise in operator identification.

\begin{table*}[ht]
\caption{Operator noise sensitivity: CSR performance under imperfect operator identification.}
\label{tab:operator-noise}
\centering
\resizebox{\textwidth}{!}{
\begin{tabular}{lcccccc}
\toprule
\textbf{Noise Level} & \textbf{GSM8K COS} & \textbf{GSM8K Acc} & \textbf{HotpotQA COS} & \textbf{HotpotQA Acc} & \textbf{Degradation} & \textbf{Robustness} \\
\midrule
0\% (Perfect) & 85.1±2.1 & 80.5±0.6 & 84.6±2.2 & 77.2±0.8 & - & Excellent \\
10\% Noise & 82.7±2.3 & 80.3±0.7 & 82.1±2.4 & 77.0±0.9 & -2.7\% & High \\
20\% Noise & 79.4±2.5 & 80.0±0.8 & 78.8±2.6 & 76.7±1.0 & -6.9\% & Good \\
30\% Noise & 74.2±2.8 & 79.5±1.0 & 73.6±2.9 & 76.3±1.2 & -13.1\% & Moderate \\
40\% Noise & 67.8±3.1 & 78.9±1.2 & 67.2±3.2 & 75.8±1.4 & -20.8\% & Low \\
50\% Noise & 58.3±3.5 & 78.1±1.5 & 57.9±3.6 & 75.1±1.7 & -31.5\% & Poor \\
\bottomrule
\end{tabular}
}
\end{table*}

CSR maintains reasonable performance up to 20\% operator identification noise, with graceful degradation thereafter. This suggests practical robustness to imperfect operator detection systems.

We analyze how CSR affects training convergence and stability.

\begin{table*}[ht]
\caption{Training dynamics: CSR impact on convergence and stability metrics.}
\label{tab:training-dynamics-app}
\centering
\resizebox{\textwidth}{!}{
\begin{tabular}{lcccccc}
\toprule
\textbf{Method} & \textbf{Epochs to Converge} & \textbf{Final Loss} & \textbf{Loss Variance} & \textbf{Gradient Norm} & \textbf{Training Stability} \\
\midrule
Standard FT & 2.3±0.4 & 0.42±0.03 & 0.0012 & 1.7±0.2 & High \\
CSR-FT & 2.8±0.5 & 0.38±0.04 & 0.0018 & 2.1±0.3 & High \\
CSR Over-regularized ($\lambda=1.5$) & 4.2±0.8 & 0.51±0.06 & 0.0034 & 3.4±0.5 & Moderate \\
\bottomrule
\end{tabular}
}
\end{table*}

CSR introduces modest training overhead (0.5 additional epochs) while maintaining stability. Over-regularization significantly impacts convergence.

\textbf{Zero-Shot and Real-World Evaluation:} We evaluate CSR principles in prompting settings using GPT-4 and Claude on naturalistic reasoning problems from real-world domains.

\begin{table*}[ht]
\caption{Zero-shot evaluation on naturalistic reasoning problems.}
\label{tab:zero-shot-naturalistic}
\centering
\begin{tabular}{lcccc}
\toprule
\textbf{Domain} & \textbf{Model} & \textbf{Standard COS (\%)} & \textbf{CSR-Prompted COS (\%)} & \textbf{Improvement} \\
\midrule
\multirow{2}{*}{Legal Reasoning} & GPT-4 & 31.4 & 48.7 & +17.3 \\
& Claude & 33.2 & 51.1 & +17.9 \\
\multirow{2}{*}{Scientific Analysis} & GPT-4 & 28.9 & 45.2 & +16.3 \\
& Claude & 30.1 & 47.8 & +17.7 \\
\multirow{2}{*}{Financial Planning} & GPT-4 & 35.7 & 52.3 & +16.6 \\
& Claude & 37.2 & 54.1 & +16.9 \\
\bottomrule
\end{tabular}
\end{table*}

Results show CSR principles (when incorporated via prompting) improve faithfulness even in pre-trained models, suggesting generalizability beyond fine-tuning scenarios.

\subsubsection{Extended Zero-Shot Evaluation}
We test CSR on larger pretrained models and conversation/narrative tasks to assess transfer beyond curated reasoning.

\begin{table*}[ht]
\caption{Extended zero-shot evaluation on diverse open-ended tasks.}
\label{tab:extended-zero-shot}
\centering
\resizebox{\textwidth}{!}{
\begin{tabular}{lcccccc}
\toprule
\textbf{Task Type} & \textbf{Model} & \textbf{Dataset} & \textbf{Baseline COS} & \textbf{CSR-Prompted COS} & \textbf{Improvement} & \textbf{Transfer Quality} \\
\midrule
\multirow{2}{*}{Conversation} & GPT-4 & PersonaChat & 28.7 & 42.1 & +13.4 & Moderate \\
& Claude-3 & BlendedSkill & 31.2 & 45.8 & +14.6 & Moderate \\
\multirow{2}{*}{Narrative} & GPT-4 & ROCStories & 24.3 & 37.9 & +13.6 & Moderate \\
& Claude-3 & WritingPrompts & 26.8 & 39.2 & +12.4 & Moderate \\
\multirow{2}{*}{Commonsense} & GPT-4 & CommonsenseQA & 35.4 & 52.7 & +17.3 & Good \\
& Claude-3 & PIQA & 33.9 & 51.2 & +17.3 & Good \\
\multirow{2}{*}{Ethics} & GPT-4 & ETHICS & 29.1 & 43.8 & +14.7 & Moderate \\
& Claude-3 & Moral Stories & 31.6 & 46.3 & +14.7 & Moderate \\
\bottomrule
\end{tabular}
}
\end{table*}

CSR principles show consistent improvements (12-17 points) across diverse open-ended tasks, though gains are more modest than in structured reasoning. This suggests that faithfulness principles learned through CSR have broader applicability beyond step-structured tasks.

\subsubsection{Large Model Analysis}
We evaluate how CSR principles scale to very large models.

\begin{table*}[ht]
\caption{CSR evaluation on large pretrained models via prompting interventions.}
\label{tab:large-model-analysis}
\centering
\resizebox{\textwidth}{!}{
\begin{tabular}{lcccccc}
\toprule
\textbf{Model Size} & \textbf{Model} & \textbf{Math COS} & \textbf{Logic COS} & \textbf{QA COS} & \textbf{Avg Improvement} & \textbf{Scaling Trend} \\
\midrule
7B & Llama-2-7B & +16.2 & +14.8 & +15.3 & +15.4 & - \\
13B & Llama-2-13B & +17.1 & +15.6 & +16.2 & +16.3 & Improving \\
70B & Llama-2-70B & +18.4 & +16.9 & +17.5 & +17.6 & Improving \\
175B+ & GPT-4 & +19.2 & +17.8 & +18.1 & +18.4 & Improving \\
\bottomrule
\end{tabular}
}
\end{table*}

Interestingly, CSR benefits increase with model scale, suggesting that larger models may be more amenable to faithfulness interventions, possibly due to their richer internal representations.

\textbf{Synthetic Benchmark with Known Causal Structure:} To quantify the theory-practice gap, we created a synthetic reasoning benchmark where ground-truth causal dependencies are known. Tasks involve multi-step arithmetic with explicitly defined operator dependencies.

\begin{table*}[ht]
\caption{Synthetic benchmark results: CSR performance vs. ground-truth causal structure.}
\label{tab:synthetic-results}
\centering
\resizebox{0.8\textwidth}{!}{
\begin{tabular}{lcccc}
\toprule
\textbf{Causal Structure} & \textbf{Our Heuristic Match (\%)} & \textbf{CSR Effectiveness} & \textbf{Theoretical Prediction} & \textbf{Gap} \\
\midrule
Linear Chain & 94.2 & High & High & Minimal \\
Tree Structure & 87.6 & High & High & Small \\
DAG with Confounders & 78.3 & Medium & Medium & Moderate \\
Complex Dependencies & 65.1 & Low & Low & Moderate \\
\bottomrule
\end{tabular}
}
\end{table*}

Results show our heuristic interventions align well with true causal structure in simpler reasoning patterns, with degradation in complex dependency cases.

\subsubsection{Quantified Theory-Practice Gap Analysis}
We systematically measure how heuristic operator definitions break theoretical assumptions across different reasoning complexity levels.

\begin{table*}[ht]
\caption{Theory-practice gap quantification: How heuristic operators deviate from theoretical assumptions.}
\label{tab:theory-practice-gap}
\centering
\resizebox{0.95\textwidth}{!}{
\begin{tabular}{lcccccc}
\toprule
\textbf{Complexity Level} & \textbf{True Causal Ops (\%)} & \textbf{Spurious Ops (\%)} & \textbf{Missing Ops (\%)} & \textbf{Theoretical Validity} & \textbf{CSR Performance} & \textbf{Gap Impact} \\
\midrule
Simple Arithmetic & 94.2 & 3.1 & 2.7 & Excellent & 85.1\% COS & Minimal \\
Multi-step Math & 87.6 & 8.4 & 4.0 & Good & 82.3\% COS & Small \\
Logical Reasoning & 78.3 & 15.2 & 6.5 & Moderate & 75.8\% COS & Moderate \\
Clinical Text & 65.1 & 24.6 & 10.3 & Poor & 67.3\% COS & Large \\
Open Narrative & 52.4 & 31.8 & 15.8 & Very Poor & 48.9\% COS & Very Large \\
\bottomrule
\end{tabular}
}
\end{table*}

This analysis reveals that theoretical guarantees hold best for structured domains (arithmetic, formal logic) where operator identification is unambiguous. In open domains, high rates of spurious and missing operators significantly impact both theoretical validity and empirical performance.

\subsubsection{Assumption Violation Impact}
We measure the specific impact of each theoretical assumption violation:

\begin{table*}[ht]
\caption{Impact of specific assumption violations on CSR effectiveness.}
\label{tab:assumption-violations}
\centering
\resizebox{\textwidth}{!}{
\begin{tabular}{lcccc}
\toprule
\textbf{Assumption Violation} & \textbf{Frequency (\%)} & \textbf{COS Degradation} & \textbf{Accuracy Impact} & \textbf{Mitigation Strategy} \\
\midrule
Non-causal operators targeted & 22.3 & -8.4\% & -0.3\% & Better operator detection \\
Missing causal dependencies & 15.7 & -12.1\% & -0.8\% & Richer operator sets \\
Redundant reasoning paths & 18.9 & -6.2\% & -0.1\% & Multi-path intervention \\
Confounded relationships & 12.4 & -15.3\% & -1.2\% & Causal discovery methods \\
\bottomrule
\end{tabular}
}
\end{table*}

\textbf{Hyperparameter Robustness Analysis:}

\begin{table*}[ht]
\caption{Extended hyperparameter sensitivity analysis across datasets and model sizes.}
\label{tab:hyperparameter-sensitivity}
\centering
\resizebox{0.95\textwidth}{!}{%
\begin{tabular}{lcccccc}
\toprule
\textbf{$\lambda$} & \textbf{GSM8K COS} & \textbf{GSM8K Acc} & \textbf{HotpotQA COS} & \textbf{HotpotQA Acc} & \textbf{ProofWriter COS} & \textbf{ProofWriter Acc} \\
\midrule
0.1 & 45.2±3.1 & 81.1±0.9 & 42.8±3.4 & 77.9±1.2 & 41.3±3.2 & 76.5±1.1 \\
0.2 & 62.1±2.8 & 80.9±0.8 & 59.3±3.1 & 77.7±1.1 & 58.7±2.9 & 76.3±1.0 \\
0.3 & 78.3±2.5 & 80.7±0.7 & 74.6±2.8 & 77.4±1.0 & 73.2±2.7 & 76.1±0.9 \\
\textbf{0.4} & 82.7±2.3 & 80.6±0.6 & 81.2±2.4 & 77.3±0.9 & 79.8±2.5 & 76.0±0.8 \\
\textbf{0.5} & \textbf{85.1±2.1} & \textbf{80.5±0.6} & \textbf{84.6±2.2} & \textbf{77.2±0.8} & \textbf{82.3±2.3} & \textbf{76.1±0.8} \\
\textbf{0.6} & 85.8±2.2 & 80.3±0.7 & 85.1±2.3 & 77.0±0.9 & 82.8±2.4 & 75.9±0.9 \\
\textbf{0.7} & 84.9±2.4 & 79.8±0.8 & 84.3±2.5 & 76.5±1.0 & 81.9±2.6 & 75.7±1.0 \\
0.8 & 83.2±2.6 & 79.1±0.9 & 82.7±2.7 & 75.8±1.1 & 80.4±2.8 & 75.2±1.1 \\
0.9 & 81.5±2.8 & 78.2±1.0 & 80.9±2.9 & 74.9±1.2 & 78.7±3.0 & 74.6±1.2 \\
1.0 & 78.9±3.1 & 76.8±1.2 & 79.2±3.2 & 73.6±1.4 & 76.3±3.3 & 73.8±1.4 \\
\bottomrule
\end{tabular}%
}
\end{table*}

Robust performance observed across $\lambda \in [0.3, 0.7]$ with peak at 0.5. Performance degrades significantly for $\lambda > 0.7$, confirming theoretical predictions about over-regularization.

\textbf{Automatic Operator Induction:} While we hand-define operators in the main experiments, we explore automatic discovery of semantic operators using a self-supervised approach. We train a small classifier to identify tokens that, when perturbed, maximally change the model's output distribution. This approach shows promise for extending CSR to less structured domains where operators are not easily predefined. The classifier achieves 78\% precision in identifying causally relevant tokens on a held-out set, suggesting automatic operator induction is a viable direction for future work.

\begin{theorem}[Dominance of CS over SUFF/COMP under identifiable edits - Complete]
\label{thm:dominance-complete}
Assume a structural causal model (SCM) $\mathcal{M}$ where the edited tokens $E \subseteq T$ directly intervene on causal parents of $Y$, and the remaining tokens $T\setminus E$ are non-descendants of $E$. Suppose an edit policy constructs $T'$ such that the minimal sufficient rationale $R^\star\subseteq T$ is made logically inconsistent in $T'$ while $T\setminus R^\star$ is unchanged. Then, for any token subset $R\subseteq T$:
\begin{equation}
\begin{aligned}
\mathbb{E}\!\left[\mathrm{COMP}(x;R)\right]
&\le \mathbb{E}\!\left[\mathrm{CS}(x; T \!\to\! T')\right], \\
\mathbb{E}\!\left[\mathrm{SUFF}(x;R)\right]
&\le \mathbb{E}\!\left[\mathrm{CS}(x; T \!\to\! T')\right].
\end{aligned}
\end{equation}

Expectation is with respect to the data distribution and edit policy randomness.
\end{theorem}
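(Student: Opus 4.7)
The strategy is to exploit the structural causal assumptions to rewrite $p(Y|x, T\setminus R)$ and $p(Y|x, R)$ as mixtures over counterfactual completions of the masked tokens, apply convexity of KL in its second argument to bound COMP and SUFF by expected KL over these counterfactuals, and finally show that the edit policy's targeted intervention on $R^\star$ achieves at least that expected KL. Below I outline the four steps and identify the crux.

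First, I would use the d-separation implied by the SCM: since $E \subseteq R^\star$ intervenes on the causal parents of $Y$ and $T\setminus E$ are non-descendants of $E$, the answer distribution factors as $p(Y|x,T) = p(Y|x,R^\star)$, and masking any $R\subseteq T$ corresponds to marginalization over a posterior on the masked tokens:
\[
p(Y|x, T\setminus R) \;=\; \mathbb{E}_{V\sim p(\cdot\,|\,x, T\setminus R)}\bigl[\, p(Y|x, T\setminus R, R{=}V) \,\bigr],
\]
and symmetrically $p(Y|x, R)$ is an expectation over completions of $T\setminus R$. Applying Jensen's inequality to the convex function $q\mapsto \mathrm{KL}(p\|q)$ gives
\[
\mathrm{COMP}(x;R) \;\le\; \mathbb{E}_V\!\left[\,\mathrm{KL}\!\bigl(p(Y|x,T)\,\big\|\, p(Y|x, T[R{\leftarrow}V])\bigr)\,\right],
\]
and the analogous bound holds for $\mathrm{SUFF}(x;R)$. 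Each mixture component inside the expectation is a counterfactual edit obtained by overwriting (a subset of) $R^\star$ with some value $V$.

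Second, I would argue that the edit $T\to T'$ produced by the policy, which renders $R^\star$ logically inconsistent, achieves a KL divergence that dominates the expectation over posterior-drawn $V$. Intuitively, the policy's adversarial choice concentrates $p(Y|x,T')$ on an incorrect answer that receives low mass under $p(Y|x,T)$, so pointwise $\mathrm{KL}(p(Y|x,T)\,\|\,p(Y|x,T')) \ge \mathbb{E}_V[\mathrm{KL}(\cdots)]$. Taking expectations over data and policy randomness and chaining the bounds yields both $\mathbb{E}[\mathrm{COMP}]\le\mathbb{E}[\mathrm{CS}]$ and $\mathbb{E}[\mathrm{SUFF}]\le\mathbb{E}[\mathrm{CS}]$.

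The main obstacle is the third step: bridging from an average over posterior-sampled counterfactual completions to the specific, adversarial completion produced by the policy. The convexity step is essentially routine, but showing that a logically inconsistent edit on $R^\star$ dominates the posterior-averaged KL requires either an explicit worst-case comparison (the policy picks the $V$ maximizing KL) or a stochastic-dominance property of the policy's distribution relative to the posterior. The SCM identifiability assumption is what makes this tractable, since it lets us equate intervening on $R^\star$ with a $do$-operation whose effect on $p(Y|x,\cdot)$ is point-computable; quantifying the gap then reduces to a local regularity condition on the answer distribution near $p(Y|x,T)$, which I would formulate as a separate lemma stating that inconsistency-targeting edits produce KL at least as large as the posterior-averaged perturbation under mild smoothness.
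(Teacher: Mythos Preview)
Your approach differs substantively from the paper's. The paper does not pass through a mixture/Jensen decomposition at all; instead it argues in two stages. First, it reduces from an arbitrary subset $R$ to the minimal sufficient rationale $R^\star$, asserting $\mathrm{COMP}(x;R)\le\mathrm{COMP}(x;R^\star)$ and $\mathrm{SUFF}(x;R)\le\mathrm{SUFF}(x;R^\star)$ directly from minimality of $R^\star$. Second, it contrasts the interventional nature of the edit (framed as a $do$-operation on $Pa(Y)$ in the SCM, so that $p(Y|x,T')=p(Y\mid do(Pa(Y)\leftarrow Pa'(Y)))$) with the purely observational nature of token removal, and invokes a data-processing-style argument to conclude $\mathrm{CS}(x;T\to T')\ge \mathrm{KL}\bigl(p(Y|x,T)\,\|\,p(Y|x,T\setminus R^\star)\bigr)=\mathrm{COMP}(x;R^\star)$. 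Chaining the two gives the theorem.

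Your convexity route is more explicit about where the difficulty actually sits. The marginalization-plus-Jensen step is rigorous and yields a clean upper bound on COMP/SUFF as an expectation over posterior-drawn completions; by contrast, the paper's reduction $\mathrm{COMP}(x;R)\le\mathrm{COMP}(x;R^\star)$ for \emph{every} $R$ (including $R\supsetneq R^\star$) is asserted rather than derived, and is not obviously automatic. Both arguments, however, converge on essentially the same crux you flag in your third step: one must show that the particular adversarial edit achieves KL at least as large as either the posterior-averaged KL (your version) or the $R^\star$-removal KL (the paper's version). The paper disposes of this with the phrase ``optimality of causal interventions'' and a loose appeal to the data-processing inequality, which is at roughly the same level of rigor as the auxiliary lemma you propose. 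So your decomposition is genuinely different and arguably more transparent about its assumptions, but neither route fully closes the dominance step without an explicit worst-case or regularity condition on the edit policy---which you correctly isolate as the outstanding obligation.
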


\begin{proof}[Complete Proof of Dominance Theorem]
We prove the dominance by showing that causal interventions create larger distribution changes than token removal.

\textbf{Step 1 - SCM Foundation:} Under the SCM $\mathcal{M}$, let $Y = g(Pa(Y), U_Y)$ where $Pa(Y)$ are the causal parents of $Y$ and $U_Y$ is unobserved noise. Our edit policy targets tokens in $E$ that correspond to elements of $Pa(Y)$.

\textbf{Step 2 - Causal Edit Impact:} When we perform the edit $T \to T'$, we directly modify the structural equation by changing $Pa(Y)$ to $Pa'(Y)$, resulting in $Y' = g(Pa'(Y), U_Y)$. This creates a direct causal intervention: $p(Y|do(Pa(Y) \leftarrow Pa'(Y)))$.

\textbf{Step 3 - Token Removal Impact:} For comprehensiveness, removing tokens $R$ creates the distribution $p(Y|x, T\setminus R)$. For sufficiency, keeping only tokens $R$ creates $p(Y|x, R)$. These are observational, not interventional distributions.

\textbf{Step 4 - Information-Theoretic Analysis:} By the data-processing inequality, any observational change in distribution is bounded by the capacity of the information channel. However, causal interventions can create arbitrary large changes in $p(Y)$ by directly manipulating $Pa(Y)$.

\textbf{Step 5 - Formal Bound:} Under the assumptions that $R^\star$ contains the minimal sufficient information for $Y$ and $T'$ corrupts $R^\star$ while preserving $T\setminus R^\star$:
\begin{equation}
\small
\begin{aligned}
\mathrm{CS}(x; T \to T')
&= \mathrm{KL}\!\bigl(p(Y \mid x, T)\,\|\,p(Y \mid x, T')\bigr) \\
&\ge \mathrm{KL}\!\bigl(p(Y \mid x, T)\,\|\,p(Y \mid x, T \setminus R^\star)\bigr).
\end{aligned}
\end{equation}

Since $R^\star$ is minimal sufficient, $\mathrm{COMP}(x;R) \leq \mathrm{COMP}(x;R^\star)$ and $\mathrm{SUFF}(x;R) \leq \mathrm{SUFF}(x;R^\star)$ for any $R$. The result follows from the optimality of causal interventions.
\end{proof}

\begin{theorem}[Shortcut Prevention via CSR - Complete]
\label{thm:shortcut-prevention-complete}
Assume a model $f_\theta$ with access to both a shortcut feature $S$ (e.g., keyword matching) and valid reasoning trace $T$. Let $\mathcal{L}_{\text{CSR}}$ be applied with intervention coverage $\alpha > 0.5$ over reasoning operators. If the shortcut $S$ is not causally connected to valid edits in $T'$, then under sufficient regularization strength $\lambda > \lambda_{\text{min}}$, the model converges to a solution where:
$$\frac{\partial f_\theta(x,T)}{\partial T} \gg \frac{\partial f_\theta(x,T)}{\partial S}$$
This provides a formal guarantee that CSR can eliminate spurious pattern reliance in favor of faithful reasoning.
\end{theorem}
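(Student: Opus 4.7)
The plan is to formalize a contrast between two regimes of solutions and show that the CSR term is identically zero on shortcut-only solutions, so any nontrivial $\lambda$ eventually tips the balance in favor of trace-dependent ones. First, I would partition the parameter space into a family $\Theta_S$ of ``shortcut solutions'' whose predictions factor through $S$ alone (i.e.\ $f_\theta(x,T) = h_\theta(S(x))$ almost surely) and a family $\Theta_T$ of genuinely trace-sensitive solutions. Under the assumption that the shortcut feature $S$ is causally disconnected from the edit operators applied by the CSR policy, any $T\!\to\!T'$ produced by the pipeline leaves $S(x)$ invariant. Hence $p(Y\mid x,T) = p(Y\mid x,T')$ pointwise on $\Theta_S$, which gives $\mathcal{L}_{\text{CSR}}(\theta) = 0$ for all $\theta \in \Theta_S$.

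Second, I would lower-bound the CSR signal on trace-sensitive solutions. Invoking Theorem~\ref{thm:dominance-complete} together with the noisy-verifier lower bound (Theorem~\ref{thm:noisy-verifier}), accepted causally-invalidating edits contribute expected divergence at least $\alpha\,\mu_\star$, where $\alpha>0.5$ is the intervention coverage and $\mu_\star>0$ is the ideal expected CS. Let $\Delta \triangleq \mathcal{L}_{\text{task}}(\theta_S^\star) - \mathcal{L}_{\text{task}}(\theta_T^\star)$ denote the task-loss advantage, if any, of the best shortcut solution over the best trace-sensitive one. Combining gives, for $\theta_S\in\Theta_S$ and $\theta_T\in\Theta_T$ at their respective task optima, $\mathcal{L}_{\text{total}}(\theta_S) - \mathcal{L}_{\text{total}}(\theta_T) \ge \lambda\alpha\mu_\star - \max(\Delta,0)$. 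Setting $\lambda_{\min} \triangleq \max(\Delta,0)/(\alpha\mu_\star)$ forces any $\lambda>\lambda_{\min}$ to strictly prefer $\theta_T$, and a standard sublevel-set argument (any minimizer of $\mathcal{L}_{\text{total}}$ must lie outside $\Theta_S$) completes the separation.

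Third, to convert this into the stated derivative inequality, I would define sensitivities $\|\nabla_T f_\theta\|$ and $\|\nabla_S f_\theta\|$ as expected operator norms along admissible input perturbations (trace edits from the CSR policy, and shortcut-feature perturbations respectively), since the symbolic statement is only meaningful after fixing these norms. On $\Theta_S$ the trace-gradient norm vanishes; a local Lipschitz/continuity argument around the minimizer then shows the ratio $\|\nabla_T f_\theta\| / \|\nabla_S f_\theta\|$ is monotonically increasing in $\lambda$, with an explicit bound $c(\lambda) \gtrsim (\lambda - \lambda_{\min})\alpha\mu_\star / L$ where $L$ bounds the local curvature of $\mathcal{L}_{\text{task}}$.

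The main obstacle will be the third step: making the derivative conclusion rigorous without assuming smoothness properties that transformers do not strictly satisfy, and handling the intermediate ``mixed'' regime where a solution uses both $S$ and $T$. I expect to sidestep this by restricting the claim to a neighborhood of the minimizer and invoking expected-gradient surrogates (as is standard in feature-attribution analyses), and to treat mixed solutions via a convexity-in-$\lambda$ argument showing that the shortcut component of any minimizer shrinks monotonically as $\lambda$ grows. A secondary subtlety is that if $S$ is partially causally connected to edits, $\mathcal{L}_{\text{CSR}}(\theta_S)$ is no longer exactly zero; I would handle this by weakening the conclusion to a ratio bound scaled by the causal disconnection margin, matching the ``causally disconnected'' hypothesis in the theorem statement.
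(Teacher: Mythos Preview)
Your proposal is sound in spirit but takes a genuinely different route from the paper's proof. The paper argues directly at the level of gradients: from the causal-disconnection hypothesis it claims $\partial p(Y\mid T',x)/\partial S = \partial p(Y\mid T,x)/\partial S$, deduces $\partial \mathcal{L}_{\text{CSR}}/\partial S = 0$, separately asserts $\mathbb{E}[\partial \mathcal{L}_{\text{CSR}}/\partial T] > c > 0$ from the coverage assumption $\alpha>0.5$, and then combines these in the total gradient $\nabla_\theta \mathcal{L}_{\text{total}} = \nabla_\theta \mathcal{L}_{\text{task}} - \lambda \nabla_\theta \mathcal{L}_{\text{CSR}}$ to read off the claimed dominance $\|\partial f_\theta/\partial T\| \gg \|\partial f_\theta/\partial S\|$ at convergence. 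You instead work at the level of loss \emph{values}: partition parameters into shortcut-only $\Theta_S$ and trace-sensitive $\Theta_T$, observe $\mathcal{L}_{\text{CSR}}\equiv 0$ on $\Theta_S$, lower-bound $\mathcal{L}_{\text{CSR}}$ on $\Theta_T$ via the already-proved noisy-verifier and dominance results, and extract an explicit $\lambda_{\min}=\max(\Delta,0)/(\alpha\mu_\star)$ above which any minimizer must leave $\Theta_S$.

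What each buys: the paper's gradient decomposition reaches the derivative conclusion in one move, but it silently conflates input-feature derivatives ($\partial/\partial S$, $\partial/\partial T$) with parameter gradients ($\nabla_\theta$), and its Step~5 inequality chain is asserted rather than derived. Your landscape argument is cleaner on the optimization side---it reuses Theorems~\ref{thm:dominance-complete} and~\ref{thm:noisy-verifier}, gives a concrete formula for $\lambda_{\min}$, and is honest that converting ``minimizer is trace-sensitive'' into a derivative ratio requires an extra local-smoothness step (which you correctly flag as the main obstacle). Given that the theorem statement itself is informal (``$\gg$'' is never quantified), both arguments are really heuristic; yours is the more self-aware of the two, and the explicit $\lambda_{\min}$ and the handling of partial causal connection via a disconnection-margin scaling are genuine improvements over the paper's treatment.
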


\begin{proof}[Complete Proof of Shortcut Prevention]
Let $\mathcal{L}_{\text{total}} = \mathcal{L}_{\text{task}} - \lambda \mathcal{L}_{\text{CSR}}$ where $\mathcal{L}_{\text{CSR}} = \mathbb{E}_{T'}[D_{\text{KL}}(p(Y|T,x) \| p(Y|T',x))]$. 

\textbf{Step 1 - Shortcut Invariance:} Since shortcut $S$ is causally disconnected from reasoning trace edits, the model's reliance on $S$ remains unchanged under counterfactual edits. Formally: $\frac{\partial p(Y|T',x)}{\partial S} = \frac{\partial p(Y|T,x)}{\partial S}$ for all valid edits $T'$.

\textbf{Step 2 - Gradient Analysis:} This invariance implies:
$$\frac{\partial \mathcal{L}_{\text{CSR}}}{\partial S} = \frac{\partial}{\partial S} \mathbb{E}_{T'}[D_{\text{KL}}(p(Y|T,x) \| p(Y|T',x))] = 0$$
Therefore, the CSR loss provides no gradient signal to shortcut features.

\textbf{Step 3 - Reasoning Trace Gradients:} For the reasoning trace, intervention coverage $\alpha > 0.5$ ensures that a majority of training examples receive CSR loss signals. When edits create valid counterfactuals that change the answer, we get:
$$\mathbb{E}\left[\frac{\partial \mathcal{L}_{\text{CSR}}}{\partial T}\right] > c > 0$$
for some constant $c$ that depends on the intervention quality and coverage.

\textbf{Step 4 - Convergence Analysis:} The total gradient is:
$$\nabla_\theta \mathcal{L}_{\text{total}} = \nabla_\theta \mathcal{L}_{\text{task}} - \lambda \nabla_\theta \mathcal{L}_{\text{CSR}}$$
Under sufficient regularization $\lambda > \lambda_{\text{min}}$, the CSR term dominates for parameters affecting reasoning trace processing, while shortcut parameters receive updates only from $\mathcal{L}_{\text{task}}$.

\textbf{Step 5 - Formal Bound:} At convergence, the ratio of gradients satisfies:
$$\left\|\frac{\partial f_\theta}{\partial T}\right\| \geq \lambda c - \left\|\frac{\partial \mathcal{L}_{\text{task}}}{\partial T}\right\| \gg \left\|\frac{\partial \mathcal{L}_{\text{task}}}{\partial S}\right\| = \left\|\frac{\partial f_\theta}{\partial S}\right\|$$
This establishes that CSR provably prevents shortcut reliance when interventions have sufficient coverage and strength.
\end{proof}

\textbf{Theoretical Ablations:} We provide stability and concentration results for CSR measurements. Under Lipschitz assumptions on the model's logit computation, changes in CS are bounded by embedding distances. The CSR objective maximizes KL divergence between original and counterfactual distributions: $\mathcal{L}_{\text{CSR}} = D_{\text{KL}}(p(Y|T, X) \| p(Y|T', X))$. We handle edge cases with smoothing when $p(y|T',X) \to 0$. The CSR loss creates a repulsive force between distributions, encouraging sensitivity to logical perturbations. Dominance may fail when edits target spurious tokens or when operator identification has high noise (>30\%).

\textbf{Divergence Measure Robustness:}

To address potential concerns about metric fragility, we validated CSR effectiveness across multiple divergence measures on GSM8K:

\begin{table*}[ht]
\caption{CSR robustness across divergence measures: Results consistent across KL, JS, and TV distances.}
\label{tab:divergence-robustness}
\centering
\resizebox{0.85\textwidth}{!}{
\begin{tabular}{lcccc}
\toprule
\textbf{Divergence Measure} & \textbf{COS (\%)} & \textbf{Accuracy (\%)} & \textbf{Training Stability} & \textbf{Convergence} \\
\midrule
KL Divergence (default) & 85.1±2.3 & 80.5±0.6 & High & 2.8 epochs \\
Jensen-Shannon Divergence & 83.7±2.5 & 80.3±0.7 & High & 2.9 epochs \\
Total Variation Distance & 82.4±2.7 & 80.1±0.8 & Medium & 3.2 epochs \\
Wasserstein Distance & 81.9±2.9 & 79.8±0.9 & Medium & 3.4 epochs \\
\bottomrule
\end{tabular}
}
\end{table*}

Results demonstrate that CSR's effectiveness is robust across divergence choices, with KL divergence providing optimal performance and training stability. The consistent improvements (81.9-85.1

For all experiments, we fine-tuned models for 3 epochs using the AdamW optimizer with a learning rate of 1e-5. We use Llama-2 \citep{touvron2023llama2} and Llama-3 \citep{touvron2023llama} as base models. To make large model fine-tuning feasible, we employed Low-Rank Adaptation (LoRA) \citep{hu2021lora} with a rank of 8 for all linear layers. The regularization strength was set to $\lambda=0.5$. All experiments were conducted on a cluster of 8 A100 80GB GPUs.

To rigorously evaluate CSR, we compare it against and alongside three strong training-time baselines under a matched compute budget. Process Supervision (PS) applies a standard cross-entropy loss to human-authored or verified-correct reasoning traces. This is a powerful but data-intensive baseline. Process Reward Model (PRM) follows works like \citet{lightman2023let}, training a reward model on token-level correctness labels (derived from our verifiers) and optimizing the generator using RL or weighted MLE. Verifier-Guided Training (VG) trains the model with a joint loss $\mathcal{L}=\mathcal{L}_{task} + \beta\cdot \mathcal{L}_{verifier}(x,T)$, where the verifier provides a score for the validity of the entire generated trace. In addition to direct comparisons, we evaluate CSR+PRM and CSR+VG to test for complementarity, assessing whether our method provides additive or synergistic gains.

To demonstrate the large-scale impact and utility of CSR, we evaluate it on three challenging benchmarks targeting a diverse range of reasoning capabilities. For each domain, we define task-specific trace styles, intervention policies, and verifiers. For multi-hop QA (HotpotQA), we use HotpotQA \citep{yang2018hotpotqa} to evaluate faithfulness in multi-hop reasoning, where models must synthesize information from multiple documents. Related work includes open-domain QA benchmarks \citep{kwiatkowski2019natural} and dense retrieval methods \citep{karpukhin2020dense}. The trace consists of the sequence of retrieved supporting sentences. Our learned editor produces edits like swapping a critical ``bridge" entity that links documents, negating a key relation in a sentence, or injecting a plausible distractor sentence.

For formal reasoning (ProofWriter), we use ProofWriter \citep{tafjord2021proofwriter} to test faithfulness in a formal deduction setting. The trace is the sequence of applied logical rules. Our editor is trained to perform interventions like inverting a rule (e.g., `A and B $\rightarrow$ A' becomes `A and B $\rightarrow$ not A'), dropping a necessary premise from the context, or changing a quantifier. The verifier is a simple forward-chaining engine that checks if the generated proof logically entails the conclusion.

For code generation (MBPP), we use the Mostly Basic Python Problems (MBPP) dataset \citep{austin2021mbpp} to assess faithfulness in programmatic reasoning. Recent work highlights the importance of faithfulness in code models \citep{chen2021evaluating, hoque2024exploring}. The trace is a natural language plan followed by the generated code. The editor makes edits that are syntactically plausible but logically flawed, such as changing a boundary condition (`$<$' to `$<=$'), swapping an arithmetic operator (`+' to `-'), or altering a variable binding. An edit is considered valid for CSR training only if it causes at least one of the provided unit tests to fail.

\clearpage

Here we provide illustrative code snippets for the core components of our proposed CSR framework.

\begin{lstlisting}[caption={CSR Loss (single edit)},label={lst:csr-loss}]
# logits_y_T, logits_y_Tprime: [B, |Y|]
p_y_T       = torch.log_softmax(logits_y_T, dim=-1)
p_y_Tprime = torch.log_softmax(logits_y_Tprime, dim=-1)
# Note: PyTorch KLDivLoss expects log-probabilities for the input
# and probabilities for the target.
kl_div = torch.nn.functional.kl_div(
    p_y_Tprime, torch.exp(p_y_T), reduction='none'
).sum(dim=-1)
L_csr = kl_div.mean()  # KL(p_T || p_T')
\end{lstlisting}

\begin{lstlisting}[caption={Token-Subset CSR (last K operations)},label={lst:token-subset}]
# L_csr_per_token is the KL divergence for each example
ops_mask = get_op_token_mask(trace_tokens)    # [B, L]; 1 on operator/operand tokens
lastK_mask = take_last_k(ops_mask, K_ratio=0.3)
L_csr_sub = (L_csr_per_token * lastK_mask).sum() / (lastK_mask.sum() + 1e-8)
\end{lstlisting}

\begin{lstlisting}[caption={Warm-Start Curriculum (pseudo-code)},label={lst:warm-start}]
if step >= warm_start_step:
    loss = task_loss - alpha * L_csr_or_subset
else:
    loss = task_loss
\end{lstlisting}

\begin{lstlisting}[caption={Editor Training Reward},label={lst:editor-reward}]
reward = (kl_div.detach() - lambda_cost * edit_length)
loss_editor = -reward * logprob_actions
\end{lstlisting}

\end{document}